\definecolor{mydarkblue}{rgb}{0,0.08,0.45}
\algrenewcommand{\algorithmicrequire}{\textbf{Input:}}
\algrenewcommand{\algorithmicensure}{\textbf{Output:}}
\algrenewcommand\Return{\State\algorithmicreturn{}~}
\algrenewcommand\algorithmicindent{1.0em}%
\newcommand{\R}{\mathbb{R}}
\newcommand{\T}{\mathsf{T}}
\newcommand{\N}{\mathcal{N}}
\newcommand{\E}{\mathbb{E}}
\newcommand{\X}{\mathcal{X}}
\newcommand{\B}{\mathcal{B}}
\renewcommand{\L}{\mathcal{L}}
\newcommand{\Lsvae}{\mathcal{L}_{\mathrm{SVAE}}}
\newcommand{\iid}[1]{\stackrel{\mathrm{iid}}{#1}}
\DeclareMathOperator{\KL}{KL}
\DeclareMathOperator{\range}{range}
\DeclareMathOperator{\MGF}{M}
\DeclareMathOperator*{\argmin}{arg\,min}
\DeclareMathOperator*{\argmax}{arg\,max}
\newcommand{\given}{\, | \,}
\newcommand{\opt}[1]{{#1}^{*} \!}
\newtheoremstyle{break}
  {\topsep}{\topsep}%
  {\itshape}{}%
  {\bfseries}{}%
  {\newline}{}%
\theoremstyle{break}
\newtheorem{theorem}{Theorem}[section]
\newtheorem{proposition}[theorem]{Proposition}
\newtheorem{lemma}[theorem]{Lemma}
\newtheorem{corollary}[theorem]{Corollary}
\newtheorem{definition}[theorem]{Definition}
\newcommand{\obsparam}{\gamma}
\newcommand{\vareta}{\eta}
\newcommand{\prioreta}{\eta^0}
\newcommand{\sectwoeta}{\eta}
\newcommand{\natnabla}{{\widetilde \nabla}}
\title{Composing graphical models with neural networks\\ for structured representations and fast inference}
\author{
  Matthew James Johnson\\
  Harvard University\\
  \texttt{mattjj@seas.harvard.edu}
  \And
  David Duvenaud\\
  Harvard University\\
  \texttt{dduvenaud@seas.harvard.edu}
  \And
  Alexander B. Wiltschko\\
  Harvard University, Twitter\\
  \texttt{awiltsch@fas.harvard.edu}
  \And
  Sandeep R. Datta\\
  Harvard Medical School\\
  \texttt{srdatta@hms.harvard.edu}
  \And
  Ryan P. Adams\\
  Harvard University, Twitter\\
  \texttt{rpa@seas.harvard.edu}
}
\begin{document}
\maketitle

\begin{abstract}
We propose a general modeling and inference framework that combines the complementary strengths of probabilistic graphical models and deep learning methods. Our model family composes latent graphical models with neural network observation likelihoods. For inference, we use recognition networks to produce local evidence potentials, then combine them with the model distribution using efficient message-passing algorithms. All components are trained simultaneously with a single stochastic variational inference objective. We illustrate this framework by automatically segmenting and categorizing mouse behavior from raw depth video, and demonstrate several other example models.
\end{abstract}

\section{Introduction}
\label{sec:intro}

Modeling often has two goals: first, to learn a flexible representation of 
complex high-dimensional data, such as images or speech recordings, and second, to find structure that is interpretable and generalizes to new tasks.
Probabilistic graphical models
\citep{koller2009probabilistic,murphy2012machine} provide many tools to build
structured representations, but often make rigid assumptions and may require
significant feature engineering.
Alternatively, deep learning methods allow flexible data representations to be
learned automatically, but may not directly encode interpretable or tractable
probabilistic structure.
Here we develop a general modeling and inference framework that combines these
complementary strengths.

Consider learning a generative model for video of a mouse.
Learning interpretable representations for such data, and comparing them as the
animal's genes are edited or its brain chemistry altered, gives useful
behavioral phenotyping tools for neuroscience and for high-throughput drug
discovery~\citep{wiltschko2015mapping}.
Even though each image is encoded by hundreds of pixels, the data lie near a low-dimensional nonlinear manifold.
A useful generative model must not only learn this manifold but also provide an
interpretable representation of the mouse's behavioral dynamics.
A natural representation from ethology \citep{wiltschko2015mapping} is that the
mouse's behavior is divided into brief, reused actions, such as darts, rears,
and grooming bouts.
Therefore an appropriate model might switch between discrete states, with each
state representing the dynamics of a particular action.
These two learning tasks~---~identifying an image manifold and a structured dynamics model~---~are complementary: we want to learn the image manifold in terms
of coordinates in which the structured dynamics fit well.
A similar challenge arises in speech \citep{hinton2012deep}, where
high-dimensional spectrographic data lie near a low-dimensional manifold because they are
generated by a physical system with relatively few degrees of freedom
\citep{deng1999computational} but also include the discrete latent dynamical
structure of phonemes, words, and grammar \citep{deng2004switching}.

To address these challenges, we propose a new framework to design and learn
models that couple nonlinear likelihoods with structured latent variable representations.
Our approach uses graphical models for representing structured probability
distributions while enabling fast exact inference subroutines, and uses ideas from
variational autoencoders \citep{kingma2013autoencoding,rezende2014stochastic} for learning not only
the nonlinear feature manifold but also bottom-up recognition networks to
improve inference.
Thus our method enables the combination of flexible deep learning feature
models with structured Bayesian (and even nonparametric \citep{johnson2014svi})
priors.
Our approach yields a single variational inference objective in which all
components of the model are learned simultaneously.
Furthermore, we develop a scalable fitting algorithm that combines several
advances in efficient inference, including stochastic variational inference
\citep{hoffman2013stochastic}, graphical model message
passing~\citep{koller2009probabilistic}, and backpropagation with the
reparameterization trick \citep{kingma2013autoencoding}.
Thus our algorithm can leverage conjugate exponential family structure where
it exists to efficiently compute natural gradients with respect to some
variational parameters, enabling effective second-order optimization
\citep{martens2015perspectives}, while using backpropagation to
compute gradients with respect to all other parameters.
We refer to our general approach as the structured variational autoencoder (SVAE).

\section{Latent graphical models with neural net observations}
\label{sec:models}

In this paper we propose a broad family of models.
Here we develop three specific examples.

\subsection{Warped mixtures for arbitrary cluster shapes}
\label{sec:warped_mixture}

One particularly natural structure used frequently in graphical models is the
discrete mixture model.
By fitting a discrete mixture model to data, we can discover natural clusters
or units. These discrete structures are difficult to represent directly in
neural network models.

Consider the problem of modeling the data $y = \{y_n\}_{n=1}^N$ shown in
Fig.~\ref{fig:spiral data}.
A standard approach to finding the clusters in data is to fit a Gaussian
mixture model (GMM) with a conjugate prior:
\begin{equation}
  \pi \sim \textnormal{Dir}(\alpha),
  \quad
  \;
  (\mu_k, \Sigma_k) \iid\sim \textnormal{NIW}(\lambda),
  \quad
  \;
  z_n \given \pi \iid\sim \pi
  \quad
  \;
  y_n \given z_n, \{(\mu_k, \Sigma_k)\}_{k=1}^K \iid\sim \N(\mu_{z_n}, \Sigma_{z_n}).
  \notag
\end{equation}
However, the fit GMM does not represent the natural clustering of the data
(Fig.~\ref{fig:gmm density}).
Its inflexible Gaussian observation model limits its ability to
parsimoniously fit the data and their natural semantics.

Instead of using a GMM, a more flexible alternative would be a neural network
density model:
\begin{equation}
  \obsparam \sim p(\obsparam)
  \qquad
  x_n \iid\sim \N(0,I),
  \qquad
  y_n \given x_n, \obsparam \iid\sim \N(\mu(x_n; \obsparam), \, \Sigma(x_n; \obsparam)),
  \label{eq:density_network}
\end{equation}
where $\mu(x_n; \obsparam)$ and $\Sigma(x_n; \obsparam)$ depend on $x_n$
through some smooth parametric function, such as multilayer perceptron
(MLP), and where $p(\obsparam)$ is a Gaussian prior~\cite{mackay1999density}.
This model fits the data density well (Fig. \ref{fig:density network})
but does not explicitly represent discrete mixture components,
which might provide insights into the data or natural units for generalization.
See Fig.~\ref{fig:vae pgm} for a graphical model.

\begin{figure}[tb]
  \centering
  \begin{subfigure}[t]{0.245\textwidth}
    \centering
    \includegraphics[width=\textwidth]{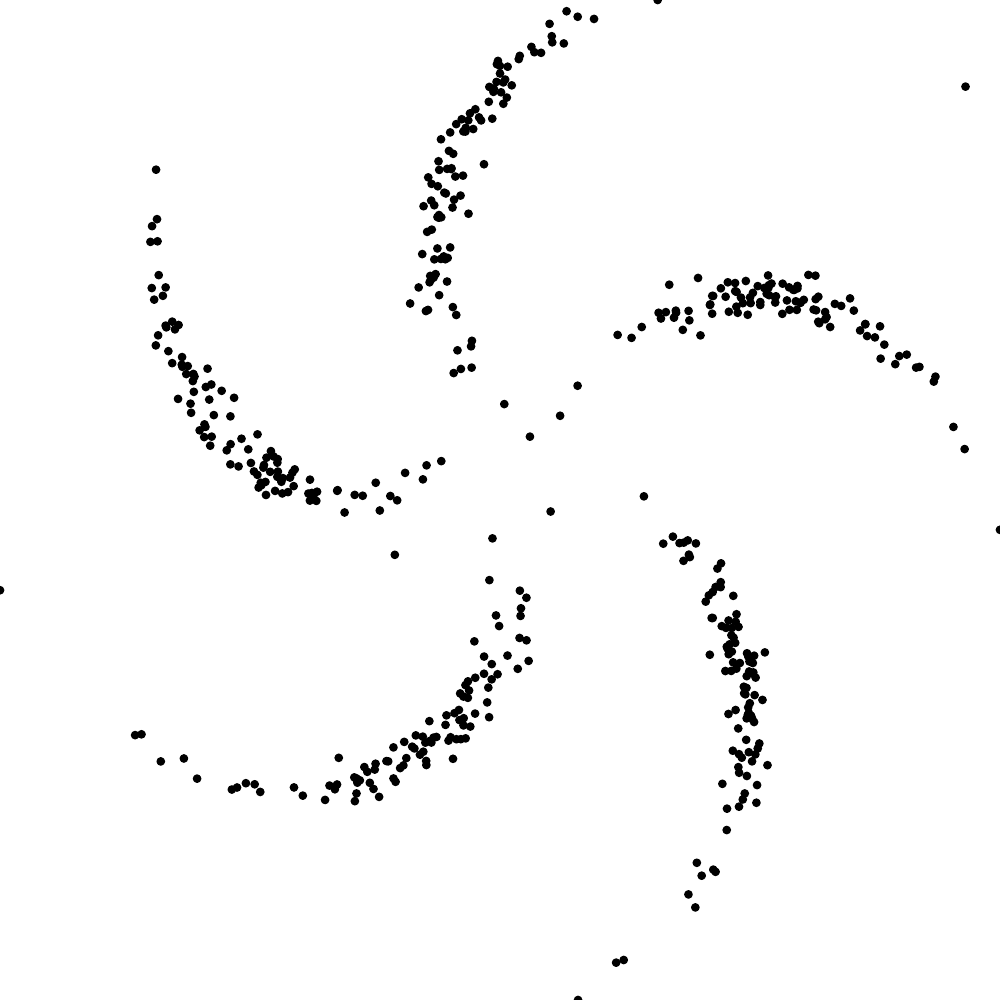}
    \caption{Data}
    \label{fig:spiral data}
  \end{subfigure}
   \begin{subfigure}[t]{0.245\textwidth}
    \centering
    \includegraphics[width=\textwidth]{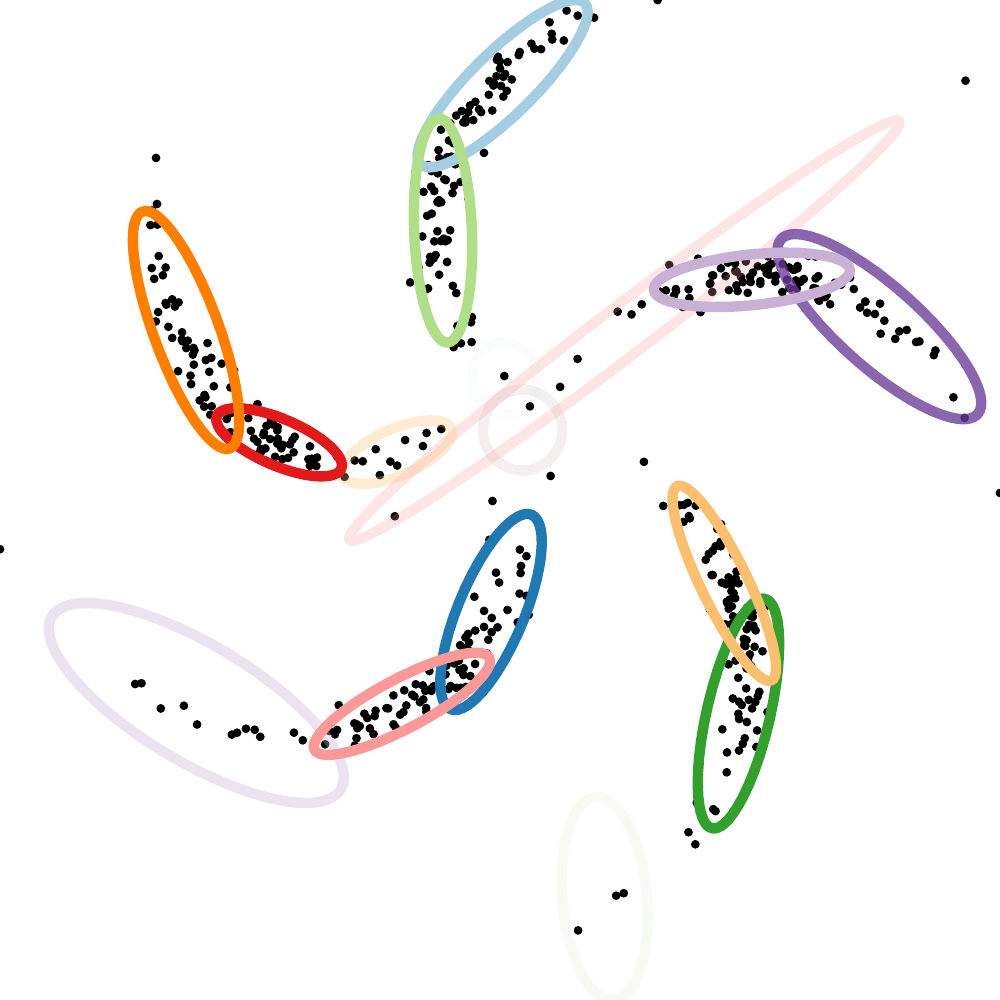}
    \caption{GMM}
    \label{fig:gmm density}
  \end{subfigure}
  \begin{subfigure}[t]{0.245\textwidth}
    \centering
    \includegraphics[width=\textwidth]{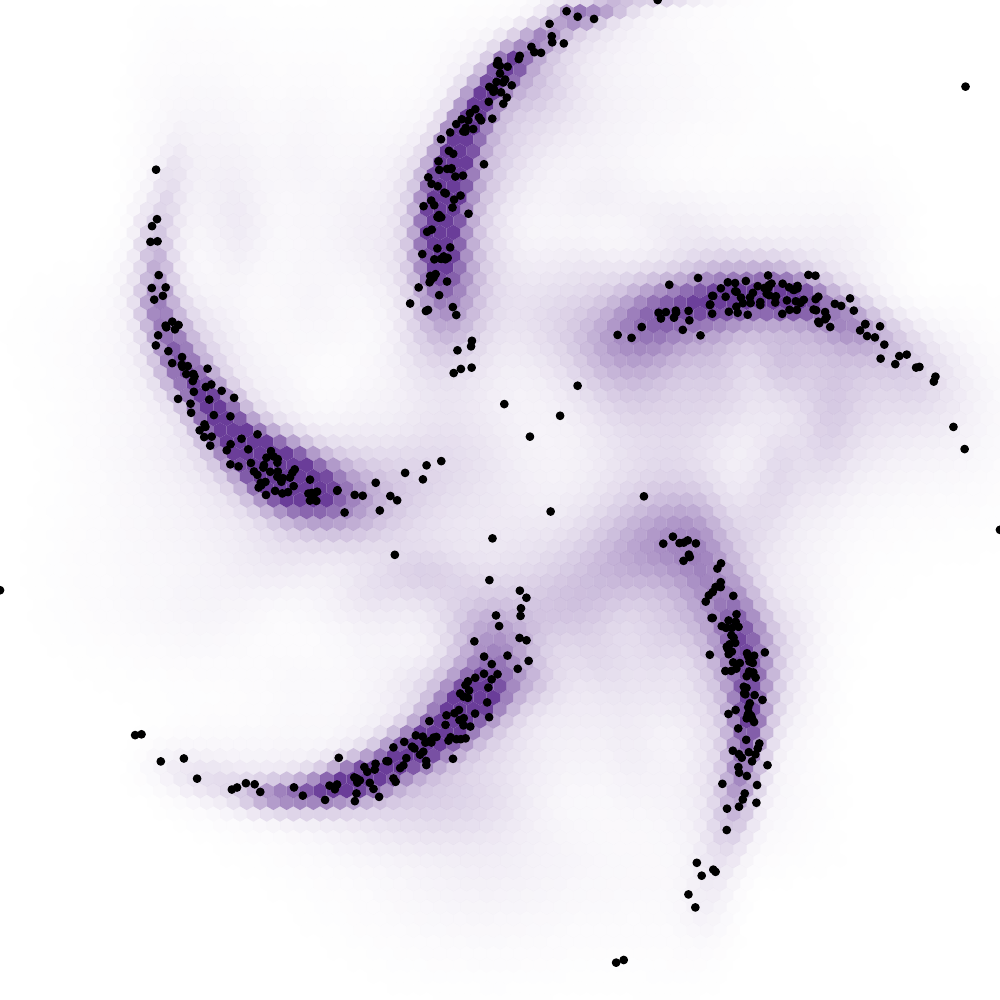}
    \caption{Density net (VAE)}
    \label{fig:density network}
  \end{subfigure}
  \begin{subfigure}[t]{0.245\textwidth}
    \centering
    \includegraphics[width=\textwidth]{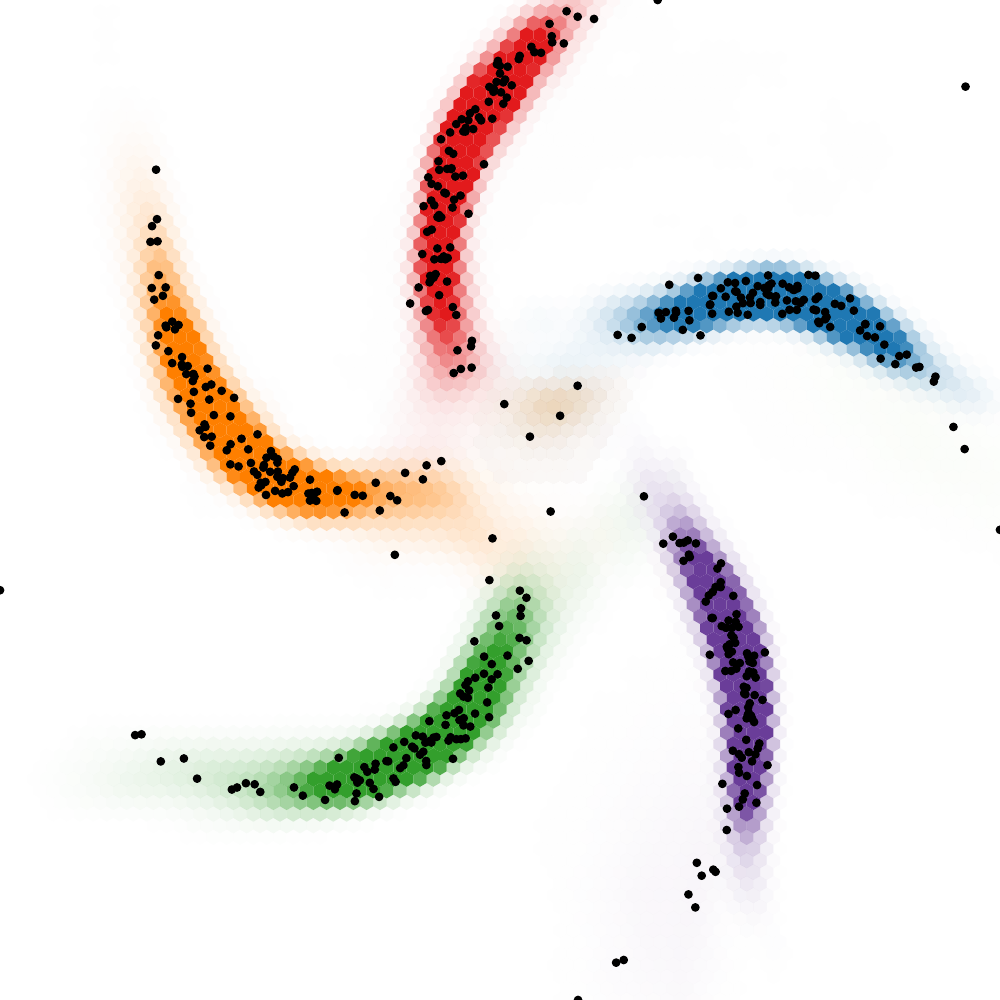}
    \caption{GMM SVAE}
    \label{fig:warped density}
  \end{subfigure}
  \caption{
    Comparison of generative models fit to spiral cluster data. See Section~\ref{sec:warped_mixture}.
  }
  \label{fig:densities}
\end{figure}

By composing a latent GMM with nonlinear observations, we can combine the
modeling strengths of both~\citep{IwaDuvGha2012warped}, learning both discrete clusters along with
non-Gaussian cluster shapes:
\begin{gather}
  \pi \sim \textnormal{Dir}(\alpha),
  \qquad
  (\mu_k, \Sigma_k) \iid\sim \textnormal{NIW}(\lambda),
  \qquad
  \obsparam \sim p(\obsparam)
  \\
  z_n \given \pi \iid\sim \pi
  \qquad
  x_n \iid\sim \N(\mu^{(z_{n})}, \Sigma^{(z_{n})}),
  \qquad
  y_n \given x_n, \obsparam \iid\sim \N(\mu(x_n; \obsparam), \, \Sigma(x_n; \obsparam)).
\end{gather}
This combination of flexibility and structure is shown in Fig.~\ref{fig:warped density}.
See Fig.~\ref{fig:gmm pgm} for a graphical model.

\subsection{Latent linear dynamical systems for modeling video}
\label{sec:lds}

\begin{figure}[tb]
  \centering
  \begin{subfigure}[b]{0.18\textwidth}
    \centering
    \includegraphics[scale=0.6]{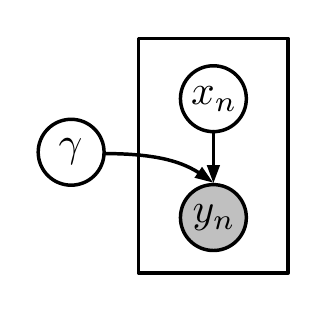}
    \caption{Latent Gaussian}
    \label{fig:vae pgm}
  \end{subfigure}
  \begin{subfigure}[b]{0.15\textwidth}
    \centering
    \includegraphics[scale=0.6]{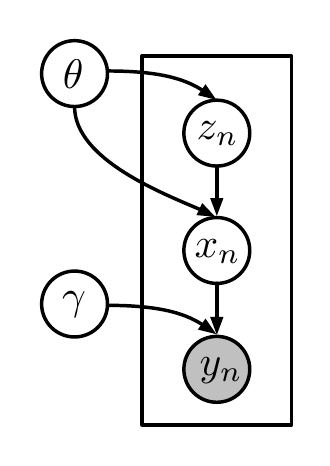}
    \caption{Latent GMM}
    \label{fig:gmm pgm}
  \end{subfigure}
  \hfill
  \begin{subfigure}[b]{0.32\textwidth}
    \centering
    \includegraphics[scale=0.6]{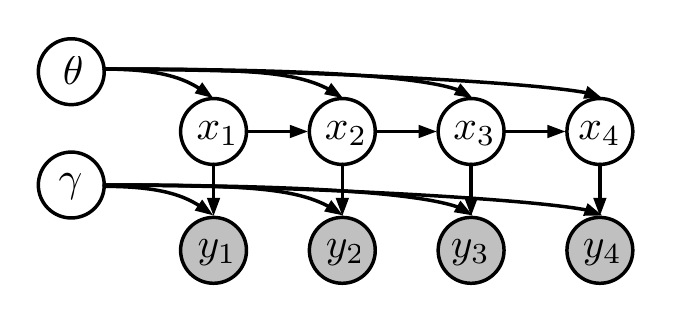}
    \caption{Latent LDS}
    \label{fig:lds pgm}
  \end{subfigure}
  \hfill
  \begin{subfigure}[b]{0.32\textwidth}
    \centering
    \includegraphics[scale=0.6]{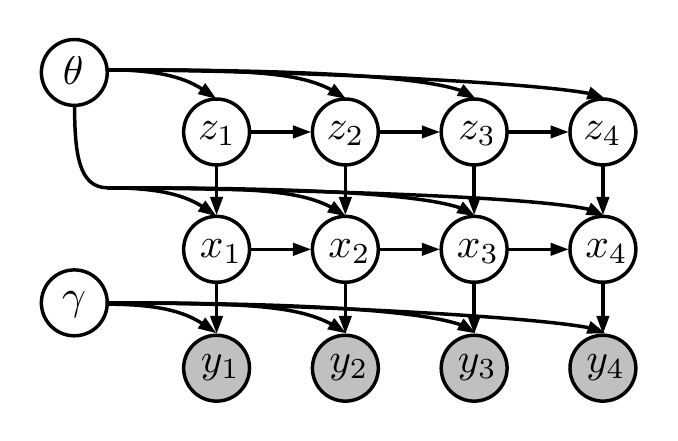}
    \caption{Latent SLDS}
    \label{fig:slds pgm}
  \end{subfigure}
  \caption{Generative graphical models discussed in Section~\ref{sec:models}.}
  \label{fig:graphical_models}
\end{figure}

Now we consider a harder problem: generatively modeling video.
Since a video is a sequence of image frames, a natural place to start is with a
model for images.
\citet{kingma2013autoencoding} shows that the density network of
Eq.~\eqref{eq:density_network} can accurately represent a dataset of
high-dimensional images $\{y_n\}_{n=1}^N$ in terms of the low-dimensional latent
variables $\{x_n\}_{n=1}^N$, each with independent Gaussian distributions.

To extend this image model into a model for videos, we can introduce
dependence through time between the latent Gaussian samples $\{x_n\}_{n=1}^N$.
For instance, we can make each latent variable $x_{n}$ depend on the previous
latent variable $x_{n-1}$ through a Gaussian linear dynamical system, writing
\begin{equation}
  x_{n} = A x_{n-1} + B u_n, \qquad u_n \iid\sim \N(0, I), \qquad A,B \in \R^{m \times m},
\end{equation}
where the matrices $A$ and $B$ have a conjugate prior.
This model has low-dimensional latent states and dynamics as well as a rich
nonlinear generative model of images.
In addition, the timescales of the dynamics are
represented directly in the eigenvalue spectrum of $A$, providing both
interpretability and a natural way to encode prior information.
See Fig.~\ref{fig:lds pgm} for a graphical model.

\subsection{Latent switching linear dynamical systems for parsing behavior from video}
\label{sec:slds}

As a final example that combines both time series structure and discrete latent
units, consider again the behavioral phenotyping problem described in
Section~\ref{sec:intro}.
Drawing on graphical modeling tools, we can construct a latent switching linear
dynamical system (SLDS) \citep{fox2011slds} to represent the data in terms of continuous latent
states that evolve according to a discrete library of linear dynamics,
and drawing on deep learning methods we can generate video frames with a neural
network image model.

At each time $n \in \{1,2,\ldots,N\}$ there is a discrete-valued latent state
$z_n \in \{1,2,\ldots,K\}$ that evolves according to Markovian dynamics.
The discrete state indexes a set of linear dynamical parameters, and
the continuous-valued latent state $x_n \in \R^m$ evolves according to the
corresponding dynamics,
\begin{equation}
  z_{n} \given z_{n-1}, \pi \sim \pi_{z_{n-1}} , \qquad
  x_{n} = A_{z_{n}} x_{n-1} + B_{z_{n}} u_n, \qquad u_n \iid\sim\N(0,I),
\end{equation}
where $\pi = \{\pi_k\}_{k=1}^K$ denotes the Markov transition matrix and
$\pi_k \in \R^K_+$ is its $k$th row.
We use the same neural net observation model as
in Section~\ref{sec:lds}.
This SLDS model combines both continuous and discrete latent variables with
rich nonlinear observations.
See Fig.~\ref{fig:slds pgm} for a graphical model.

\section{Structured mean field inference and recognition networks}
\label{sec:inference}

Why aren't such rich hybrid models used more frequently?
The main difficulty with combining rich latent variable structure and flexible
likelihoods is inference.
The most efficient inference algorithms used in graphical models, like
structured mean field and message passing, depend on conjugate exponential
family likelihoods to preserve tractable structure.
When the observations are more general, like neural network models, inference
must either fall back to general algorithms that do not exploit the
model structure or else rely on bespoke algorithms developed for one model
at a time.

In this section, we review inference ideas from conjugate exponential family probabilistic graphical models and variational autoencoders, which we combine and generalize in the next section.

\subsection{Inference in graphical models with conjugacy structure}

Graphical models and exponential families provide many algorithmic tools for
efficient inference~\citep{wainwright2008graphical}.
Given an exponential family latent variable model, when the observation model
is a conjugate exponential family, the conditional distributions stay in the
same exponential families as in the prior and hence allow for the same
efficient inference algorithms.

For example, consider learning a Gaussian linear dynamical system model with
linear Gaussian observations.
The generative model for latent states $x = \{x_n\}_{n=1}^N$ and observations
$y = \{y_n\}_{n=1}^N$ is
\begin{equation}
  x_n = A x_{n-1} + B u_{n},
  \qquad
  u_n \iid\sim \N(0, I),
  \qquad
  y_n = C x_n + D v_n,
  \qquad
  v_n \iid\sim \N(0,I),
\end{equation}
given parameters ${\theta = (A, B, C, D)}$ with a conjugate prior $p(\theta)$.
To approximate the posterior~$p(\theta, x \given y)$, consider the mean field
family $q(\theta)q(x)$ and the variational inference objective
\begin{equation}
  \L[ \, q(\theta) q(x) \, ] = \E_{q(\theta)q(x)} \! \left[ \log \frac{p(\theta) p(x \given \theta) p(y \given x, \theta)}{q(\theta)q(x)} \right],
  \label{eq:lds_vlb}
\end{equation}
where we can optimize the variational family $q(\theta)q(x)$ to approximate the
posterior $p(\theta, x \given y)$ by maximizing Eq.~\eqref{eq:lds_vlb}.
Because the observation model $p(y \given x, \theta)$ is conjugate to the
latent variable model $p(x \given \theta)$, for any fixed $q(\theta)$ the optimal factor
$\opt q(x) \triangleq \argmax_{q(x)} \L[ \, q(\theta) q(x) \, ]$ is itself a
Gaussian linear dynamical system with parameters that are simple functions of
the expected statistics of $q(\theta)$ and the data $y$.
As a result, for fixed $q(\theta)$ we can easily compute $\opt q(x)$ and use
message passing algorithms to perform exact inference in it.
However, when the observation model is not conjugate to the latent variable
model, these algorithmically exploitable structures break down.

\subsection{Recognition networks in variational autoencoders}
\label{sec:vae_inference}

The variational autoencoder (VAE) \citep{kingma2013autoencoding} handles
general non-conjugate observation models by introducing recognition networks.
For example, when a Gaussian latent variable model $p(x)$ is paired with a
general nonlinear observation model $p(y \given x, \obsparam)$, the posterior
$p(x \given y, \obsparam)$ is non-Gaussian, and it is difficult to compute an
optimal Gaussian approximation.
The VAE instead learns to directly output a suboptimal Gaussian factor $q(x
\given y)$ by fitting a parametric map from data $y$ to a mean and covariance,
$\mu(y; \phi)$ and $\Sigma(y; \phi)$, such as an MLP with parameters $\phi$.
By optimizing over $\phi$, the VAE effectively learns how to condition on
non-conjugate observations $y$ and produce a good approximating factor.


\begin{figure}[tb]
  \centering
  \begin{subfigure}[b]{0.15\textwidth}
    \centering
    \includegraphics[scale=0.6]{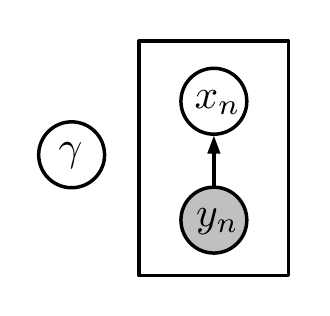}
    \caption{VAE}
  \end{subfigure}
  \begin{subfigure}[b]{0.15\textwidth}
    \centering
    \includegraphics[scale=0.6]{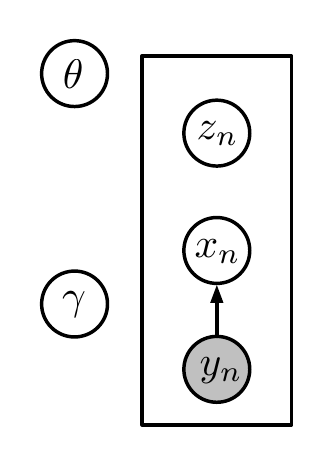}
    \caption{GMM SVAE}
  \end{subfigure}
  \hfill
  \begin{subfigure}[b]{0.32\textwidth}
    \centering
    \includegraphics[scale=0.6]{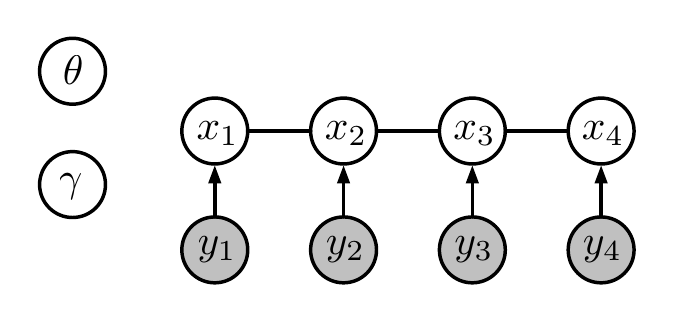}
    \caption{LDS SVAE}
  \end{subfigure}
  \hfill
  \begin{subfigure}[b]{0.32\textwidth}
    \centering
    \includegraphics[scale=0.6]{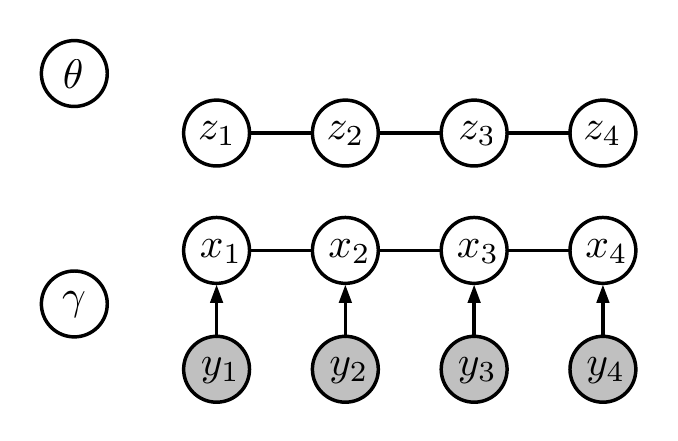}
    \caption{SLDS SVAE}
  \end{subfigure}
  \caption{Variational families and recognition networks for the VAE \citep{kingma2013autoencoding} and three SVAE examples.}
  \label{fig:variational_families}
\end{figure}

\section{Structured variational autoencoders}

We can combine the tractability of conjugate graphical model inference with the flexibility of variational autoencoders.
The main idea is to use a conditional random field (CRF) variational family.
We learn recognition networks that output conjugate graphical model potentials
instead of outputting the complete variational distribution's parameters directly.
These potentials are then used in graphical model inference algorithms in place
of the non-conjugate observation likelihoods.

The SVAE algorithm computes stochastic gradients of a mean field variational
inference objective.
It can be viewed as a generalization both of the natural gradient SVI algorithm
for conditionally conjugate models \citep{hoffman2013stochastic} and of the
AEVB algorithm for variational autoencoders \citep{kingma2013autoencoding}.
Intuitively, it proceeds by sampling a data minibatch, applying the recognition
model to compute graphical model potentials, and using graphical model
inference algorithms to compute the variational factor, combining the
evidence from the potentials with the prior structure in the model.
This variational factor is then used to compute gradients of the mean field
objective.
See Fig.~\ref{fig:variational_families} for graphical models of the variational
families with recognition networks for the models developed in
Section~\ref{sec:models}.

In this section, we outline the SVAE model class more formally, write the mean
field variational inference objective, and show how to efficiently compute
unbiased stochastic estimates of its gradients.
The resulting algorithm for computing gradients of the mean field objective,
shown in Algorithm~\ref{alg:svae}, is simple and efficient and can be readily
applied to a variety of learning problems and graphical model structures.
See the supplementals for details and proofs.

\begin{algorithm}[tb]
  \caption{Estimate SVAE lower bound and its gradients}
  \begin{algorithmic}
    \Require Variational parameters $(\vareta_\theta, \vareta_\obsparam, \phi)$, data sample $y$
    \Function{SVAEGradients}{$\vareta_\theta$, $\vareta_\obsparam$, $\phi$, $y$}
    \State $\psi \gets r(y_n; \phi)$
    \Comment Get evidence potentials

    \State $(\hat x, \; \bar t_x, \; \KL^{\textnormal{local}}) \gets \Call{PGMInference}{\vareta_\theta, \psi}$
    \Comment Combine evidence with prior

    \State $\hat \obsparam \sim q(\obsparam)$ $\phantom{A^l}$
    \Comment Sample observation parameters

    \State $\L \gets N \log p(y \given \hat x, \hat \obsparam) - N \KL^{\textnormal{local}} - \KL(q(\theta)q(\obsparam) \| p(\theta)p(\obsparam))$
    \Comment Estimate variational bound

    \State $\natnabla_{\vareta_\theta} \L \gets \prioreta_\theta - \vareta_\theta + N(\bar t_x, 1) + N (\nabla_{\vareta_x} \log p(y \given \hat x, \hat \obsparam), 0)$
    \Comment Compute natural gradient

    \Return lower bound $\L$, natural gradient $\natnabla_{\vareta_\theta} \L$, ~gradients $\nabla_{\vareta_\obsparam, \phi} \L$
    \EndFunction

    \Function{PGMInference}{$\vareta_\theta$, $\psi$}
    \State $\opt q(x) \gets \Call{OptimizeLocalFactors}{\vareta_\theta, \psi}$
    \Comment Fast message-passing inference
    \Return sample $\hat{x} \sim \opt q(x)$, ~statistics $\E_{\opt q(x)} t_x(x)$, ~divergence $\E_{q(\theta)} \KL(\opt q(x) \| p(x \given \theta))$
    \EndFunction

  \end{algorithmic}
  \label{alg:svae}
\end{algorithm}

\subsection{SVAE model class}
\label{sec:svae_objective}

To set up notation for a general SVAE, we first define a conjugate pair of
exponential family densities on global latent variables $\theta$ and local
latent variables $x = \{x_n\}_{n=1}^N$.
Let $p(x \given \theta)$ be an exponential family and let $p(\theta)$ be its
corresponding natural exponential family conjugate prior, writing
\begin{align}
    p(\theta) &= \exp \left\{ \langle \prioreta_\theta, t_\theta(\theta) \rangle - \log Z_\theta(\prioreta_\theta) \right\},
    \label{eq:svae_densities_start}
    \\
    p(x \given \theta) &= \exp \left\{ \langle \prioreta_{x}(\theta), t_{x}(x) \rangle - \log Z_{x}(\prioreta_{x}(\theta)) \right\}
    = \exp \left\{ \langle t_\theta(\theta), (t_{x}(x), 1) \rangle \right\},
    \label{eq:svae_densities_end}
\end{align}
where we used exponential family conjugacy to write
$t_\theta(\theta) = \left( \prioreta_{x}(\theta), -\log Z_{x}(\prioreta_{x}(\theta))
\right)$.
The local latent variables $x$ could have additional structure, like including
both discrete and continuous latent variables or tractable graph structure, but
here we keep the notation simple.

Next, we define a general likelihood function. Let $p(y \given x, \obsparam)$
be a general family of densities and let~$p(\obsparam)$ be an exponential
family prior on its parameters.
For example, each observation $y_n$ may depend on the latent value $x_n$
through an MLP, as in the density network model of Section~\ref{sec:models}.
This generic non-conjugate observation model provides modeling flexibility, yet
the SVAE can still leverage conjugate exponential family structure in inference, as
we show next.

\subsection{Stochastic variational inference algorithm}
Though the general observation model $p(y \given x, \obsparam)$ means that
conjugate updates and natural gradient SVI \citep{hoffman2013stochastic} cannot
be directly applied, we show that by generalizing the recognition network idea
we can still approximately optimize out the local variational factors
leveraging conjugacy structure.

For fixed $y$, consider the mean field family $q(\theta)q(\obsparam)q(x)$ and
the variational inference objective
\begin{equation}
    \L[ \, q(\theta) q(\obsparam) q(x) \, ]
    \triangleq \E_{q(\theta)q(\obsparam)q(x)} \! \left[ \log \frac{p(\theta)p(\obsparam)p(x \given \theta) p(y \given x, \obsparam)}{q(\theta)q(\obsparam)q(x)} \right].
    \label{eq:svae_mean_field_objective}
\end{equation}
Without loss of generality we can take the global factor $q(\theta)$ to be in
the same exponential family as the prior $p(\theta)$, and we denote its natural
parameters by $\vareta_\theta$.
We restrict $q(\obsparam)$ to be in the same exponential family as
$p(\obsparam)$ with natural parameters $\vareta_\obsparam$.
Finally, we restrict
$q(x)$ to be in the same exponential family as $p(x \given
\theta)$, writing its natural parameter as $\vareta_x$.
Using these explicit variational parameters, we write the mean field
variational inference objective in Eq.~\eqref{eq:svae_mean_field_objective} as
$\L(\vareta_\theta, \vareta_\obsparam, \vareta_x)$.

To perform efficient optimization of the objective $\L(\vareta_\theta,
\vareta_\obsparam, \vareta_x)$, we consider choosing the
variational parameter $\vareta_x$ as a function of the other
parameters $\vareta_\theta$ and $\vareta_\obsparam$.
One natural choice is to set $\vareta_x$ to be a local partial
optimizer of $\L$.
However, without conjugacy structure finding a local partial optimizer may be
computationally expensive for general densities $p(y \given x, \obsparam)$, and
in the large data setting this expensive optimization would have to be
performed for each stochastic gradient update.
Instead, we choose $\vareta_x$ by optimizing over a surrogate
objective $\widehat \L$ with conjugacy structure, given by
\begin{equation}
    \widehat \L(\vareta_\theta, \vareta_x, \phi)
    \triangleq \E_{q(\theta)q(x)} \! \left[ \log \frac{p(\theta)p(x \given \theta) \exp \{ \psi(x ; y, \phi) \} }{q(\theta) q(x)} \right],
    \quad
    \psi(x ; y, \phi) \triangleq \langle r(y ; \phi), \; t_x(x) \rangle,
    \notag
\end{equation}
where $\{ r(y ; \phi) \}_{\phi \in \R^m}$ is some parameterized class of
functions that serves as the recognition model.
Note that the potentials $\psi(x ; y, \phi)$ have a form conjugate to the exponential
family $p(x \given \theta)$.
We define~$\vareta^*_x(\vareta_\theta, \phi)$ to be a local
partial optimizer of $\widehat \L$ along with the
corresponding factor~$\opt q(x)$,
\begin{equation}
    \vareta^*_x(\vareta_\theta, \phi) \triangleq \argmin_{\vareta_x} \widehat \L (\vareta_\theta, \vareta_x, \phi),
    \qquad
    \opt q(x) = \exp \left\{ \langle \vareta^*_x(\vareta_\theta, \phi), \, t_x(x) \rangle - \log Z_x(\vareta_x^*(\vareta_\theta, \phi))  \right\}.
    \notag
\end{equation}
As with the variational autoencoder of Section~\ref{sec:vae_inference}, the
resulting variational factor~$\opt q(x)$ is suboptimal for the variational
objective $\L$.
However, because the surrogate objective has the same form as a variational
inference objective for a conjugate observation model, the factor~$\opt q(x)$ not only is
easy to compute but also inherits exponential family and graphical
model structure for tractable inference.

Given this choice of $\vareta^*_x(\vareta_\theta, \phi)$, the
SVAE objective is
$\Lsvae(\vareta_\theta, \vareta_\obsparam, \phi) \triangleq \L(\vareta_\theta, \vareta_\obsparam, \vareta^*_x(\vareta_\theta, \phi))$.
This objective is a lower bound for the variational inference objective
Eq.~\eqref{eq:svae_mean_field_objective} in the following sense.

\begin{proposition}[The SVAE objective lower-bounds the mean field objective]
\label{prop:svae_lower_bound}
    The SVAE objective function $\Lsvae$ lower-bounds the mean field objective
    $\L$ in the sense that
    \begin{equation}
        \max_{q(x)} \L [ \, q(\theta) q(\obsparam) q(x) \, ]
        \geq
        \max_{\vareta_x} \L(\vareta_\theta, \vareta_\obsparam, \vareta_x)
        \geq 
        \Lsvae(\vareta_\theta, \vareta_\obsparam, \phi)
        \quad
        \forall \phi \in \R^m,
    \end{equation}
    for any parameterized function class $\{r(y; \phi)\}_{\phi \in \R^m}$.
    Furthermore, if there is some $\phi^* \in \R^m$ such that
    $\psi(x ; y, \phi^*) = \E_{q(\obsparam)} \log p(y \given x, \obsparam)$,
    then the bound can be made tight in the sense that
    \begin{equation}
        \max_{q(x)} \L [ \, q(\theta) q(\obsparam) q(x) \, ]
        =
        \max_{\vareta_x} \L(\vareta_\theta, \vareta_\obsparam, \vareta_x)
        =
        \max_\phi \Lsvae(\vareta_\theta, \vareta_\obsparam, \phi).
    \end{equation}
\end{proposition}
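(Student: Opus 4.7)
The plan is to dispatch the two inequalities in turn and then address tightness under the hypothesis on $\phi^*$. The first inequality $\max_{q(x)} \L[\,q(\theta)q(\obsparam)q(x)\,] \geq \max_{\vareta_x} \L(\vareta_\theta, \vareta_\obsparam, \vareta_x)$ is structural: parameterizing $q(x)$ by $\vareta_x$ restricts it to the exponential family of $p(x\given\theta)$, which is a subset of all distributions on $x$, so maximizing over the smaller family can only decrease the objective. The second inequality $\max_{\vareta_x}\L(\vareta_\theta,\vareta_\obsparam,\vareta_x) \geq \Lsvae(\vareta_\theta,\vareta_\obsparam,\phi)$ is immediate from the definition $\Lsvae(\vareta_\theta,\vareta_\obsparam,\phi) \triangleq \L(\vareta_\theta,\vareta_\obsparam,\vareta^*_x(\vareta_\theta,\phi))$, since evaluating $\L$ at any particular $\vareta_x$ is dominated by the maximum.

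For tightness, I would first show the second inequality becomes an equality by comparing $\widehat\L(\vareta_\theta, \cdot, \phi^*)$ and $\L(\vareta_\theta, \vareta_\obsparam, \cdot)$ as functions of $\vareta_x$. Expanding
$$\L(\vareta_\theta,\vareta_\obsparam,\vareta_x) = \E_{q(x)}\bigl[\,\E_{q(\theta)}\log p(x\given\theta) + \E_{q(\obsparam)}\log p(y\given x,\obsparam) - \log q(x)\,\bigr] + C_1(\vareta_\theta,\vareta_\obsparam),$$
and the analogous expansion of $\widehat\L$ with $\psi(x;y,\phi^*)$ in place of $\E_{q(\obsparam)}\log p(y\given x,\obsparam)$, the hypothesis $\psi(x;y,\phi^*)=\E_{q(\obsparam)}\log p(y\given x,\obsparam)$ forces the two to coincide up to an additive constant in $\vareta_x$. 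Hence their $\vareta_x$-optimizers coincide and $\vareta_x^*(\vareta_\theta,\phi^*) = \argmax_{\vareta_x}\L(\vareta_\theta,\vareta_\obsparam,\vareta_x)$.

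To promote the first inequality to an equality, I would invoke the mean field fixed-point equation: the unrestricted optimum $q^*(x)$ satisfies $\log q^*(x) = \E_{q(\theta)q(\obsparam)}\log\bigl[p(x\given\theta)\,p(y\given x,\obsparam)\bigr] + \text{const}$. By the assumed conjugacy of $p(\theta)$ and $p(x\given\theta)$, the term $\E_{q(\theta)}\log p(x\given\theta)$ is affine in $t_x(x)$; by the hypothesis on $\phi^*$, the term $\E_{q(\obsparam)}\log p(y\given x,\obsparam) = \langle r(y;\phi^*), t_x(x)\rangle$ is also affine in $t_x(x)$. Therefore $q^*(x)$ already lies in the exponential family parameterized by $\vareta_x$, and the restriction is vacuous, giving equality. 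Combining these two facts and noting that $\phi^*$ is feasible in $\max_\phi\Lsvae$ yields the common value across all three quantities.

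The main obstacle will be the exponential-family bookkeeping in the tightness step: one must carefully partition the $\vareta_x$-dependent parts of both $\L$ and $\widehat\L$ (treating $\E_{q(\theta)}$ and $\E_{q(\obsparam)}$ terms as parameters) and confirm that the mean field fixed-point iteration closes within the conjugate exponential family precisely because the expected log-likelihood and the recognition potential induce the same natural-parameter dependence on $t_x(x)$. The manipulations are routine but easy to bungle without careful tracking of which quantities depend on which natural parameters.
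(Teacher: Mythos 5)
Your proof is correct and follows essentially the same route as the paper's: the two inequalities come from the nested restrictions (exponential-family parameterization of $q(x)$, then the particular choice $\vareta_x^*(\vareta_\theta,\phi)$), and tightness follows from the mean-field fixed-point form of the unrestricted optimal factor, which under the hypothesis on $\phi^*$ is log-affine in $t_x(x)$ and hence already lies in the conjugate exponential family, while $\widehat\L$ and $\L$ then agree up to a $\vareta_x$-independent constant so their optimizers coincide. The only difference is presentational: you spell out the additive-constant comparison of $\widehat\L$ and $\L$ explicitly, where the paper simply cites its mean-field factor lemma.
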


Thus by using gradient-based optimization to maximize
$\Lsvae(\vareta_\theta, \vareta_\obsparam, \phi)$ we are
maximizing a lower bound on the model log evidence $\log p(y)$.
In particular, by optimizing over $\phi$ we are effectively learning how to
condition on observations so as to best approximate the posterior while
maintaining conjugacy structure.
Furthermore, to provide the best lower bound we may choose the recognition
model function class $\{ r(y; \phi) \}_{\phi \in \R^m}$ to be as rich as
possible.

Choosing $\vareta^*_x(\vareta_\theta, \phi)$ to be a local
partial optimizer of $\widehat \L$ provides two computational advantages.
First, it allows $\vareta^*_x(\vareta_\theta, \phi)$ and
expectations with respect to $\opt q(x)$ to be computed efficiently by
exploiting exponential family graphical model structure.
Second, it provides computationally efficient ways to estimate the natural
gradient with respect to the latent model parameters, as we summarize next.

\begin{proposition}[Natural gradient of the SVAE objective]
\label{prop:svae_natural_gradient}
    The natural gradient of the SVAE objective $\Lsvae$ with
    respect to $\vareta_\theta$ can be estimated as
    \begin{equation}
        \natnabla_{\vareta_\theta} \Lsvae(\vareta_\theta, \vareta_\obsparam, \phi)
        =
        \left(\prioreta_\theta + \E_{\opt q(x)} \left[ (t_x(x), 1) \right] - \vareta_\theta \right)
        +
        {(\nabla^2 \log Z_\theta(\vareta_\theta))}^{-1} \nabla F(\vareta_\theta),
        \label{eq:svae_natural_gradient}
    \end{equation}
    where $F(\vareta_\theta^\prime) = \L(\vareta_\theta, \vareta_\gamma,
    \vareta_x^*(\vareta_\theta^\prime, \phi))$.
    When there is only one local variational factor $q(x)$, then we can
    simplify the estimator to
    \begin{equation}
        \natnabla_{\vareta_\theta} \Lsvae(\vareta_\theta, \vareta_\obsparam, \phi)
        =
        \left(\prioreta_\theta + \E_{\opt q(x)} \left[ (t_x(x), 1) \right] - \vareta_\theta \right)
        +
        (\nabla_{\vareta_x} \L(\vareta_\theta, \vareta_\obsparam, \vareta^*_x(\vareta_\theta, \phi)), 0).
        \label{eq:svae_natural_gradient_2}
    \end{equation}
\end{proposition}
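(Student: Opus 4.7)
The plan is to treat $\natnabla_{\vareta_\theta} \Lsvae = \bigl(\nabla^2 \log Z_\theta(\vareta_\theta)\bigr)^{-1} \nabla_{\vareta_\theta} \Lsvae$, where the preconditioner is the Fisher information of the exponential family $q(\theta)$, and to split the Euclidean gradient via the chain rule applied to the definition $\Lsvae(\vareta_\theta,\vareta_\obsparam,\phi) = \L(\vareta_\theta,\vareta_\obsparam,\vareta_x^*(\vareta_\theta,\phi))$. This yields an explicit piece at fixed $\vareta_x = \vareta_x^*$ and an implicit piece $\nabla F(\vareta_\theta) = \bigl(\partial \vareta_x^*/\partial\vareta_\theta\bigr)^{\!\T}\nabla_{\vareta_x}\L$. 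The latter is exactly the second summand of Eq.~\eqref{eq:svae_natural_gradient}; what needs real work is to (i)~simplify the explicit piece via the standard conjugate-SVI identity, and (ii)~in the single-factor setting, compute $\partial\vareta_x^*/\partial\vareta_\theta$ in closed form so that the inverse Fisher cancels.

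For the explicit piece, I would reuse the calculation familiar from \citep{hoffman2013stochastic}. Gathering all $\vareta_\theta$-dependent terms in $\L$ using Eqs.~\eqref{eq:svae_densities_start}--\eqref{eq:svae_densities_end}, the relevant part takes the form
\[
g(\vareta_\theta) = \bigl\langle \prioreta_\theta - \vareta_\theta + \E_{\opt q(x)}[(t_x(x),1)],\; \nabla \log Z_\theta(\vareta_\theta)\bigr\rangle + \log Z_\theta(\vareta_\theta)
\]
up to $\vareta_\theta$-independent constants, where I use $\E_{q(\theta)} t_\theta(\theta) = \nabla \log Z_\theta(\vareta_\theta)$ for the mean parameter identity. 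A short direct differentiation (the $-\vareta_\theta$ and $\log Z_\theta$ contributions cancel against the outer $\nabla \log Z_\theta$) gives $\nabla g(\vareta_\theta) = \nabla^2 \log Z_\theta(\vareta_\theta) \cdot \bigl(\prioreta_\theta + \E_{\opt q(x)}[(t_x(x),1)] - \vareta_\theta\bigr)$, and premultiplying by the inverse Fisher produces the first bracketed term common to both formulas. Combining with the implicit piece immediately gives Eq.~\eqref{eq:svae_natural_gradient}.

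For Eq.~\eqref{eq:svae_natural_gradient_2}, the key is that when $q(x)$ is a single conjugate local factor, the surrogate $\widehat\L$ is itself a conjugate variational bound in $\vareta_x$, so the first-order condition $\nabla_{\vareta_x} \widehat\L = 0$ has the closed-form solution
\[
\vareta_x^*(\vareta_\theta,\phi) = P\,\nabla \log Z_\theta(\vareta_\theta) + r(y;\phi),
\]
where $P$ is the linear projection from the augmented sufficient statistic $t_\theta(\theta) = (\prioreta_x(\theta), -\log Z_x(\prioreta_x(\theta)))$ onto its natural-parameter block. Differentiating gives $\partial \vareta_x^*/\partial \vareta_\theta = P\,\nabla^2 \log Z_\theta(\vareta_\theta)$, hence
\[
\nabla F(\vareta_\theta) = \nabla^2 \log Z_\theta(\vareta_\theta)\, P^{\T} \nabla_{\vareta_x}\L = \nabla^2 \log Z_\theta(\vareta_\theta)\,\bigl(\nabla_{\vareta_x}\L,\, 0\bigr),
\]
so the inverse Fisher cancels and we are left with $(\nabla_{\vareta_x}\L, 0)$, giving Eq.~\eqref{eq:svae_natural_gradient_2}.

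The main obstacle is careful bookkeeping of the augmented exponential-family structure: one must verify that $P^{\T}$ correctly pads $\nabla_{\vareta_x}\L$ with a zero in the log-partition slot (which follows from the fact that $r(y;\phi)$ only couples to $t_x(x)$, not to the appended $1$), and one must justify the conjugate closed form for $\vareta_x^*$ by a computation parallel to the explicit piece above. A secondary subtlety is noting that the envelope theorem does \emph{not} eliminate $\nabla F$: $\vareta_x^*$ is a stationary point of $\widehat\L$ rather than $\L$, so $\nabla_{\vareta_x}\L$ does not vanish at $\vareta_x^*$ and the implicit term genuinely contributes.
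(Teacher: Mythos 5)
Your proof is correct and follows the same overall strategy as the paper's: split $\nabla_{\vareta_\theta}\Lsvae$ by the chain rule into the conjugate-SVI term plus the implicit term $\nabla F$, and show that in the single-factor case the Jacobian of $\vareta_x^*$ equals the Fisher information $\nabla^2\log Z_\theta(\vareta_\theta)$ applied to a zero-padding matrix, so that the inverse Fisher cancels. The one place you diverge is in computing $\nabla_{\vareta_\theta}\vareta_x^*$: the paper invokes the Implicit Function Theorem for the stationarity system $\nabla_{\vareta_x}\widehat\L = 0$, which requires computing $\nabla^2_{\vareta_x\vareta_x}\widehat\L$ and observing that the third-cumulant term $\nabla^3\log Z_x$ drops out at the stationary point, whereas you solve the stationarity condition in closed form, $\vareta_x^*(\vareta_\theta,\phi) = \E_{q(\theta)}\prioreta_x(\theta) + r(y;\phi)$, and differentiate that directly. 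In the unconstrained single-factor setting these are equivalent and your route is shorter (the paper derives the same closed form as an intermediate identity but still routes through the IFT); the IFT formulation is what survives when the closed form is unavailable, e.g.\ when $q(x)$ has additional factorization structure, which is precisely the regime where the cancellation fails and the general formula with $(\nabla^2\log Z_\theta(\vareta_\theta))^{-1}\nabla F(\vareta_\theta)$ must be used instead of the simplified one. One small imprecision: the zero in the padded vector $(\nabla_{\vareta_x}\L, 0)$ arises because $\vareta_x^*$ depends on $\nabla\log Z_\theta(\vareta_\theta) = \E_{q(\theta)}t_\theta(\theta)$ only through its first block $\E_{q(\theta)}\prioreta_x(\theta)$ --- the last coordinate $-\E_{q(\theta)}\log Z_x(\prioreta_x(\theta))$ is constant in $x$ and never enters the stationarity condition --- rather than because of how $r(y;\phi)$ couples to $t_x(x)$; this does not affect the conclusion. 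You also correctly flag the essential conceptual point that the envelope theorem does not kill the second term, since $\vareta_x^*$ is stationary for $\widehat\L$ but not for $\L$.
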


Note that the first term in Eq.~\eqref{eq:svae_natural_gradient} is the same as
the expression for the natural gradient in SVI for conjugate models
\citep{hoffman2013stochastic}, while a stochastic estimate of $\nabla
F(\vareta_\theta)$ in the first expression or, alternatively, a stochastic
estimate of $\nabla_{\vareta_\theta} \L(\vareta_\theta, \vareta_\gamma,
\vareta_x^*(\vareta_\theta, \phi))$ in the second expression
is computed automatically as part of the backward pass for computing the
gradients with respect to the other parameters, as described next.
Thus we have an expression for the natural gradient with respect to the latent
model's parameters that is almost as simple as the one for conjugate models,
differing only by a term involving the neural network likelihood function.
Natural gradients are invariant to smooth invertible reparameterizations of the
variational family \citep{amari1998natural,amari2007methods} and provide
effective second-order optimization updates
\citep{martens2015optimizing,martens2015perspectives}.

The gradients of the objective with respect to the other variational
parameters, namely $\nabla_{\vareta_\obsparam}
\Lsvae(\vareta_\theta, \vareta_\obsparam, \phi)$ and~$\nabla_{\phi} \Lsvae(\vareta_\theta, \vareta_\obsparam, \phi)$,
can be computed using the reparameterization trick and standard automatic
differentiation techniques.
To isolate the terms that require the reparameterization trick, we rearrange
the objective as
\begin{equation}
  \Lsvae(\vareta_\theta, \vareta_\obsparam, \phi) = \E_{q(\obsparam) \opt q(x)} \log p(y \given x, \obsparam) - \KL(q(\theta) \opt q(x) \, \| \, p(\theta, x)) - \KL(q(\obsparam) \,\|\, p(\obsparam)).
    \label{eq:svae_objective_with_kl}
\end{equation}
The KL divergence terms are between members of the same tractable
exponential families.
An unbiased estimate of the first term can be computed by sampling $\hat
x \sim \opt q(x)$ and $\hat \obsparam \sim q(\obsparam)$ and computing
$\nabla_{\vareta_\obsparam, \phi} \log p(y \given \hat x, \hat
\obsparam)$ with automatic differentiation.

\section{Related work}
In addition to the papers already referenced, there are several recent papers to
which this work is related.

The two papers closest to this work are \citet{krishnan2015deep} and
\citet{archer2015black}.
In \citet{krishnan2015deep} the authors consider combining variational
autoencoders with continuous state-space models, emphasizing the relationship
to linear dynamical systems (also called Kalman filter models).
They primarily focus on nonlinear dynamics and an RNN-based variational family,
as well as allowing control inputs.
However, the approach does not extend to general graphical models or discrete
latent variables.
It also does not leverage natural gradients or exact inference subroutines.

In \citet{archer2015black} the authors also consider the problem of variational
inference in general continuous state space models but focus on using a
structured Gaussian variational family without considering parameter learning.
As with \citet{krishnan2015deep}, this approach does not include discrete latent
variables (or any latent variables other than the continuous states).
However, the method they develop could be used with an SVAE to handle inference
with nonlinear dynamics.

In addition, both \citet{gregor2015draw} and \citet{chung2015recurrent}
extend the variational autoencoder framework to sequential models, though they
focus on RNNs rather than probabilistic graphical models.
\todo{cite deep sigmoid}
\todo{cite embed2control}
\todo{cite percy 2008}
\todo{cite sontag dkf}

Finally, there is much related work on handling nonconjugate model terms in mean
field variational inference.
In \citet{khan2015kullback} and \citet{khan2016faster} the authors present a
general scheme that is able to exploit conjugate exponential family
structure while also handling arbitrary nonconjugate model factors, including
the nonconjugate observation models we consider here.
In particular, they propose using a proximal gradient framework and splitting
the variational inference objective into a difficult term to be linearized
(with respect to mean parameters) and a tractable concave term, so that the
resulting proximal gradient update is easy to compute, just like in a fully
conjugate model.
In \citet{knowles2011ncvmp}, the authors propose performing natural gradient
descent with respect to natural parameters on each of the variational factors
in turn, and they focus on approximating expectations of nonconjugate energy
terms in the objective with model-specific lower-bounds (rather than estimating
them with generic Monte Carlo).
As in conjugate SVI \citep{hoffman2013stochastic}, they observe that, on
conjugate factors and with an undamped update (i.e.~a unit step size), the
natural gradient update reduces to the standard conjugate mean field update.

In contrast to the approaches of \citet{khan2015kullback},
\citet{khan2016faster}, and \citet{knowles2011ncvmp}, rather than linearizing
intractable terms around the current iterate, in this work we handle
intractable terms via recognition networks and amoritized inference (and the
remaining tractable objective terms are multi-concave in general, analogous to
SVI \citep{hoffman2013stochastic}).
That is, we use parametric function approximators to learn to condition on
evidence in a conjugate form.
We expect these approaches to handling nonconjugate objective terms
may be complementary, and the best choice may be situation-dependent.
For models with local latent variables and datasets where minibatch-based
updating is important, using inference networks to compute local variational
parameters in a fixed-depth circuit (as in the VAE
\citep{kingma2013autoencoding,rezende2014stochastic}) or optimizing out the
local variational factors using fast conjugate updates (as in conjugate SVI
\citep{hoffman2013stochastic}) can be advantageous because in both cases local
variational parameters for the entire dataset need not be maintained across
updates.
The SVAE we propose here is a way to combine the inference network and
conjugate SVI approaches.

\section{Experiments}
\label{sec:experiments}

We apply the SVAE to both synthetic and real data and demonstrate its ability
to learn feature representations and latent structure.
Code is available at \href{https://github.com/mattjj/svae}{\texttt{github.com/mattjj/svae}}.

\subsection{LDS SVAE for modeling synthetic data}
Consider a sequence of 1D images representing a dot bouncing from one side of
the image to the other, as shown at the top of Fig.~\ref{fig:dots}.
We use an LDS SVAE to find a low-dimensional latent state space
representation along with a nonlinear image model.
The model is able to represent the image accurately and to make long-term
predictions with uncertainty. See supplementals for details.

\begin{figure*}[tb]
  \centering
  \begin{subfigure}{0.475\textwidth}
    \centering
    \includegraphics[width=\textwidth]{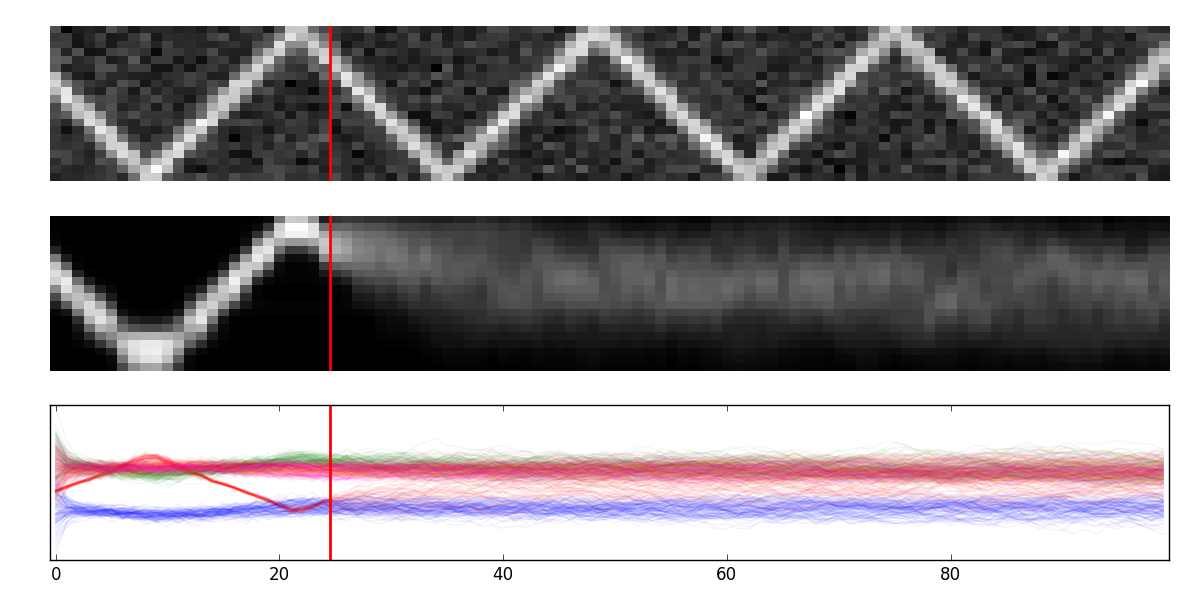}
    \caption{Predictions after 200 training steps.}
    \label{fig:dot_1}
  \end{subfigure}
  \begin{subfigure}{0.475\textwidth}
    \centering
    \includegraphics[width=\textwidth]{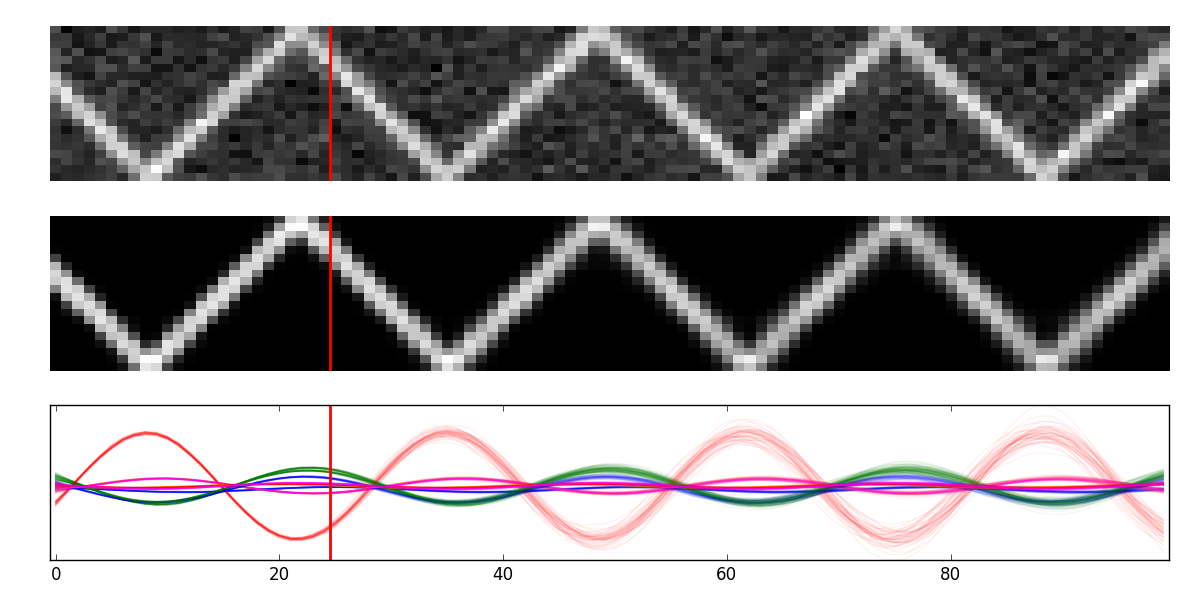}
    \caption{Predictions after 1100 training steps.}
    \label{fig:dot_2}
  \end{subfigure}
  \caption{
  Predictions from an LDS SVAE fit to 1D dot image data at two stages of training.
  The top panel shows an
  example sequence with time on the horizontal axis. The middle panel
  shows the noiseless predictions given data up to the vertical
  line, while the bottom panel shows the latent states.
}
  \label{fig:dots}
\end{figure*}

\begin{figure}
  \centering
  \begin{subfigure}[t]{0.55\textwidth}
    \centering
    \includegraphics[width=0.8\textwidth, clip, trim=0cm 0.5cm 0.1cm 0.1cm]{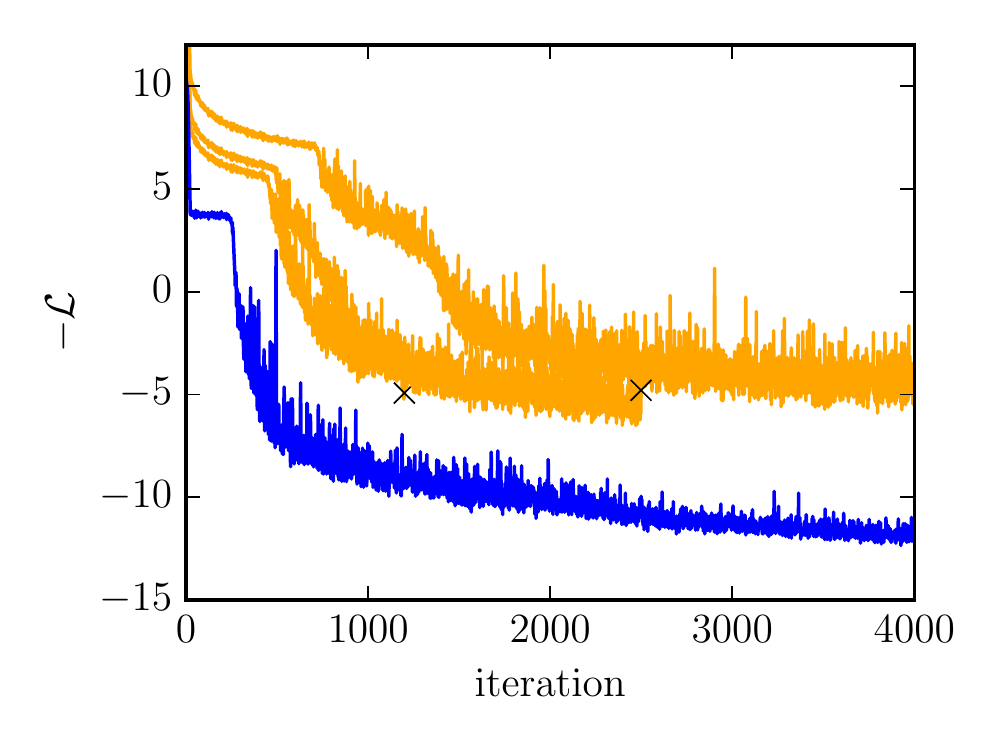}
    \caption{Natural (blue) and standard (orange) gradient updates.}
    \label{fig:dot_gradients}
  \end{subfigure}
  \begin{subfigure}[t]{0.43\textwidth}
    \centering
    \includegraphics[width=0.7\textwidth]{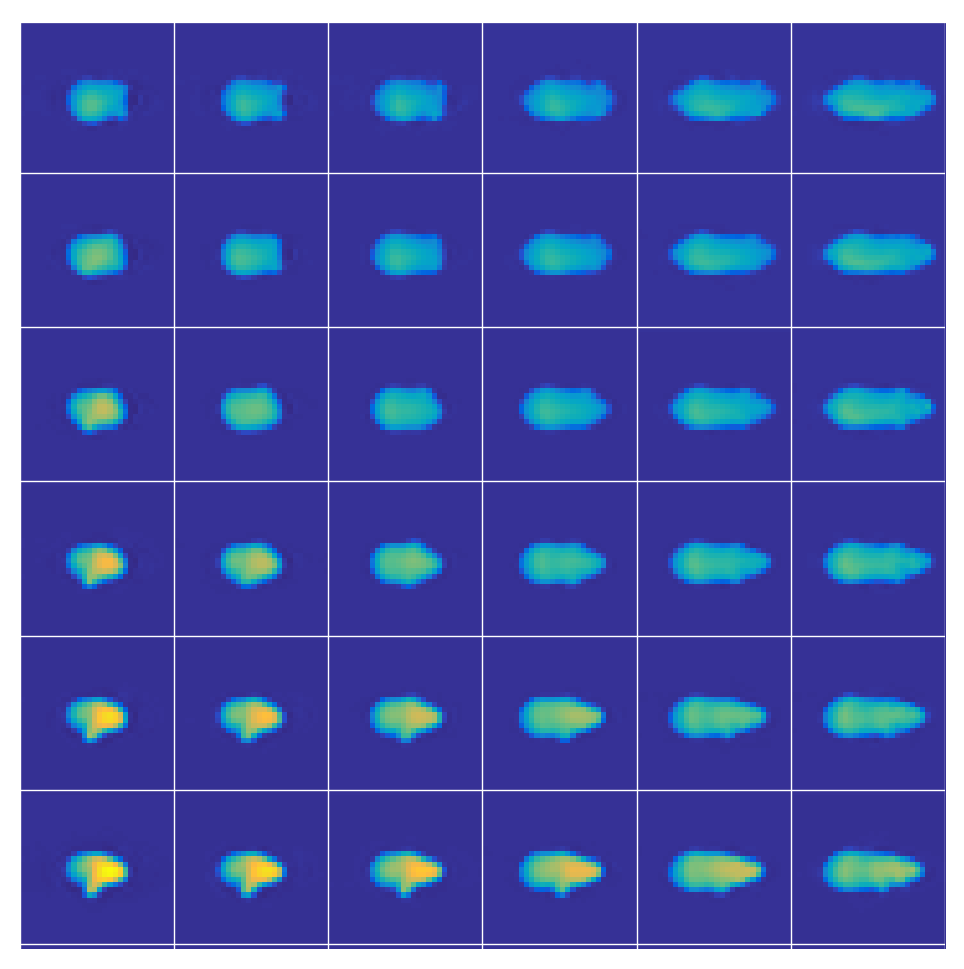}
    \caption{Subspace of learned observation model.}
    \label{fig:mouse_vae}
  \end{subfigure}
  \caption{Experimental results from LDS SVAE models on synthetic data and real mouse data.}
\end{figure}

This experiment also demonstrates the optimization advantages that can be
provided by the natural gradient updates.
In Fig.~\ref{fig:dot_gradients} we compare natural gradient updates with
standard gradient updates at three different learning rates.
The natural gradient algorithm not only learns much faster but also is less
dependent on parameterization details: while the natural gradient update used
an untuned stepsize of 0.1, the standard gradient dynamics at step sizes of
both 0.1 and 0.05 resulted in some matrix parameters to be updated to
indefinite values.

\subsection{LDS SVAE for modeling video}

We also apply an LDS SVAE to model depth video recordings of mouse behavior.
We use the dataset from \citet{wiltschko2015mapping} in which a
mouse is recorded from above using a Microsoft Kinect.
We used a subset consisting of 8 recordings, each of a distinct
mouse, 20 minutes long at 30 frames per second, for a total of
288000 video fames downsampled to $30 \times 30$ pixels.

We use MLP observation and recognition models with two hidden layers of
200 units each and a 10D latent space.
Fig.~\ref{fig:mouse_vae} shows images corresponding to a regular grid on a
random 2D subspace of the latent space, illustrating that the learned image
manifold accurately captures smooth variation in the mouse's body pose.
Fig.~\ref{fig:svae_lds_predictions} shows predictions from the model paired
with real data.

\begin{figure}[tb]
  \centering
  \begin{subfigure}{\textwidth}
    \centering
    \includegraphics[width=\textwidth]{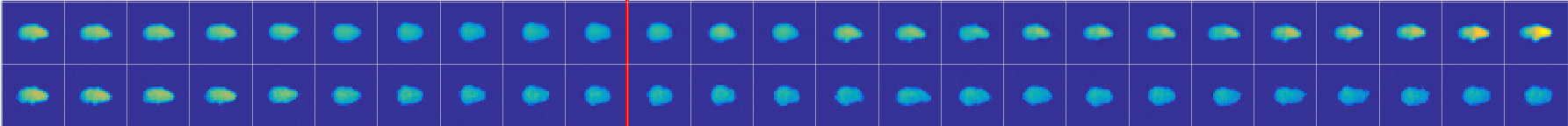}
  \end{subfigure}
  \\
  \begin{subfigure}{\textwidth}
    \centering
    \includegraphics[width=\textwidth]{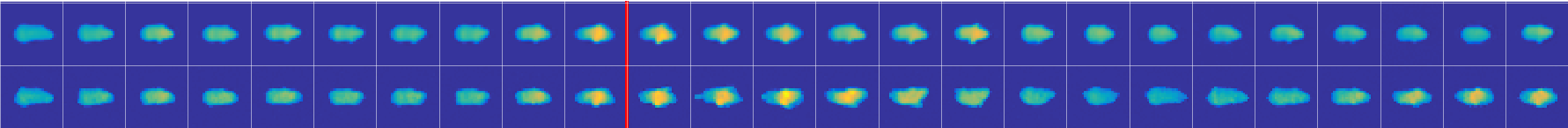}
  \end{subfigure}
  \\
  \begin{subfigure}{\textwidth}
    \centering
    \includegraphics[width=\textwidth]{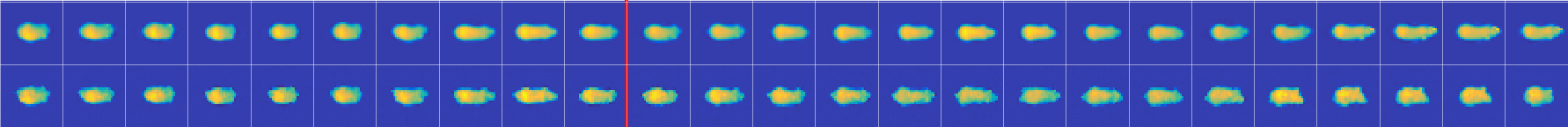}
  \end{subfigure}
  \\
  \caption{
  Predictions from an LDS SVAE fit to depth video.
  In each panel, the top is a sampled prediction and the
  bottom is real data. The model is conditioned on observations to the left of the line.
  }
  \label{fig:svae_lds_predictions}
\end{figure}

\subsection{SLDS SVAE for parsing behavior}

Finally, because the LDS SVAE can accurately represent the depth video over
short timescales, we apply the latent switching linear dynamical system (SLDS)
model to discover the natural units of behavior.
Fig.~\ref{fig:slds_svae_syllables} and Fig.~\ref{fig:slds_svae_syllables_2} in
the appendix show some of the discrete states that arise from fitting an SLDS
SVAE with 30 discrete states to the depth video data.
The discrete states that emerge show a natural clustering of short-timescale
patterns into behavioral units. See the supplementals for more.

\begin{figure}[tb]
  \centering
  \begin{subfigure}{0.85\textwidth}
    \centering
    \includegraphics[width=\textwidth]{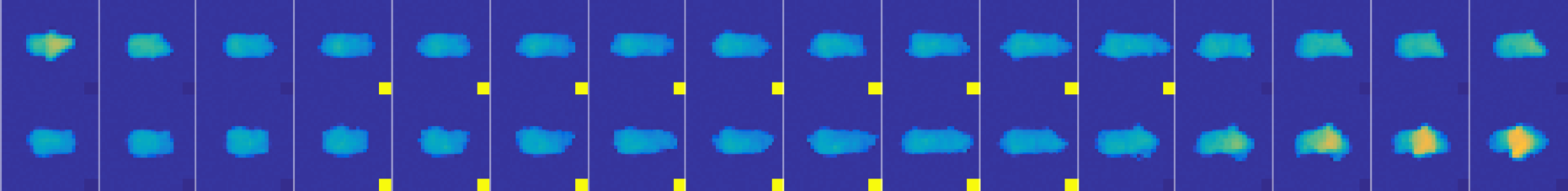}
    \caption{Extension into running}
  \end{subfigure}
  \\
  \begin{subfigure}{0.85\textwidth}
    \centering
    \includegraphics[width=\textwidth]{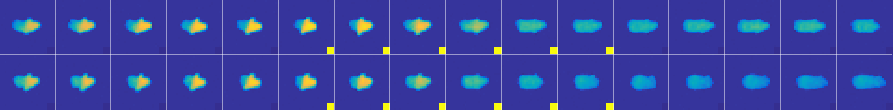}
    \caption{Fall from rear}
  \end{subfigure}
  \caption{Examples of behavior states inferred from depth video.
    Each frame sequence is padded on both sides, with a square in the
  lower-right of a frame depicting when the state is the most probable.}
  \label{fig:slds_svae_syllables}
\end{figure}

\section{Conclusion}
Structured variational autoencoders provide a general framework that combines
some of the strengths of probabilistic graphical models and deep learning
methods. In particular, they use graphical models both to give models rich
latent representations and to enable fast variational inference with CRF-like
structured approximating distributions. To complement these structured
representations, SVAEs use neural networks to produce not only flexible
nonlinear observation models but also fast recognition networks that map
observations to conjugate graphical model potentials.



\clearpage
\printbibliography%

\clearpage
\appendix

\section{Optimization}

In this section we fix our notation for gradients and establish some basic
definitions and results that we use in the sequel.

\subsection{Gradient notation}

We follow the notation in \citet[A.5]{bertsekas1999nonlinear}.
In particular, if $f: \R^n \to \R^m$ is a continuously differentiable function,
we define the gradient matrix of $f$, denoted $\nabla f(x)$, to be the $n
\times m$ matrix in which the $i$th column is the gradient $\nabla f_i(x)$ of
$f_i$, the $i$th coordinate function of $f$, for $i=1,2,\ldots,m$.
That is,
\begin{align}
    \nabla f(x) = \begin{bmatrix} \nabla f_1(x) & \cdots & \nabla f_m(x) \end{bmatrix}.
\end{align}
The transpose of $\nabla f$ is the Jacobian matrix of $f$, in which the $ij$th
entry is the function $\partial f_i / \partial x_j$.

If $f: \R^n \to \R$ is continuously differentiable with continuously
differentiable partial derivatives, then we define the Hessian matrix of $f$,
denoted $\nabla^2 f$, to be the matrix in which the $ij$th entry is the
function $\partial^2 f / \partial x_i \partial x_j$.

Finally, if $f: \R^n \times \R^m \to \R$ is a function of $(x, y)$ with $x \in
R^n$ and $y \in \R^m$, we write
\begin{gather}
\nabla_x f(x, y) = \begin{pmatrix} \frac{\partial f(x, y)}{\partial x_1} \\ \vdots \\ \frac{\partial f(x, y)}{\partial x_m} \end{pmatrix},
    \qquad
    \nabla_y f(x, y) = \begin{pmatrix} \frac{\partial f(x,y)}{\partial y_1} \\ \vdots \\ \frac{\partial f(x,y)}{\partial y_n} \end{pmatrix}
    \\
    \nabla_{xx}^2 f(x,y) = \left( \frac{\partial^2 f(x,y)}{\partial x_i \partial x_j} \right),
    \qquad
    \nabla_{yy}^2 f(x,y) = \left( \frac{\partial^2 f(x,y)}{\partial y_i \partial y_j} \right),
    \\
    \nabla_{xy}^2 f(x,y) = \left( \frac{\partial^2 f(x,y)}{\partial x_i \partial y_j} \right).
\end{gather}

\subsection{Local and partial optimizers}

In this section we state the definitions of local partial optimizer and
necessary conditions for optimality that we use in the sequel.


\begin{definition}[Partial optimizer, local partial optimizer]
    Let $f: \R^n \times \R^m \to \R$ be an objective function to be maximized.
    For a fixed $x \in \R^n$, we call a point $\opt y \in \R^m$ an
    \emph{unconstrained partial optimizer} of $f$ given $x$ if
    \begin{align}
        f(x, y) \leq f(x, \opt y) \quad \forall \, y \in \R^m
    \end{align}
    and we call $\opt y$ an \emph{unconstrained local partial optimizer} of $f$
    given $x$ if there exists an $\epsilon > 0$ such that
    \begin{align}
        f(x, y) \leq f(x, \opt y) \quad \forall \, y \textup{ with } \| y - \opt y \| < \epsilon,
    \end{align}
    where $\| \, \cdot \, \|$ is any vector norm.
\end{definition}

\begin{proposition}[Necessary conditions for optimality, Prop.~3.1.1 of \citet{bertsekas1999nonlinear}]
    Let $f: \R^n \times \R^m \to \R$ be continuously differentiable.
    For fixed $x \in \R^n$ if $\opt y \in \R^m$ is an unconstrained local
    partial optimizer for $f$ given $x$ then
    \begin{align}
        \nabla_y f(x, \opt y) = 0.
    \end{align}
    If instead $x$ and $y$ are subject to the constraints $h(x, y) = 0$
    for some continuously differentiable $h: \R^n \times \R^m \to \R^m$
    and $\opt y$ is a constrained local partial optimizer for $f$ given $x$
    with the regularity condition that $\nabla_y h(x, \opt y)$ is full rank,
    then there exists a Lagrange multiplier $\opt \lambda \in \R^m$ such that
    \begin{align}
        \nabla_y f(x, \opt y) + \nabla_y h(x, \opt y) \opt \lambda
        =
        0,
    \end{align}
    and hence the cost gradient $\nabla_y f(x, \opt y)$ is orthogonal to the
    first-order feasible variations in $y$ given by the null space of $\nabla_y
    h(x, \opt y)^\T$.
\end{proposition}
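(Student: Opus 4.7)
The plan is to handle the unconstrained and constrained assertions in turn, reducing each to a standard first-order analysis. For the unconstrained case, I would fix $x$, set $g(y) := f(x, y)$, and use the hypothesis that $\opt y$ is a local maximum of $g$ on $\R^m$. For any direction $d \in \R^m$, the real-valued function $\varphi(t) := g(\opt y + td)$ attains a local maximum at $t = 0$, so differentiability of $g$ together with the chain rule gives $\varphi'(0) = \nabla g(\opt y)^\T d = 0$. Since $d$ is arbitrary, $\nabla_y f(x, \opt y) = \nabla g(\opt y) = 0$.

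For the constrained case, my strategy is first to identify the first-order feasible variations, next to show that $\nabla_y f(x, \opt y)$ is orthogonal to them, and finally to extract a Lagrange multiplier by a range/null-space argument. Concretely, the regularity assumption that $\nabla_y h(x, \opt y)$ is full rank is exactly what is needed to apply the implicit function theorem to $h(x, \cdot)$ and conclude that for every $d$ satisfying $\nabla_y h(x, \opt y)^\T d = 0$ there exists a smooth curve $y: (-\epsilon, \epsilon) \to \R^m$ with $h(x, y(t)) = 0$ for all small $t$, $y(0) = \opt y$, and $y'(0) = d$. Since $\opt y$ is a constrained local partial optimizer, $t \mapsto f(x, y(t))$ has a local maximum at $0$, and differentiating gives $\nabla_y f(x, \opt y)^\T d = 0$ for every such $d$.

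This shows that $\nabla_y f(x, \opt y)$ annihilates the null space of $\nabla_y h(x, \opt y)^\T$, which by the finite-dimensional range/null-space duality equals the orthogonal complement of the range of $\nabla_y h(x, \opt y)$. Hence $\nabla_y f(x, \opt y)$ lies in that range, so there exists $\opt \lambda \in \R^m$ with $\nabla_y f(x, \opt y) = -\nabla_y h(x, \opt y) \opt \lambda$, which is the stated Lagrange condition; the orthogonality statement at the end of the proposition is then just a restatement of the intermediate step.

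The main obstacle I expect is the curve-construction step: one must verify not merely that $h(x, \cdot)^{-1}(0)$ is locally a smooth manifold near $\opt y$, but also that \emph{every} prescribed tangent vector $d$ in the null space of $\nabla_y h(x, \opt y)^\T$ is realized as the velocity of a feasible curve. Once the implicit function theorem is invoked with the right auxiliary parametrization (e.g., complementing $d$ to a basis and solving for the remaining coordinates), the rest is a routine chain-rule computation plus a single appeal to elementary linear algebra.
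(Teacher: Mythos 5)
Your argument is correct, but there is nothing in the paper to compare it against: the proposition is imported verbatim from Prop.~3.1.1 of Bertsekas (1999) and stated without proof. Your unconstrained case is the standard one-dimensional reduction along arbitrary directions, and your constrained case is the textbook proof of the Lagrange multiplier theorem (realize every vector in the null space of $\nabla_y h(x,\opt y)^\T$ as the velocity of a feasible curve via the implicit function theorem, differentiate $f$ along it, then use $\mathrm{null}(A^\T) = \mathrm{range}(A)^\perp$ to extract $\opt\lambda$). One small remark: as literally stated here, $h$ maps into $\R^m$ with $y \in \R^m$, so full rank of $\nabla_y h(x,\opt y)$ makes that null space trivial and the curve-construction step vacuous; your proof still goes through (the multiplier then exists by invertibility alone), and it covers the genuinely interesting case of fewer constraints than variables, which is presumably the intended generality.
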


Note that the regularity condition on the constraints is not needed if the
constraints are linear \citep[Prop.~3.3.7]{bertsekas1999nonlinear}.

For a continuously differentiable function $f: \R^n \to \R$, we say $\opt x$ is
a stationary point of $f$ if $\nabla f(\opt x) = 0$.
For general unconstrained smooth optimization, the limit points of
gradient-based algorithms are guaranteed only to be stationary points of the
objective, not necessarily local optima.
Block coordinate ascent methods, when available, provide slightly stronger
guarantees: not only is every limit point a stationary point of the objective,
in addition each coordinate block is a partial optimizer of the objective.
Note that the objective functions we consider maximizing in the following are
bounded above.

\subsection{Partial optimization and the Implicit Function Theorem}
\label{sec:ift}

Let $f: \R^n \times \R^m \to \R$ be a scalar-valued objective function of two
unconstrained arguments $x \in \R^n$ and $y \in \R^m$, and let $\opt y: \R^n
\to \R^m$ be some function that assigns to each $x \in \R^n$ a value $\opt y(x)
\in \R^m$.
Define the composite function $g: \R^n \to \R$ as
\begin{align}
    g(x) \triangleq f(x, \opt y(x))
\end{align}
and using the chain rule write its gradient as
\begin{align}
    \nabla g(x) &= \nabla_x f(x, \opt y(x)) +  \nabla \opt y(x) \nabla_y f(x, \opt y(x)).
    \label{eq:g_grad}
\end{align}

One choice of the function $\opt y(x)$ is to partially optimize $f$ for any
fixed value of $x$.
For example, assuming that $\argmax_y f(x, y)$ is nonempty for every $x \in
\R^n$, we could choose $\opt y$ to satisfy $\opt y(x) \in \argmax_y f(x, y)$, so that
$g(x) = \max_y f(x,y)$.\footnote{%
For a discussion of differentiability issues when there is more than one
optimizer, i.e.~when $\argmax_y f(x,y)$ has more than one element, see
\citet{danskin1967theory}, \citet[Section 2.4]{fiacco1984introduction}, and
\citet[Chapter 4]{bonnans2000perturbation}.
Here we only consider the sensitivity of local stationary points and assume
differentiability almost everywhere.
}
Similarly, if $\opt y(x)$ is chosen so that $\nabla_y f(x, \opt y(x)) = 0$, which is satisfied
when $\opt y(x)$ is an unconstrained local partial optimizer for $f$ given $x$, then the
expression in Eq.~\eqref{eq:g_grad} can be simplified as in the following
proposition.

\begin{proposition}[Gradients of locally partially optimized objectives]
\label{prop:gradients_of_locally_partial_opt}
    Let $f: \R^n \times \R^m \to \R$ be continuously differentiable, let $\opt
    y$ be a local partial optimizer of $f$ given $x$ such that $y^*(x)$ is
    differentiable, and define $g(x) = f(x, \opt y(x))$.
    Then
    \begin{align}
        \nabla g(x) = \nabla_x f(x, \opt y(x)).
    \end{align}
\end{proposition}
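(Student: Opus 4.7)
The plan is to combine the chain-rule expansion of $\nabla g$ already derived in Eq.~\eqref{eq:g_grad} with the first-order necessary condition for local partial optimality that was stated as Prop.~3.1.1 of \citet{bertsekas1999nonlinear} earlier in the section. The whole argument is essentially one line once these two ingredients are assembled.

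First I would recall the chain-rule identity: since $g(x) = f(x, \opt y(x))$ with $f$ continuously differentiable and $\opt y$ differentiable, we have
\begin{equation}
\nabla g(x) = \nabla_x f(x, \opt y(x)) + \nabla \opt y(x) \, \nabla_y f(x, \opt y(x)),
\end{equation}
which is exactly Eq.~\eqref{eq:g_grad}. The second step is to invoke the hypothesis that $\opt y(x)$ is a local partial optimizer of $f$ given $x$. By the cited necessary condition for unconstrained optimality applied at the fixed value $x$ with partial optimizer $\opt y(x) \in \R^m$, we obtain $\nabla_y f(x, \opt y(x)) = 0$. Substituting this into the chain-rule expression annihilates the implicit-sensitivity term $\nabla \opt y(x) \, \nabla_y f(x, \opt y(x))$, leaving $\nabla g(x) = \nabla_x f(x, \opt y(x))$, as claimed.

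There is essentially no obstacle: the interesting content has been offloaded onto the differentiability assumption on $\opt y$ (which, as the footnote notes, is the subtle point handled in the references on sensitivity analysis) and onto the first-order condition, which is simply quoted. The only thing worth flagging in the write-up is that we are using the \emph{unconstrained} version of the necessary condition, since the proposition does not impose constraints on $y$; this justifies setting $\nabla_y f(x, \opt y(x)) = 0$ rather than appealing to the Lagrange-multiplier form. No additional regularity on $\opt y(x)$ beyond differentiability is needed, because $\nabla \opt y(x)$ is multiplied by a quantity that has already been shown to vanish.
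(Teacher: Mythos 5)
Your argument is correct and is essentially identical to the paper's: both expand $\nabla g$ via the chain rule as in Eq.~\eqref{eq:g_grad} and kill the second term using the first-order necessary condition $\nabla_y f(x, \opt y(x)) = 0$. The only difference is that the paper's proof also notes the regularly-constrained case, where $\nabla_y f(x,\opt y)$ need not vanish but the feasible variation $\nabla \opt y(x)$ is orthogonal to it, so the product term still vanishes; your restriction to the unconstrained case is consistent with the proposition as stated.
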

\begin{proof}
    If $\opt y$ is an unconstrained local partial optimizer of $f$ given $x$ then it
    satisfies $\nabla_y f(x, \opt y) = 0$, and if $\opt y$ is a
    regularly-constrained local partial optimizer then the feasible variation $\nabla
    \opt y(x)$ is orthogonal to the cost gradient $\nabla_y f(x, \opt y)$. In
    both cases the second term in the expression for $\nabla g(x)$ in
    Eq.~\eqref{eq:g_grad} is zero.
\end{proof}

In general, when $\opt y(x)$ is not a stationary point of $f(x, \cdot)$,
to evaluate the gradient $\nabla g(x)$ we need to evaluate $\nabla \opt y(x)$
in Eq.~\eqref{eq:g_grad}.
However, this term may be difficult to compute directly.
The function $\opt y(x)$ may arise implicitly from some system of equations of the
form $h(x, y) = 0$ for some continuously differentiable function $h: \R^n
\times \R^m \to \R^m$.
For example, the value of $y$ may be computed from $x$ and $h$ using a
black-box iterative numerical algorithm.
However, the Implicit Function Theorem provides another means to compute
$\nabla \opt y(x)$ using only the derivatives of $h$ and the value of $\opt y(x)$.

\begin{proposition}[Implicit Function Theorem, Prop.~A.25 of \citet{bertsekas1999nonlinear}]
\label{prop:ift}
    Let $h: \R^n \times \R^m \to \R^m$ be a function and $\bar{x} \in \R^n$ and
    $\bar{y} \in \R^m$ be points such that
    \begin{enumerate}
        \item $h(\bar{x}, \bar{y}) = 0$
        \item $h$ is continuous and has a continuous nonsingular gradient
            matrix $\nabla_y h(x, y)$ in an open set containing $(\bar{x},
            \bar{y})$.
    \end{enumerate}
    Then there exist open sets $S_{\bar{x}} \subseteq \R^n$ and $S_{\bar{y}}
    \subseteq \R^m$ containing $\bar{x}$ and $\bar{y}$, respectively, and a
    continuous function $\opt y: S_{\bar{x}} \to S_{\bar{y}}$ such that $\bar{y} =
    \opt y(x)$ and $h(x, \opt y(x)) = 0$ for all $x \in S_{\bar{x}}$.
    The function $\opt y$ is unique in the sense that if $x \in S_{\bar{x}}$, $y
    \in S_{\bar{y}}$, and $h(x, y) = 0$, then $y = \opt y(x)$. Furthermore, if for
    some $p > 0$, $h$ is $p$ times continuously differentiable, the same is
    true for $\opt y$, and we have
    \begin{align}
        \nabla \opt y(x) = - \nabla_x h \left(x, \opt y(x) \right) \left( \nabla_y h
        \left( x, \opt y(x) \right) \right)^{-1}, \qquad \forall \; x \in
        S_{\bar{x}}.
    \end{align}
\end{proposition}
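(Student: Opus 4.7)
The plan is to use the Banach contraction mapping theorem as the core tool, converting the equation $h(x, y) = 0$ into a fixed-point problem parameterized by $x$, and then to obtain the derivative formula by implicit differentiation of the identity $h(x, \opt y(x)) \equiv 0$.

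First I would set $A \triangleq \nabla_y h(\bar x, \bar y)$, nonsingular by hypothesis, and define for each $x$ near $\bar x$ the auxiliary map $T_x(y) \triangleq y - A^{-1} h(x, y)$. Fixed points of $T_x$ coincide with zeros of $h(x, \cdot)$. Since $\nabla_y T_x(y) = I - A^{-1} \nabla_y h(x, y)$ vanishes at $(\bar x, \bar y)$, continuity of $\nabla_y h$ yields a closed ball $B_r(\bar y)$ and a closed ball $B_\delta(\bar x)$ on which $\|\nabla_y T_x(y)\|_{\mathrm{op}} \leq \tfrac{1}{2}$. The mean value inequality then makes $T_x$ a $(1/2)$-Lipschitz map in $y$, uniformly over $x \in B_\delta(\bar x)$. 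Shrinking $\delta$ further so that $\|T_x(\bar y) - \bar y\| = \|A^{-1} h(x, \bar y)\| \leq r/2$ (possible by continuity of $h$ and $h(\bar x, \bar y) = 0$), I ensure $T_x$ sends $B_r(\bar y)$ into itself. Banach's theorem then produces, for each such $x$, a unique fixed point $\opt y(x) \in B_r(\bar y)$ satisfying $h(x, \opt y(x)) = 0$, and the uniform contraction constant gives the continuity of $x \mapsto \opt y(x)$ together with the claimed uniqueness inside $S_{\bar x} \times S_{\bar y}$.

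For the derivative, I would assume $h \in C^1$ and differentiate $h(x, \opt y(x)) \equiv 0$ by the chain rule, giving $\nabla_x h(x, \opt y(x)) + \nabla \opt y(x) \, \nabla_y h(x, \opt y(x)) = 0$. Since $\nabla_y h(x, \opt y(x))$ remains nonsingular throughout $S_{\bar x} \times S_{\bar y}$ (by continuity, after further shrinking if needed), I can invert it and recover the stated formula $\nabla \opt y(x) = -\nabla_x h(x, \opt y(x)) \, (\nabla_y h(x, \opt y(x)))^{-1}$. For the higher-regularity claim that $\opt y \in C^p$ when $h \in C^p$, the formula expresses $\nabla \opt y$ as a composition of $C^{p-1}$ quantities in $x$ (through $\opt y$ itself), so a straightforward induction on $p$ suffices.

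The main obstacle is justifying actual Fr\'echet differentiability of $\opt y$, rather than merely exhibiting a candidate derivative via the chain rule applied to an identity. The cleanest route is to bound the increment $\opt y(x + \Delta) - \opt y(x)$ directly using the contraction estimate together with a first-order Taylor expansion of $h$ about $(x, \opt y(x))$: the $(1/2)$-Lipschitz property gives a linear bound $\|\opt y(x + \Delta) - \opt y(x)\| = O(\|\Delta\|)$, and a second application of Taylor's theorem then shows the candidate matrix above is indeed the derivative with the correct $o(\|\Delta\|)$ remainder. Everything else in the statement, including the identification of the open sets $S_{\bar x}, S_{\bar y}$ with the contraction neighborhoods and the local uniqueness clause, drops out of the fixed-point construction.
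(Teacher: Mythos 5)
Your proof is correct, but note that the paper does not prove this proposition at all: it is imported verbatim as Prop.~A.25 of \citet{bertsekas1999nonlinear} and used as a black box (its only role is to feed Corollary~\ref{cor:implicit_function_theorem_for_optimization} and the natural-gradient computation in Proposition~\ref{prop:appendix:svae_natural_gradient}). What you have written is essentially the standard textbook argument that the citation points to: reformulate $h(x,y)=0$ as a fixed-point problem for the Newton-like map $T_x(y) = y - A^{-1}h(x,y)$ with $A = \nabla_y h(\bar x,\bar y)$, get a uniform $\tfrac12$-contraction on a product of closed balls, invoke Banach's theorem for existence and local uniqueness, and extract continuity of $x \mapsto \opt y(x)$ from the uniform contraction constant. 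You also correctly identify and close the one genuinely delicate step, which is that differentiating the identity $h(x,\opt y(x))\equiv 0$ only produces a \emph{candidate} for $\nabla \opt y(x)$; your two-stage argument (first the Lipschitz estimate $\|\opt y(x+\Delta)-\opt y(x)\| = O(\|\Delta\|)$ from the contraction, then a first-order Taylor expansion of $h$ to verify the $o(\|\Delta\|)$ remainder) is the right way to upgrade the candidate to an actual Fr\'echet derivative, and the induction on $p$ for higher regularity goes through as you describe. Two cosmetic points: take $S_{\bar x}$ and $S_{\bar y}$ to be the interiors of your closed balls so that the open-set clause of the statement is literally satisfied, and keep in mind that the paper's $\nabla_y h$ is the transpose of the Jacobian (columns are gradients of coordinate functions), which is why the derivative formula has $\nabla_x h$ on the left and the inverse on the right rather than the more common Jacobian ordering.
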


As a special case, the equations $h(x,y) = 0$ may be the first-order
stationary conditions of another unconstrained optimization problem.
That is, the value of $y$ may be chosen by locally partially optimizing the
value of $u(x,y)$ for a function $u: \R^n \times \R^m \to \R$ with no
constraints on $y$, leading to the following corollary.

\begin{corollary}[Implicit Function Theorem for optimization subroutines]
\label{cor:implicit_function_theorem_for_optimization}
    Let $u : \R^n \times \R^m \to \R$ be a twice continuously differentiable function
    such that the choice $h = \nabla_y u$ satisfies the hypotheses of
    Proposition~\ref{prop:ift} at some point $(\bar{x}, \bar{y})$, and define
    $\opt y$ as in Proposition~\ref{prop:ift}.
    Then we have
    \begin{align}
        \nabla \opt y(x) = -\nabla^2_{xy} u \left( x, \opt y(x) \right) \left( \nabla^2_{yy} u
        \left( x, \opt y(x) \right) \right)^{-1}, \qquad \forall \; x \in
        S_{\bar{x}}.
    \end{align}
\end{corollary}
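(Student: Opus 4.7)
The plan is to apply Proposition \ref{prop:ift} directly with the specific choice $h = \nabla_y u$ and then identify the resulting partial gradient matrices of $h$ with the second partial derivatives of $u$. Since $u$ is twice continuously differentiable, $h = \nabla_y u$ is once continuously differentiable, so by hypothesis the assumptions of Proposition \ref{prop:ift} are met at $(\bar{x}, \bar{y})$. The IFT therefore yields open sets $S_{\bar x}, S_{\bar y}$ and a continuously differentiable map $\opt y : S_{\bar x} \to S_{\bar y}$ with $h(x, \opt y(x)) = 0$ on $S_{\bar x}$ and the gradient formula
\begin{equation}
    \nabla \opt y(x) = -\nabla_x h(x, \opt y(x))\,\bigl(\nabla_y h(x, \opt y(x))\bigr)^{-1}.
\end{equation}

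Next I would verify the identification of the two gradient matrices appearing on the right-hand side with $\nabla^2_{xy} u$ and $\nabla^2_{yy} u$, respectively. Since $h_i(x, y) = \partial u(x, y) / \partial y_i$ for $i = 1, \ldots, m$, the Bertsekas gradient-matrix convention makes $\nabla_x h(x, y)$ the $n \times m$ matrix whose $(j, i)$ entry is $\partial h_i / \partial x_j = \partial^2 u / \partial x_j \partial y_i$, which matches the definition of $\nabla^2_{xy} u(x, y)$ stated earlier. The same argument with $y$ replacing $x$ identifies $\nabla_y h(x, y)$ with $\nabla^2_{yy} u(x, y)$; this matrix is symmetric by Schwarz's theorem (again using twice continuous differentiability), so no transpose ambiguity arises. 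Substituting these identifications into the IFT formula yields the claimed expression for $\nabla \opt y(x)$ on $S_{\bar x}$.

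There is no real obstacle: the corollary is essentially a bookkeeping exercise confirming that (i) the smoothness assumption on $u$ gives the required smoothness of $h$, (ii) the nonsingularity assumption on $\nabla_y h$ is literally the nonsingularity of the Hessian block $\nabla^2_{yy} u$ at $(\bar x, \bar y)$, and (iii) the column/row conventions for gradient matrices match so that the product on the right-hand side has the correct shape $n \times m$. The only place warranting care is to keep the Bertsekas convention consistent when writing out $\nabla_x h$ and $\nabla_y h$ as second-derivative blocks, which the definitions given in the gradient-notation subsection already make unambiguous.
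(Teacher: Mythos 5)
Your proposal is correct and matches the paper's treatment: the corollary is presented as an immediate consequence of Proposition~\ref{prop:ift} obtained by substituting $h = \nabla_y u$ and identifying $\nabla_x h$ and $\nabla_y h$ with the Hessian blocks $\nabla^2_{xy} u$ and $\nabla^2_{yy} u$ under the stated gradient-matrix conventions, which is exactly what you do. The bookkeeping on smoothness, nonsingularity, and matrix shapes is all handled correctly.
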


\section{Exponential families}
\label{sec:exp_fam}

In this section we set up notation for exponential families and outline some
basic results.
Throughout this section we take all densities to be absolutely continuous with
respect to the appropriate Lebesgue measure (when the underlying set $\X$ is
Euclidean space) or counting measure (when $\X$ is discrete), and
denote the Borel $\sigma$-algebra of a set $\X$ as
$\B(\X)$ (generated by Euclidean and discrete topologies,
respectively).
We assume measurability of all functions as necessary.

Given a statistic function $t_x: \X \to \R^n$ and a base measure
$\nu_\X$, we can define an exponential family of probability densities on
$\X$ relative to $\nu_\X$ and indexed by natural parameter $\sectwoeta_x
\in \R^n$ by
\begin{align}
    p(x \given \sectwoeta_x) \propto \exp \left\{ \langle \sectwoeta_x, \, t_x(x) \rangle \right\},
    \quad \forall \sectwoeta_x \in \R^n,
\end{align}
where $\langle \cdot, \, \cdot \rangle$ is the standard inner product on $\R^n$.
We also define the partition function as
\begin{align}
    Z_x(\sectwoeta_x) \triangleq \int \exp \left\{ \langle \sectwoeta_x, \, t_x(x) \rangle \right\} \nu_\X(dx)
\end{align}
and define $H \subseteq \R^n$ to be the set of all normalizable natural parameters,
\begin{align}
    H \triangleq \left\{ \sectwoeta \in \R^n : Z_x(\sectwoeta) < \infty \right\}.
\end{align}
We can write the normalized probability density as
\begin{align}
    p(x \given \sectwoeta) = \exp \left\{ \langle \sectwoeta_x, \, t_x(x) \rangle - \log Z_x(\sectwoeta_x) \right\}.
    \label{eq:natural_exp_fam}
\end{align}
We say that an exponential family is \emph{regular} if $H$ is open, and
\emph{minimal} if there is no $\sectwoeta \in \R^n \setminus \{0\}$ such that
$\langle \sectwoeta, \, t_x(x) \rangle = 0$ ($\nu_\X$-a.e.).
We assume all families are regular and minimal.%
\footnote{Families that are not minimal, like the density of the categorical
  distribution, can be treated by restricting all algebraic operations to the
  subspace spanned by the statistic, i.e.~to the smallest $V \subset \R^n$ with
  $\range t_x \subseteq V$.}
Finally, when we parameterize the family with some other coordinates $\theta$,
we write the natural parameter as a continuous function $\sectwoeta_x(\theta)$ and
write the density as
\begin{align}
    p(x \given \theta) = \exp \left\{ \langle \sectwoeta_x(\theta), \, t_x(x) \rangle - \log Z_x(\sectwoeta_x(\theta)) \right\}
\end{align}
and take $\Theta = \sectwoeta_x^{-1}(H)$ to be the open set of parameters that correspond
to normalizable densities.
We summarize this notation in the following definition.

\begin{definition}[Exponential family of densities]
\label{def:exp_fam}
    Given a measure space $(\X, \B(\X), \nu_\X)$, a statistic
    function $t_x: \X \to \R^n$, and a natural parameter function
    $\sectwoeta_x: \Theta \to \R^n$, the corresponding \emph{exponential family
    of densities} relative to $\nu_\X$ is
    \begin{align}
        p(x \given \theta) = \exp \left\{ \langle \sectwoeta_x(\theta), \, t_x(x) \rangle - \log Z_x(\sectwoeta_x(\theta)) \right\},
    \end{align}
    where
    \begin{align}
        \log Z_x(\sectwoeta_x) \triangleq \log \int \exp \left\{ \langle \sectwoeta_x, \, t_x(x)
    \rangle \right\} \nu_\X(dx)
    \end{align}
    is the log partition function.
\end{definition}

When we write exponential families of densities for different random variables,
we change the subscripts on the statistic function, natural parameter function,
and log partition function to correspond to the symbol used for the random
variable.
When the corresponding random variable is clear from context,
we drop the subscripts to simplify notation.

The next proposition shows that the log partition function of an exponential
family generates cumulants of the statistic.

\begin{proposition}[Gradients of $\log Z$ and expected statistics]
\label{prop:gradient_of_log_Z}
    The gradient of the log partition function of an exponential family gives
    the expected sufficient statistic,
    \begin{align}
        \nabla \log Z(\sectwoeta) = \E_{p(x \given \sectwoeta)} \left[ t(x) \right],
        \label{eq:grad_logZ}
    \end{align}
    where the expectation is over the random variable $x$ with density $p(x
    \given \sectwoeta)$.
    More generally, the moment generating function of $t(x)$ can be written
    \begin{align}
        \MGF_{t(x)}(s) \triangleq \E_{p(x \given \sectwoeta)} \left[ e^{\langle s, t(x) \rangle} \right] = e^{\log Z(\sectwoeta + s) - \log Z(\sectwoeta)}
    \end{align}
    and so derivatives of $\log Z$ give cumulants of $t(x)$, where the first
    cumulant is the mean and the second and third cumulants are the second and
    third central moments, respectively.
\end{proposition}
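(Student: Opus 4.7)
The plan is to prove the moment generating function identity first, and then extract the gradient identity and the cumulant interpretation as immediate consequences.

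The core computation is direct: starting from the definition,
\begin{equation}
\MGF_{t(x)}(s) = \int e^{\langle s,\, t(x)\rangle} \exp\{\langle \vareta,\, t(x)\rangle - \log Z(\vareta)\}\, \nu_\X(dx) = e^{-\log Z(\vareta)} \int e^{\langle \vareta + s,\, t(x)\rangle}\, \nu_\X(dx),\notag
\end{equation}
and the right-hand integral equals $Z(\vareta + s)$ by the definition of the log partition function. Since $H$ is open by the regularity assumption, $\vareta + s \in H$ for all $s$ in some neighborhood of the origin, so the expression $e^{\log Z(\vareta+s) - \log Z(\vareta)}$ is well-defined and finite there. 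This establishes the MGF formula on a neighborhood of $0$.

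Next I would obtain the gradient identity by differentiating the MGF identity at $s = 0$: the gradient of the left-hand side at $s=0$ is $\E[t(x)]$ (interchange of derivative and integral justified below), while the gradient of the right-hand side at $s = 0$ is $\nabla \log Z(\vareta)$ by the chain rule since $\log Z(\vareta+s) - \log Z(\vareta)$ vanishes at $s=0$. Alternatively, one can prove the gradient identity directly: $\nabla \log Z(\vareta) = Z(\vareta)^{-1} \nabla Z(\vareta) = Z(\vareta)^{-1} \int t(x) e^{\langle \vareta,\, t(x)\rangle}\, \nu_\X(dx) = \E_{p(x\given \vareta)}[t(x)]$. The cumulant interpretation then follows by noting that $K(s) \triangleq \log \MGF_{t(x)}(s) = \log Z(\vareta + s) - \log Z(\vareta)$ is the cumulant generating function of $t(x)$, so higher-order derivatives of $\log Z$ at $\vareta$ agree with derivatives of $K$ at $0$, which are by definition the cumulants of $t(x)$; mean, variance, and third central moment correspond to the first three cumulants by a standard calculation.

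The main obstacle is justifying the interchange of differentiation and integration in passing from $Z(\vareta) = \int e^{\langle \vareta,\, t(x)\rangle}\, \nu_\X(dx)$ to $\nabla Z(\vareta) = \int t(x)\, e^{\langle \vareta,\, t(x)\rangle}\, \nu_\X(dx)$. This is a standard result for regular exponential families: for any $\vareta_0 \in H$ there is a neighborhood $U \subset H$ of $\vareta_0$ and an integrable envelope dominating $\|t(x)\| e^{\langle \vareta,\, t(x)\rangle}$ uniformly for $\vareta \in U$, which follows from the fact that $e^{\langle \vareta,\, t(x)\rangle}$ is integrable in a slightly larger neighborhood (because $H$ is open) together with an elementary bound of the form $\|t(x)\| \le C(\varepsilon) (e^{\varepsilon\|t(x)\|} + 1)$. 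Dominated convergence then justifies the interchange, and the same argument iterated gives smoothness of $\log Z$ to all orders on $H$, validating the cumulant claim. I would cite this as a standard fact about regular exponential families rather than reproducing the envelope construction in detail.
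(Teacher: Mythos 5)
Your proof is correct. The paper states this proposition without proof, treating it as a standard exponential-family fact; your derivation~---~absorbing $e^{\langle s, t(x)\rangle}$ into the natural parameter to get $\MGF_{t(x)}(s) = e^{\log Z(\eta+s)-\log Z(\eta)}$, then reading off the gradient and cumulant claims from the resulting cumulant generating function, with openness of $H$ and a dominating envelope justifying differentiation under the integral~---~is exactly the standard argument one would supply.
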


Given an exponential family of densities on $\X$ as in
Definition~\ref{def:exp_fam}, we can define a related exponential family of
densities on $\Theta$ by defining a statistic function $t_\theta(\theta)$ in
terms of the functions $\sectwoeta_x(\theta)$ and $\log Z_x(\sectwoeta_x(\theta))$.

\begin{definition}[Natural exponential family conjugate prior]
\label{def:conj_prior}
    Given the exponential family $p(x \given \theta)$ of Definition~\ref{def:exp_fam},
    define the statistic function $t_\theta: \Theta \to \R^{n+1}$ as the
    concatenation
    \begin{align}
        t_\theta(\theta) \triangleq \left( \sectwoeta_x(\theta), - \log Z_x(\sectwoeta_x(\theta)) \right),
    \end{align}
    where the first $n$ coordinates of $t_\theta(\theta)$ are given by
    $\sectwoeta_x(\theta)$ and the last coordinate is given by $-\log Z_x(\sectwoeta_x(\theta))$.
    We call the exponential family with statistic $t_\theta(\theta)$ the
    \emph{natural exponential family conjugate prior} to the density $p(x \given
    \theta)$ and write
    \begin{align}
        p(\theta) = \exp \left\{ \langle \sectwoeta_\theta, \, t_\theta(\theta) \rangle - \log Z_\theta(\sectwoeta_\theta) \right\}
    \end{align}
    where $\sectwoeta_\theta \in \R^{n+1}$ and the density is taken relative to some
    measure $\nu_\Theta$ on $(\Theta, \B(\Theta))$.
\end{definition}

Notice that using $t_\theta(\theta)$ we can rewrite the original density $p(x \given \theta)$ as
\begin{align}
    p(x \given \theta) &= \exp \left\{ \langle \sectwoeta_x(\theta), \, t_x(x) \rangle - \log Z_x(\sectwoeta_x(\theta)) \right\}
    \\
    &= \exp \left\{ \langle t_\theta(\theta), \, (t_x(x), 1) \rangle \right\}.
\end{align}
This relationship is useful in Bayesian inference: when the exponential family
$p(x \given \theta)$ is a likelihood function and the family $p(\theta)$ is
used as a prior, the pair enjoy a convenient conjugacy property, as summarized
in the next proposition.

\begin{proposition}[Conjugacy]
\label{prop:conjugate_densities}
    Let the densities $p(x \given \theta)$ and $p(\theta)$ be defined as in
    Definitions~\ref{def:exp_fam} and~\ref{def:conj_prior}, respectively.
    We have the relations
    \begin{align}
        p(\theta, x) &= \exp \left\{ \langle \sectwoeta_\theta + (t_x(x), 1), \, t_\theta(\theta) \rangle - \log Z_\theta(\sectwoeta_\theta) \right\}
        \label{eq:conjugate_joint_density}
        \\
        p(\theta \given x) &= \exp \left\{ \langle \sectwoeta_\theta + (t_x(x), 1), \, t_\theta(\theta) \rangle - \log Z_\theta(\sectwoeta_\theta + (t_x(x), 1)) \right\}
    \end{align}
    and hence in particular the posterior $p(\theta \given x)$ is in the same
    exponential family as $p(\theta)$ with the natural parameter $\sectwoeta_\theta +
    (t_x(x), 1)$.
    Similarly, with multiple likelihood terms $p(x_i \given \theta)$ for
    $i=1,2,\ldots,N$ we have
    \begin{align}
        p(\theta) \prod_{i=1}^N p(x_i \given \theta) &= \exp \left\{ \langle \sectwoeta_\theta + \sum_{i=1}^N (t_x(x_i), 1), \, t_\theta(\theta) \rangle - \log Z_\theta(\sectwoeta_\theta) \right\}.
        \label{eq:conjugacy_multiple_likelihoods}
    \end{align}
\end{proposition}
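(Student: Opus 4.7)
The proof is essentially a direct calculation that unpacks the two exponential-family densities and collects terms, so the plan is to execute that bookkeeping carefully and then read off each claim.

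First I would write down the joint density as $p(\theta,x) = p(\theta) p(x \given \theta)$, substituting the form of $p(\theta)$ from Definition~\ref{def:conj_prior} and using the identity
\begin{equation}
p(x \given \theta) = \exp\left\{ \langle t_\theta(\theta), \, (t_x(x), 1) \rangle \right\}
\notag
\end{equation}
that was derived immediately after Definition~\ref{def:conj_prior} by concatenating $\sectwoeta_x(\theta)$ with $-\log Z_x(\sectwoeta_x(\theta))$ into $t_\theta(\theta)$. Multiplying the two, the exponents add; since the inner product is bilinear and symmetric on $\R^{n+1}$, I can regroup as $\langle \sectwoeta_\theta + (t_x(x), 1), \, t_\theta(\theta) \rangle$, with the $-\log Z_\theta(\sectwoeta_\theta)$ term carried along unchanged. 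This yields Eq.~\eqref{eq:conjugate_joint_density} directly.

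Next, for the posterior, I would compute $p(\theta \given x) = p(\theta, x) / p(x)$ where $p(x) = \int p(\theta, x) \, \nu_\Theta(d\theta)$. Pulling the $x$-dependent piece $(t_x(x),1)$ out of the $\theta$ integral and applying the definition of the log partition function to the shifted natural parameter $\sectwoeta_\theta + (t_x(x),1)$, I get $p(x) = \exp\{\log Z_\theta(\sectwoeta_\theta + (t_x(x),1)) - \log Z_\theta(\sectwoeta_\theta)\}$. Dividing the joint by this marginal, the $-\log Z_\theta(\sectwoeta_\theta)$ terms cancel and I am left with a density in the same exponential family as $p(\theta)$ but with natural parameter $\sectwoeta_\theta + (t_x(x), 1)$, which is exactly the second displayed equation. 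I should briefly check that $\sectwoeta_\theta + (t_x(x),1)$ lies in the normalizable set $H$ so the log partition function is finite; this is guaranteed by the existence of $p(x)$ together with regularity of the family.

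Finally, the multiple-likelihood case is handled by the same manipulation iterated $N$ times: the product $\prod_{i=1}^N p(x_i \given \theta)$ has exponent $\langle t_\theta(\theta), \sum_{i=1}^N (t_x(x_i), 1)\rangle$ by bilinearity, and combining this with $p(\theta)$ gives Eq.~\eqref{eq:conjugacy_multiple_likelihoods}. There is no real obstacle here — the only thing to be careful about is treating $t_\theta(\theta)$ consistently as a single vector in $\R^{n+1}$ so that the inner products align, and noting that the constant $1$ in $(t_x(x),1)$ is precisely what pairs with the $-\log Z_x(\sectwoeta_x(\theta))$ component of $t_\theta(\theta)$, recovering the original likelihood normalizer.
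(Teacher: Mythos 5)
Your proof is correct and matches the paper's intent: the paper states this proposition without an explicit proof, treating it as an immediate consequence of the identity $p(x \given \theta) = \exp\{\langle t_\theta(\theta), (t_x(x),1)\rangle\}$ derived just beforehand, and your argument is exactly that computation (multiply, regroup by bilinearity, and normalize over $\theta$ for the posterior). The added remark that $\sectwoeta_\theta + (t_x(x),1)$ must lie in the normalizable set $H$ is a reasonable extra care the paper leaves implicit.
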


Finally, we give a few more exponential family properties that are useful for
gradient-based optimization algorithms and variational inference.
In particular, we note that the Fisher information matrix of an exponential
family can be computed as the Hessian matrix of its log partition function, and
that the KL divergence between two members of the same exponential family has a
simple expression.

\begin{definition}[Score vector and Fisher information matrix]
    Given a family of densities $p(x \given \theta)$ indexed by a parameter
    $\theta$, the \emph{score} vector $v(x, \theta)$ is the gradient of the log
    density with respect to the parameter,
    \begin{align}
        v(x, \theta) \triangleq \nabla_\theta \log p(x \given \theta),
    \end{align}
    and the \emph{Fisher information matrix} for the parameter $\theta$ is the
    covariance of the score,
    \begin{align}
        I(\theta) \triangleq \E \left[ v(x, \theta) v(x, \theta)^\T \right],
    \end{align}
    where the expectation is taken over the random variable $x$ with density
    $p(x \given \theta)$, and where we have used the identity $\E[v(x, \theta)]
    = 0$.
\end{definition}

\begin{proposition}[Score and Fisher information for exponential families]
    Given an exponential family of densities $p(x \given \sectwoeta)$ indexed by the
    natural parameter $\sectwoeta$, as in Eq.~\eqref{eq:natural_exp_fam}, the score
    with respect to the natural parameter is given by
    \begin{align}
        v(x, \sectwoeta) = \nabla_\sectwoeta \log p(x \given \sectwoeta) = t(x) - \nabla \log Z(\sectwoeta)
    \end{align}
    and the Fisher information matrix is given by
    \begin{align}
        I(\sectwoeta) = \nabla^2 \log Z(\sectwoeta).
    \end{align}
\end{proposition}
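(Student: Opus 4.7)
The plan is to derive both identities by direct calculation from the definition of the exponential family density in Eq.~\eqref{eq:natural_exp_fam}, invoking Proposition~\ref{prop:gradient_of_log_Z} to identify derivatives of $\log Z$ with cumulants of the statistic.

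For the score, I would simply differentiate the log-density
\begin{equation}
\log p(x \given \sectwoeta) = \langle \sectwoeta, \, t(x) \rangle - \log Z(\sectwoeta)
\notag
\end{equation}
with respect to $\sectwoeta$. Since $t(x)$ does not depend on $\sectwoeta$ and $\nabla_\sectwoeta \langle \sectwoeta, t(x) \rangle = t(x)$, the first claim $v(x,\sectwoeta) = t(x) - \nabla \log Z(\sectwoeta)$ drops out immediately. This step is one line and presents no difficulty.

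For the Fisher information, I would combine the score identity with the fact (noted in the definition) that $\E[v(x,\sectwoeta)] = 0$. Substituting the expression for the score gives
\begin{equation}
I(\sectwoeta) = \E\bigl[(t(x) - \nabla \log Z(\sectwoeta))(t(x) - \nabla \log Z(\sectwoeta))^\T\bigr] = \mathrm{Cov}[t(x)],
\notag
\end{equation}
where the last equality uses $\nabla \log Z(\sectwoeta) = \E[t(x)]$ from Proposition~\ref{prop:gradient_of_log_Z}. That same proposition states that the second cumulant of $t(x)$ is obtained from the Hessian of $\log Z$ (since $\log Z(\sectwoeta + s) - \log Z(\sectwoeta)$ is the cumulant generating function of $t(x)$), so $\mathrm{Cov}[t(x)] = \nabla^2 \log Z(\sectwoeta)$, giving $I(\sectwoeta) = \nabla^2 \log Z(\sectwoeta)$.

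There is essentially no hard step here: everything reduces to the differentiation of a sum of an affine term and $-\log Z(\sectwoeta)$, together with the already-established cumulant-generating property of $\log Z$. The only subtlety is that interchanging differentiation with the integral defining $Z$ requires regularity of the exponential family, which is already assumed (the family is regular, so $H$ is open, and dominated convergence applies on its interior); I would mention this once in passing but would not belabor it, since it is the standard justification underlying Proposition~\ref{prop:gradient_of_log_Z} itself.
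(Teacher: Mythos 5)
Your derivation is correct and is the standard argument: the paper states this proposition without proof, so there is nothing to compare against, but your two steps (differentiating the log-density to get the score, then combining $\E[v(x,\sectwoeta)]=0$ with the cumulant-generating property of $\log Z$ from Proposition~\ref{prop:gradient_of_log_Z} to identify $I(\sectwoeta)$ with $\mathrm{Cov}[t(x)] = \nabla^2 \log Z(\sectwoeta)$) are exactly the intended justification. Your passing remark about differentiating under the integral on the open set $H$ of a regular family is the right level of care.
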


\begin{proposition}[KL divergence in an exponential family]
\label{prop:exp_fam_kl_divergence}
    Given an exponential family of densities $p(x \given \sectwoeta)$ indexed by the
    natural parameter $\sectwoeta$, as in Eq.~\eqref{eq:natural_exp_fam}, and two
    particular members with natural parameters $\sectwoeta_1$ and $\sectwoeta_2$,
    respectively, the KL divergence from one to the other is
    \begin{align}
        \KL(p(x \given \sectwoeta_1) \; \| \; p(x \given \sectwoeta_2) )
        &\triangleq \E_{p(x \given \sectwoeta_1)} \left[\log \frac{p(x \given \sectwoeta_1)}{p(x \given \sectwoeta_2)} \right]
        \label{eq:exp_fam_kl_divergence}
        \\
        &= \langle \sectwoeta_1 - \sectwoeta_2, \; \nabla \log Z(\sectwoeta_1) \rangle - (\log Z(\sectwoeta_1) - \log Z(\sectwoeta_2)).
        \notag
    \end{align}
\end{proposition}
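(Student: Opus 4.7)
The plan is a direct calculation: unfold the definition of KL divergence, substitute the log density from Eq.~\eqref{eq:natural_exp_fam}, and then invoke Proposition~\ref{prop:gradient_of_log_Z} (gradient of $\log Z$ gives the expected statistic) to rewrite the resulting expectation of $t(x)$ as $\nabla \log Z(\sectwoeta_1)$. Nothing more sophisticated is needed.

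Concretely, I would first write
\begin{equation}
\log \frac{p(x \given \sectwoeta_1)}{p(x \given \sectwoeta_2)}
= \langle \sectwoeta_1 - \sectwoeta_2, \, t(x) \rangle - \bigl(\log Z(\sectwoeta_1) - \log Z(\sectwoeta_2)\bigr),
\end{equation}
which follows by subtracting the two expressions $\log p(x \given \sectwoeta_i) = \langle \sectwoeta_i, t(x) \rangle - \log Z(\sectwoeta_i)$. Then I would take the expectation under $p(x \given \sectwoeta_1)$. Linearity of expectation moves the expectation inside the inner product (acting only on $t(x)$, since $\sectwoeta_1 - \sectwoeta_2$ is constant), and the $\log Z$ terms are also constants that pass through the expectation unchanged.

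Finally I would apply Proposition~\ref{prop:gradient_of_log_Z}, which gives $\E_{p(x \given \sectwoeta_1)}[t(x)] = \nabla \log Z(\sectwoeta_1)$, yielding
\begin{equation}
\KL(p(x \given \sectwoeta_1) \,\|\, p(x \given \sectwoeta_2))
= \langle \sectwoeta_1 - \sectwoeta_2, \, \nabla \log Z(\sectwoeta_1) \rangle - \bigl(\log Z(\sectwoeta_1) - \log Z(\sectwoeta_2)\bigr),
\end{equation}
exactly the claimed identity. There is no real obstacle here: the regularity and minimality assumptions made earlier in the section ensure $\nabla \log Z$ exists and that the expectation of $t(x)$ is finite, so each step is justified. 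The only bookkeeping worth pausing on is confirming that the inner product's linearity was applied correctly and that $\sectwoeta_1, \sectwoeta_2 \in H$ so that both densities are well-defined.
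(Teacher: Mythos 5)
Your proof is correct and is exactly the standard direct computation this proposition rests on: expand the log-density ratio, take the expectation under $p(x \given \sectwoeta_1)$, and apply Proposition~\ref{prop:gradient_of_log_Z} to replace $\E_{p(x \given \sectwoeta_1)}[t(x)]$ with $\nabla \log Z(\sectwoeta_1)$. The paper states this result without proof, and your argument supplies precisely the intended justification, with the right care about $\sectwoeta_1, \sectwoeta_2 \in H$ and regularity.
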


\section{Natural gradient SVI for exponential families}
\label{sec:svi}

In this section we give a derivation of the natural gradient stochastic
variational inference (SVI) method of \citet{hoffman2013stochastic} using our
notation. We extend the algorithm in Section~\ref{sec:svae}.

\subsection{SVI objective}

Let $p(x, y \given \theta)$ be an exponential family and $p(\theta)$ be its
corresponding natural exponential family prior as in
Definitions~\ref{def:exp_fam} and~\ref{def:conj_prior}, writing
\begin{align}
    p(\theta) &= \exp \left\{ \langle \prioreta_\theta, \, t_\theta(\theta) \rangle - \log Z_\theta(\prioreta_\theta) \right\}
    \\
    p(x, y \given \theta) &= \exp \left\{ \langle \prioreta_{xy}(\theta), \, t_{xy}(x, y) \rangle - \log Z_{xy}(\prioreta_{xy}(\theta)) \right\}
    \\
    &= \exp \left\{ \langle t_\theta(\theta), \, (t_{xy}(x, y), 1) \rangle \right\}
    \label{eq:used_conj}
\end{align}
where we have used $t_\theta(\theta) = \left( \prioreta_{xy}(\theta), -\log
Z_{xy}(\prioreta_{xy}(\theta)) \right)$ in Eq.~\eqref{eq:used_conj}.

Given a fixed observation $y$, for any density $q(\theta, x) =
q(\theta)q(x)$ we have
\begin{align}
    \log p(y) &= \E_{q(\theta)q(x)} \! \left[ \log \frac{p(\theta) p(x, y \given \theta)}{q(\theta) q(x)} \right] + \KL(q(\theta)q(x) \;\|\; p(\theta, x \given y))
    \\
    &\geq \E_{q(\theta)q(x)} \! \left[ \log \frac{p(\theta) p(x, y \given \theta)}{q(\theta) q(x)} \right]
\end{align}
where we have used the fact that the KL divergence is always nonnegative.
Therefore to choose $q(\theta)q(x)$ to minimize the KL divergence to the posterior
$p(\theta, x \given y)$ we define the mean field variational inference
objective as
\begin{align}
\label{eq:mean_field_objective}
    \L\left[ \, q(\theta)q(x) \, \right] \triangleq \E_{q(\theta)q(x)} \! \left[ \log \frac{p(\theta) p(x, y \given \theta)}{q(\theta) q(x)} \right]
\end{align}
and the mean field variational inference problem as
\begin{align}
    {\max}_{q(\theta)q(x)} \L\left[ \, q(\theta)q(x) \, \right].
    \label{eq:mean_field_problem}
\end{align}

The following proposition shows that because of the exponential family
conjugacy structure, we can fix the parameterization of $q(\theta)$ and
still optimize over all possible densities without loss of generality.

\begin{proposition}[Optimal form of the global variational factor]
\label{prop:opt_global_factor}
    Given the mean field optimization problem Eq.~\eqref{eq:mean_field_problem}, for
    any fixed $q(x)$ the optimal factor $q(\theta)$ is detetermined
    ($\nu_\Theta$-a.e.) by
    \begin{align}
        q(\theta) \propto \exp \left\{ \langle \prioreta_\theta + \E_{q(x)} \left[ \, (t_{xy}(x, y), 1) \, \right], t_\theta(\theta) \rangle \right\}.
    \end{align}
    In particular, the optimal $q(\theta)$ is in the same exponential family
    as the prior $p(\theta)$.
\end{proposition}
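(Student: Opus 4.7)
The plan is a standard mean-field calculation leveraging the conjugacy identity in Eq.~\eqref{eq:used_conj} and the nonnegativity of KL divergence. First I fix $q(x)$ and isolate the $q(\theta)$-dependent part of the objective; then I use conjugacy to collapse the evidence terms into a linear function of $t_\theta(\theta)$; finally I recognize the resulting functional as minus a KL divergence to an exponential-family density of the claimed form.

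Concretely, I would begin by expanding
\begin{align}
\L[q(\theta)q(x)] = \E_{q(\theta)}\!\bigl[\log p(\theta) + \E_{q(x)}\log p(x, y \given \theta) - \log q(\theta)\bigr] + C,
\end{align}
where $C = -\E_{q(x)}\log q(x)$ does not depend on $q(\theta)$. By Eq.~\eqref{eq:used_conj} we have $\log p(x,y\given\theta) = \langle t_\theta(\theta),\,(t_{xy}(x,y),1)\rangle$, which is linear in $t_\theta(\theta)$, so pushing the $q(x)$-expectation inside the inner product yields
\begin{align}
\E_{q(x)}\log p(x,y\given\theta) = \bigl\langle \E_{q(x)}(t_{xy}(x,y),1),\,t_\theta(\theta)\bigr\rangle.
\end{align}
Combining with $\log p(\theta) = \langle \prioreta_\theta, t_\theta(\theta)\rangle - \log Z_\theta(\prioreta_\theta)$ and setting $\vareta^*_\theta \triangleq \prioreta_\theta + \E_{q(x)}(t_{xy}(x,y),1)$, the $q(\theta)$-dependent part of the objective becomes $\E_{q(\theta)}[\langle \vareta^*_\theta, t_\theta(\theta)\rangle - \log q(\theta)]$, up to terms constant in $q(\theta)$.

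Assuming $\vareta^*_\theta$ lies in the natural parameter space $H$, define $q^*(\theta) \triangleq \exp\{\langle \vareta^*_\theta, t_\theta(\theta)\rangle - \log Z_\theta(\vareta^*_\theta)\}$; this is a normalized member of the same exponential family as $p(\theta)$. Substituting back,
\begin{align}
\L[q(\theta)q(x)] = -\KL(q(\theta)\,\|\,q^*(\theta)) + \text{const in } q(\theta),
\end{align}
so by Gibbs' inequality the unique ($\nu_\Theta$-a.e.) maximizer over $q(\theta)$ is $q(\theta) = q^*(\theta)$, which is exactly the claimed form.

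The main subtlety I expect is verifying that $\vareta^*_\theta \in H$, i.e.\ that the tilted density is normalizable so that the KL reformulation is valid; this is a mild regularity condition that holds whenever $\E_{q(x)}(t_{xy}(x,y),1)$ is finite and the prior is conjugate in the usual sense, and it is typically implicit in the setup. Beyond that, no hard step arises: conjugacy linearizes the log-joint in $t_\theta(\theta)$, and the rest is the standard argument that maximizing $-\KL(q\,\|\,q^*)$ in $q$ picks out $q = q^*$.
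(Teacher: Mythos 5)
Your proposal is correct and follows essentially the same route as the paper: the paper proves the general mean-field coordinate-update lemma (the optimal factor is proportional to $\exp\{\E_{q(\cdot)}\log p\}$, recognized via a $-\KL(q\,\|\,\widetilde p)$ rewriting) and then specializes it using conjugacy, whereas you carry out exactly that KL argument directly for the $q(\theta)$ factor. Your added remark about verifying $\vareta^*_\theta \in H$ is a reasonable regularity point that the paper leaves implicit.
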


This proposition follows immediately from a more general lemma, which we reuse
in the sequel.

\begin{lemma}[Optimizing a mean field factor]
\label{lem:optimizing_conjugate_mean_field_factor}
  Let $p(a, b, c)$ be a joint density and let $q(a)$, $q(b)$, and $q(c)$ be mean
  field factors.
  Consider the mean field variational inference objective
  \begin{align}
    \E_{q(a)q(b)q(c)} \! \left[ \log \frac{p(a,b,c)}{q(a)q(b)q(c)} \right].
  \end{align}
  For fixed $q(a)$ and $q(c)$, the partially optimal factor $\opt q(b)$ over
  all possible densities,
  \begin{align}
    \opt q(b) \triangleq \argmax_{q(b)}
    \E_{q(a)q(b)q(c)} \! \left[ \log \frac{p(a,b,c)}{q(a)q(b)q(c)} \right],
    \label{eq:lemma_objective}
  \end{align}
  is defined (almost everywhere) by
  \begin{align}
    \opt q(b) \propto \exp\left\{ \E_{q(a)q(c)} \log p(a, b, c) \right\}
    .
  \end{align}
  In particular, if $p(c \given b, a)$ is an exponential family with $p(b
  \given a)$ its natural exponential family conjugate prior, and $\log p(b,c \given
  a)$ is a multilinear polynomial in the statistics $t_b(b)$ and $t_c(c)$, written
  \begin{align}
      p(b \given a)
      &=
      \exp \left\{ \langle \prioreta_b(a), \, t_b(b) \rangle - \log Z_b(\prioreta_b(a)) \right\},
      \\
      p(c \given b, a)
      &=
      \exp \left\{ \langle \prioreta_c(b, a), \, t_c(c) \rangle - \log Z_c(\prioreta_c(b, a)) \right\}
      \\
      &=
      \exp \left\{ \langle t_b(b), \prioreta_c(a)^\T (t_c(c), 1) \rangle \right\},
  \end{align}
  for some matrix $\prioreta_c(a)$, then the optimal factor can be written
  \begin{align}
    \opt q(b) &= \exp \left\{ \langle \vareta_b^*, \, t_b(b) \rangle - \log Z_b(\vareta_b^*) \right\},
    &
    \vareta_b^*
    &\triangleq
    \E_{q(a)} \prioreta_b(a) + \E_{q(a) q(c)} \prioreta_c(a)^\T (t_c(c), 1).
  \end{align}
  As a special case, when $c$ is conditionally independent of $b$ given $a$, so
  that $p(c \given b, a) = p(c \given b)$, then
  \begin{align}
    p(c \given b) &= \exp \left\{ \langle t_b(b), (t_c(c), 1) \rangle \right\},
    &
    \eta_b^*
    &\triangleq \E_{q(a)} \prioreta_b(a) + \E_{q(c)} (t_c(c), 1).
  \end{align}
\end{lemma}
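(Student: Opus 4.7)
The plan is to prove the general statement first by the standard KL-divergence identification, and then to specialize to the conjugate exponential family case by direct expansion and collection of terms.

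First, I would fix $q(a)$ and $q(c)$ and isolate the dependence of the objective in \eqref{eq:lemma_objective} on $q(b)$. Expanding the logarithm and discarding terms that do not involve $q(b)$, the objective becomes
\begin{equation}
\E_{q(b)} \! \left[ \E_{q(a)q(c)} \log p(a,b,c) \right] - \E_{q(b)} \log q(b) + C,
\end{equation}
where $C$ collects the $-\E_{q(a)}\log q(a) - \E_{q(c)}\log q(c)$ terms. Define $\tilde p(b) \triangleq \exp\{\E_{q(a)q(c)} \log p(a,b,c)\}/Z$ with $Z$ the corresponding normalizer (assumed finite). Then the $q(b)$-dependent part rewrites as $-\KL(q(b)\|\tilde p(b)) + \log Z$, and since KL is nonnegative and vanishes iff the two densities agree $\nu$-a.e., the unique partial optimizer is $\opt q(b) = \tilde p(b)$, giving the first claim.

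Next, for the conjugate special case, I would substitute the factorization $p(a,b,c) = p(a)\,p(b \given a)\,p(c \given b, a)$ into $\E_{q(a)q(c)} \log p(a,b,c)$. The $\log p(a)$ term is independent of $b$ and absorbs into the normalizer. Using the assumed exponential-family form of $p(b \given a)$, the second contribution is $\langle \E_{q(a)} \prioreta_b(a), \, t_b(b)\rangle$ plus a $b$-independent term. Using the stated multilinearity $\log p(c \given b, a) = \langle t_b(b), \, \prioreta_c(a)^\T (t_c(c), 1)\rangle$ and linearity of expectation, the third contribution becomes $\langle t_b(b), \, \E_{q(a)q(c)}\prioreta_c(a)^\T(t_c(c), 1)\rangle$. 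Collecting the two $b$-dependent pieces identifies $\opt q(b)$ as an exponential family member in the same family as $p(b \given a)$, with natural parameter $\vareta_b^* = \E_{q(a)} \prioreta_b(a) + \E_{q(a)q(c)}\prioreta_c(a)^\T(t_c(c), 1)$, as claimed.

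Finally, the conditionally independent subcase $p(c \given b, a) = p(c \given b)$ follows immediately: the conjugacy relation from Definition~\ref{def:conj_prior} gives $\log p(c \given b) = \langle t_b(b), (t_c(c), 1)\rangle$, so $\prioreta_c(a)$ does not depend on $a$ and is effectively the identity, yielding $\vareta_b^* = \E_{q(a)}\prioreta_b(a) + \E_{q(c)}(t_c(c), 1)$.

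The only nontrivial part is the bookkeeping in the multilinear step: one must verify that under the stated multilinearity hypothesis the integrand is genuinely affine in $t_b(b)$ after averaging over $q(a)q(c)$, so that no $t_b(b)$-independent normalizer depending on $(a,c)$ needs to be propagated. This is immediate from linearity of expectation, but it is the one place a careless reading could produce an extra nuisance term. The implicit finiteness of $Z$ (needed for $\opt q(b)$ to be a proper density) is automatic in the exponential family case, since $\vareta_b^*$ lies in the natural parameter space whenever it is in the convex hull of parameters of the form $\prioreta_b(a) + \prioreta_c(a)^\T(t_c(c),1)$.
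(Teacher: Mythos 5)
Your proposal is correct and follows essentially the same route as the paper's proof: isolate the $q(b)$-dependent terms, recognize the objective as $-\KL(q(b)\,\|\,\tilde p(b))$ plus a constant for $\tilde p(b) \propto \exp\{\E_{q(a)q(c)}\log p(a,b,c)\}$, and then substitute the conjugate exponential family forms to read off $\vareta_b^*$. Your version just spells out the term-collection in the multilinear step more explicitly than the paper, which simply says the rest follows from plugging in the densities.
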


\begin{proof}
  Rewrite the objective in Eq.~\eqref{eq:lemma_objective}, dropping terms that
  are constant with respect to $q(b)$, as
  \begin{align}
    \E_{q(a)q(b)q(c)} \! \left[ \log \frac{p(a, b, c)}{q(b)} \right]
    &=
    \E_{q(b)} \left[ \E_{q(a)q(c)} \log p(a, b, c) - \log q(b) \right]
    \\
    &=
    \E_{q(b)} \left[ \log \exp \E_{q(a)q(c)} \log p(a, b, c) - \log q(b) \right]
    \\
    &=
    -\E_{q(b)} \! \left[ \frac{q(b)}{\widetilde p(b)} \right] + \text{const}
    \\
    &= -\KL(q(b) \,\|\, \widetilde{p}(b)) + \text{const},
  \end{align}
  where we have defined a new density $\widetilde{p}(b) \propto \exp \left\{
  \E_{q(a)q(c)} \log p(a, b, c) \right\}$.
  We can maximize the objective by setting the KL divergence to zero, choosing
  $q(b) \propto \exp \left\{ \E_{q(a)q(c)} \log p(a, b, c) \right\}$.
  The rest follows from plugging in the exponential family densities.
\end{proof}

Proposition~\ref{prop:opt_global_factor} justifies parameterizing the density $q(\theta)$ with variational natural parameters $\vareta_\theta$ as
\begin{align}
    q(\theta) = \exp \left\{ \langle \vareta_\theta, \, t_\theta(\theta) \rangle - \log Z_\theta(\vareta_\theta) \right\}
\end{align}
where the statistic function $t_\theta$ and the log partition
function $\log Z_\theta$ are the same as in the prior family $p(\theta)$.
Using this parameterization, we can define the mean field objective as a
function of the parameters $\vareta_\theta$, partially optimizing over
$q(x)$,
\begin{align}
    \L(\vareta_\theta) \triangleq \max_{q(x)} \E_{q(\theta) q(x)} \! \left[ \log \frac{p(\theta) p(x, y \given \theta)}{q(\theta) q(x)} \right].
    \label{eq:svi_objective}
\end{align}
The partial optimization over $q(x)$ in Eq.~\eqref{eq:svi_objective} should be
read as choosing $q(x)$ to be a local partial optimizer of
Eq.~\eqref{eq:mean_field_objective}; in general, it may be intractable to find
a global partial optimizer, and the results that follow use only first-order stationary
conditions on $q(x)$.
We refer to this objective function, where we locally partially optimize the
mean field objective Eq.~\eqref{eq:mean_field_objective} over $q(x)$, as the SVI objective.

\subsection{Easy natural gradients of the SVI objective}

By again leveraging the conjugate exponential family structure, we can write a
simple expression for the gradient of the SVI objective, and even for its
natural gradient.

\begin{proposition}[Gradient of the SVI objective]
\label{prop:svi_gradient}
    Let the SVI objective $\L(\vareta_\theta)$ be defined as in
    Eq.~\eqref{eq:svi_objective}. Then the gradient $\nabla
    \L(\vareta_\theta)$ is
    \begin{align}
        \nabla \L(\vareta_\theta)
        =
        \left( \nabla^2 \log Z_\theta(\vareta_\theta) \right) \left( \prioreta_\theta + \E_{\opt q(x)} \left[ \, (t_{xy}(x, y), 1) \, \right] - \vareta_\theta \right)
    \end{align}
    where $\opt q(x)$ is a local partial optimizer of
    the mean field objective Eq.~\eqref{eq:mean_field_objective} for fixed
    global variational parameters $\vareta_\theta$.
\end{proposition}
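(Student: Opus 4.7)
The plan is to combine the envelope-type result of Proposition~\ref{prop:gradients_of_locally_partial_opt} with a direct calculation that exploits the cumulant identity $\nabla \log Z_\theta(\vareta_\theta) = \E_{q(\theta)}[t_\theta(\theta)]$ from Proposition~\ref{prop:gradient_of_log_Z}.

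First, I would give $q(x)$ a finite-dimensional parameterization so that Proposition~\ref{prop:gradients_of_locally_partial_opt} can be applied verbatim. Applying Lemma~\ref{lem:optimizing_conjugate_mean_field_factor} to the factor $q(x)$ shows that any partially optimal $q(x)$ must lie in the same exponential family as $p(x \given \theta)$, so it is described by a natural parameter $\vareta_x$. Writing the mean field objective as $\L(\vareta_\theta, \vareta_x)$ and letting $\vareta_x^*(\vareta_\theta)$ denote a local partial optimizer in $\vareta_x$ for fixed $\vareta_\theta$, the SVI objective becomes $\L(\vareta_\theta) = \L(\vareta_\theta, \vareta_x^*(\vareta_\theta))$. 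Since $\vareta_x^*$ satisfies $\nabla_{\vareta_x} \L = 0$, Proposition~\ref{prop:gradients_of_locally_partial_opt} (with differentiability of $\vareta_x^*$ justified by the Implicit Function Theorem applied to the stationarity condition, as in Corollary~\ref{cor:implicit_function_theorem_for_optimization}) yields $\nabla \L(\vareta_\theta) = \nabla_{\vareta_\theta} \L(\vareta_\theta, \vareta_x^*(\vareta_\theta))$, so I can treat $\vareta_x^*$ as fixed for the remainder of the computation.

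Next I would expand $\L(\vareta_\theta, \vareta_x^*)$ using the exponential family expressions for $p(\theta)$, $q(\theta)$, and the conjugate form $\log p(x, y \given \theta) = \langle t_\theta(\theta), (t_{xy}(x,y),1) \rangle$. Taking expectations under $q(\theta)q^*(x)$ and using $\E_{q(\theta)} t_\theta(\theta) = \nabla \log Z_\theta(\vareta_\theta)$ gives, up to terms independent of $\vareta_\theta$,
\begin{equation}
\L(\vareta_\theta) = \langle \prioreta_\theta - \vareta_\theta + \E_{\opt q(x)}[(t_{xy}(x,y),1)],\; \nabla \log Z_\theta(\vareta_\theta) \rangle + \log Z_\theta(\vareta_\theta).
\end{equation}

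Finally I would differentiate via the product rule. The derivative of $\log Z_\theta(\vareta_\theta) - \langle \vareta_\theta, \nabla \log Z_\theta(\vareta_\theta) \rangle$ contributes $-(\nabla^2 \log Z_\theta(\vareta_\theta))\,\vareta_\theta$ after the two $\nabla \log Z_\theta$ terms cancel, while differentiating $\nabla \log Z_\theta(\vareta_\theta)$ in the inner product against the constant vector $\prioreta_\theta + \E_{\opt q(x)}[(t_{xy}(x,y),1)]$ contributes $(\nabla^2 \log Z_\theta(\vareta_\theta))(\prioreta_\theta + \E_{\opt q(x)}[(t_{xy}(x,y),1)])$, using symmetry of the Hessian. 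Summing gives the claimed expression. The one subtle point, and what I expect to be the main obstacle to make fully rigorous, is justifying the envelope step: one needs $\vareta_x^*$ to be a differentiable function of $\vareta_\theta$ on a neighborhood, which is where the Implicit Function Theorem and the nonsingularity of $\nabla^2_{\vareta_x \vareta_x} \L$ at $\vareta_x^*$ (a consequence of the strict concavity of $\L$ in $\vareta_x$ in the exponential family parameterization) enter.
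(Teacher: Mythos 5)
Your proposal is correct and follows essentially the same route as the paper's proof: invoke the envelope-type result of Proposition~\ref{prop:gradients_of_locally_partial_opt} to drop the term involving $\nabla_{\vareta_\theta} \vareta_x^*$, expand the objective via conjugacy into $\langle \prioreta_\theta + \E_{\opt q(x)}[(t_{xy}(x,y),1)] - \vareta_\theta,\, \nabla \log Z_\theta(\vareta_\theta)\rangle + \log Z_\theta(\vareta_\theta) + \mathrm{const}$ using Proposition~\ref{prop:gradient_of_log_Z}, and differentiate with the product rule so the two $\nabla \log Z_\theta$ terms cancel. Your added care about the finite-dimensional parameterization of $q(x)$ and the Implicit Function Theorem justification of differentiability of $\vareta_x^*$ is a reasonable tightening of a step the paper handles more implicitly, but the argument is the same.
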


\begin{proof}
    First, note that because $\opt q(x)$ is a local partial optimizer for
    Eq.~\eqref{eq:mean_field_objective} by
    Proposition~\ref{prop:gradients_of_locally_partial_opt}, we have
    \begin{align}
        \nabla \L(\vareta_\theta) = \nabla_{\vareta_\theta} \E_{q(\theta) \opt q(x)} \! \left[ \log \frac{p(\theta) p(x,y \given \theta)}{q(\theta) \opt q(x)} \right].
    \end{align}
    Next, we use the conjugate exponential family structure and
    Proposition~\ref{prop:conjugate_densities},
    Eq.~\eqref{eq:conjugate_joint_density}, to expand
    \begin{align}
        \E_{q(\theta)\opt q(x)} \! \left[ \log \frac{p(\theta) p(x,y \given \theta)}{q(\theta) \opt q(x)} \right]
        &=
        \langle \prioreta_\theta +\E_{\opt q(x)} (t_{xy}(x, y), 1) - \vareta_\theta, \; \E_{q(\theta)} [ t_\theta(\theta) ] \rangle
        \notag
        \\
        &\qquad - \left( \log Z_\theta(\prioreta_\theta) - \log Z_\theta(\vareta_\theta) \right).
    \end{align}
    Note that we can use Proposition~\ref{prop:gradient_of_log_Z} to replace
    $\E_{q(\theta)}[t_\theta(\theta)]$ with $\nabla \log
    Z_{\theta}(\vareta_\theta)$.
    Differentiating with respect to $\vareta_\theta$ and using the
    product rule, we have
    \begin{align}
        \nabla \L(\vareta_\theta)
        &=
        \nabla^2 \log Z_\theta(\vareta_\theta) \left( \prioreta_\theta +\E_{\opt q(x)}(t_{xy}(x, y), 1) - \vareta_\theta \right)
        \notag
        \\
        &\qquad - \nabla \log Z_\theta(\vareta_\theta) + \nabla \log Z_\theta(\vareta_\theta)
        \\
        &=
        \nabla^2 \log Z_\theta(\vareta_\theta) \left( \prioreta_\theta + \E_{\opt q(x)}(t_{xy}(x, y), 1) - \vareta_\theta \right).
    \end{align}
\end{proof}

As an immediate result of Proposition~\ref{prop:svi_gradient}, the natural
gradient \citep{amari1998natural} defined by
\begin{align}
    \widetilde{\nabla} \L(\vareta_\theta) \triangleq \left( \nabla^2 \log Z_\theta(\vareta_\theta) \right)^{-1} \nabla \L(\vareta_\theta)
\end{align}
has an even simpler expression.

\begin{corollary}[Natural gradient of the SVI objective]
\label{cor:svi_natural_gradient}
    The natural gradient of the SVI objective Eq.~\eqref{eq:svi_objective} is
    \begin{align}
        \widetilde{\nabla} \L(\vareta_\theta)
        =
        \prioreta_\theta + \E_{\opt q(x)} \! \left[ \, (t_{xy}(x, y), 1) \, \right]
        - \vareta_\theta.
        \label{eq:svi_natural_gradient}
    \end{align}
\end{corollary}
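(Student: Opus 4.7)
The plan is to derive the corollary directly from Proposition (Gradient of the SVI objective) by left-multiplying by $(\nabla^2 \log Z_\theta(\vareta_\theta))^{-1}$. Concretely, I would start from the factored expression
\begin{equation}
\nabla \L(\vareta_\theta) = \bigl( \nabla^2 \log Z_\theta(\vareta_\theta) \bigr)\bigl( \prioreta_\theta + \E_{\opt q(x)}[(t_{xy}(x,y),1)] - \vareta_\theta \bigr),
\end{equation}
then substitute this into the definition $\widetilde{\nabla} \L(\vareta_\theta) \triangleq \bigl(\nabla^2 \log Z_\theta(\vareta_\theta)\bigr)^{-1} \nabla \L(\vareta_\theta)$, and observe that the leading Hessian factor cancels against its inverse, leaving exactly the claimed expression. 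So the proof is essentially a one-line cancellation.

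The one point that needs a brief justification is invertibility of $\nabla^2 \log Z_\theta(\vareta_\theta)$, which is required both for the definition of the natural gradient to be well-posed and for the cancellation to be literal (not just formal). This follows from the standing assumption that the prior family $p(\theta)$ is regular and minimal: by Proposition (Gradients of $\log Z$ and expected statistics), $\nabla^2 \log Z_\theta$ is the covariance of $t_\theta(\theta)$ under $q(\theta)$, and minimality promotes this positive semidefinite matrix to a positive definite one. Equivalently, one can appeal to Proposition (Score and Fisher information for exponential families) to identify $\nabla^2 \log Z_\theta(\vareta_\theta)$ with the Fisher information $I(\vareta_\theta)$, so that the Hessian appearing in the gradient formula is exactly the preconditioner $I(\vareta_\theta)$ that defines the natural gradient in the sense of \citet{amari1998natural}.

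I do not expect any real obstacle: the substantive work has already been done in the proof of Proposition (Gradient of the SVI objective), which unpacked the conjugacy structure to produce the factored form, and the role of this corollary is simply to record the cleaner expression that results from applying the Fisher preconditioner. This form is what makes the SVI update practical (one never needs to assemble or invert the Hessian of $\log Z_\theta$) and is also the shape that will be generalized in the SVAE natural-gradient proposition later in the paper.
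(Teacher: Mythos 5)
Your proposal matches the paper's own argument: the corollary is stated there as an immediate consequence of the gradient formula in Proposition~\ref{prop:svi_gradient}, obtained by applying the definition $\widetilde{\nabla}\L(\vareta_\theta) = (\nabla^2 \log Z_\theta(\vareta_\theta))^{-1}\nabla\L(\vareta_\theta)$ and cancelling the Fisher information factor. Your added remark on invertibility via regularity and minimality of the family is a reasonable (and correct) bit of extra care that the paper leaves implicit.
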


The natural gradient corrects for a kind of curvature in the variational family
and is invariant to reparameterization of the family \citep{amari2007methods}.
As a result, natural gradient ascent is effectively a second-order quasi-Newton
optimization algorithm, and using natural gradients can greatly accelerate the
convergence of gradient-based optimization algorithms \citep{martens2015kfac,
martens2015perspectives}.
It is a remarkable consequence of the exponential family structure that natural
gradients of the partially optimized mean field objective with respect to the
global variational parameters can be computed efficiently (without any backward
pass as would be required in generic reverse-mode differentiation).
Indeed, the exponential family conjugacy structure makes the natural gradient
of the SVI objective even easier to compute than the flat gradient.

\subsection{Stochastic natural gradients for large datasets}

The real utility of natural gradient SVI is in its application to large datasets.
Consider the model composed of global latent variables $\theta$, local latent variables $x = \{x_n\}_{n=1}^N$, and data $y = \{y_n\}_{n=1}^N$,
\begin{align}
    p(\theta, x, y) = p(\theta) \prod_{n=1}^N p(x_n, y_n \given \theta),
    \label{eq:svi_model_multiple_observations}
\end{align}
where each $p(x_n, y_n \given \theta)$ is a copy of the same likelihood
function with conjugate prior $p(\theta)$.
For fixed observations $y = \{y_n\}_{n=1}^N$,
let
\begin{align}
    q(\theta, x) = q(\theta) \prod_{n=1}^N q(x_n)
    \label{eq:svi_variational_family_multiple_observations}
\end{align}
be a variational family to
approximate the posterior $p(\theta, x \given y)$ and consider the SVI
objective given by Eq.~\eqref{eq:svi_objective}.
Using Eq.~\eqref{eq:conjugacy_multiple_likelihoods} of
Proposition~\ref{prop:conjugate_densities}, it is straightforward to extend the
natural gradient expression in Corollary~\ref{cor:svi_natural_gradient} to an
unbiased Monte Carlo estimate which samples terms in the sum over data points.

\begin{corollary}[Unbiased Monte Carlo estimate of the SVI natural gradient]
\label{cor:svi_monte_carlo_natural_gradient}
    Using the model and variational family
    \begin{align}
        p(\theta, x, y) &= p(\theta) \prod_{n=1}^N p(x_n, y_n \given \theta),
        &
        q(\theta) q(x) &= q(\theta) \prod_{n=1}^N q(x_n),
    \end{align}
    where $p(\theta)$ and $p(x_n, y_n \given \theta)$ are a conjugate pair of
    exponential families,
    define $\L(\vareta_\theta)$ as in Eq.~\eqref{eq:svi_objective}.
    Let the random index $\hat n$ be sampled from the set $\{1,2, \ldots,
    N\}$ and let $p_n > 0$ be the probability it takes value $n$.
    Then
    \begin{align}
        \widetilde\nabla \L(\vareta_\theta)
        &=
        \E_{\hat n} \left[
        \prioreta_\theta +
        \frac{1}{p_{\hat n}} \E_{\opt q(x_{\hat n})} \!\! \left[ \, (t_{xy}(x_{\hat n}, y_{\hat n}), 1) \, \right]
        - \vareta_\theta
        \right]
        ,
    \end{align}
    where $\opt q(x_{\hat n})$ is a local partial optimizer of $\L$ given
    $q(\theta)$.
\end{corollary}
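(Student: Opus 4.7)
The plan is to combine three ingredients already established in the paper: the conjugacy identity for products of likelihoods (Proposition on Conjugacy, Eq.~\eqref{eq:conjugacy_multiple_likelihoods}), the clean natural-gradient expression from Corollary~\ref{cor:svi_natural_gradient}, and a standard importance-sampling identity to turn a sum over $n$ into an expectation over a sampled index $\hat n$.

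First I would instantiate Corollary~\ref{cor:svi_natural_gradient} for the factored model in Eq.~\eqref{eq:svi_model_multiple_observations}. By Eq.~\eqref{eq:conjugacy_multiple_likelihoods}, the joint $p(\theta)\prod_n p(x_n, y_n \given \theta)$ is in the same exponential family as $p(\theta)$ but with the effective sufficient statistic $\sum_{n=1}^N (t_{xy}(x_n, y_n), 1)$ replacing the single-likelihood statistic used in Corollary~\ref{cor:svi_natural_gradient}. Therefore the same derivation yields
\begin{equation}
    \natnabla \L(\vareta_\theta) = \prioreta_\theta + \sum_{n=1}^N \E_{\opt q(x_n)}\!\left[(t_{xy}(x_n, y_n), 1)\right] - \vareta_\theta,
    \notag
\end{equation}
where I use that the mean-field factorization $q(x)=\prod_n q(x_n)$ makes the expectation of the sum split into a sum of expectations, and where each $\opt q(x_n)$ is a local partial optimizer of $\L$ given $q(\theta)$ (which follows because, conditioned on $\theta$, the local factors decouple).

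Next I would convert the finite sum into an expectation over the random index $\hat n$. For any $(a_n)_{n=1}^N$ and any probability vector $(p_n)_{n=1}^N$ with $p_n>0$, the identity $\sum_n a_n = \E_{\hat n}[a_{\hat n}/p_{\hat n}]$ holds by definition of the expectation. Applying this to $a_n = \E_{\opt q(x_n)}[(t_{xy}(x_n, y_n), 1)]$ and using $\prioreta_\theta - \vareta_\theta = \E_{\hat n}[\prioreta_\theta - \vareta_\theta]$ (constant in $\hat n$), I can pull everything inside a single expectation, yielding exactly Eq.~\eqref{eq:svi_natural_gradient} with the data index replaced by the random sample. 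Unbiasedness is then the content of the importance-sampling identity.

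There is no real obstacle here: the only subtlety worth flagging in the write-up is justifying that $\opt q(x)$ in Corollary~\ref{cor:svi_natural_gradient} applied to the aggregated model corresponds, under the mean-field family \eqref{eq:svi_variational_family_multiple_observations}, to the product $\prod_n \opt q(x_n)$ of per-datum local partial optimizers — this follows because the stationarity condition $\nabla_{q(x)}\widehat{\L}=0$ decouples across $n$ given the block-diagonal structure of $p(x,y\given\theta)=\prod_n p(x_n,y_n\given\theta)$ and $q(x)=\prod_n q(x_n)$. Everything else is linearity of expectation and bookkeeping.
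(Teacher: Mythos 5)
Your proposal is correct and follows essentially the same route as the paper: the paper's proof likewise verifies the importance-sampling identity $\E_{\hat n}[\tfrac{1}{p_{\hat n}}\E_{\opt q(x_{\hat n})}[(t_{xy}(x_{\hat n},y_{\hat n}),1)]] = \sum_n \E_{\opt q(x_n)}[(t_{xy}(x_n,y_n),1)]$ and then appeals to Proposition~\ref{prop:conjugate_densities} together with the argument of Proposition~\ref{prop:svi_gradient} to obtain the summed natural-gradient expression. Your extra remark about the per-datum local factors decoupling is a reasonable piece of bookkeeping the paper leaves implicit, but it does not change the argument.
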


\begin{proof}
    Taking expectation over the index $\hat n$, we have
    \begin{align}
        \E_{\hat n} \! \! \left[ \frac{1}{p_{\hat n}} \E_{\opt q(x_{\hat n})} \! \left[ \, (t_{xy}(x_{\hat n}, y_{\hat n}), 1) \, \right] \right]
        &=
        \sum_{n=1}^N \frac{p_n}{p_n} \E_{\opt q(x_{n})} \! \left[ \, (t_{xy}(x_{n}, y_{n}), 1) \, \right]
        \\
        &=
        \sum_{n=1}^N \E_{\opt q(x_n)} \! \left[ \, (t_{xy}(x_n, y_n), 1) \, \right].
    \end{align}
    The remainder of the proof follows from
    Proposition~\ref{prop:conjugate_densities} and the same argument as in
    Proposition~\ref{prop:svi_gradient}.
\end{proof}

The unbiased stochastic gradient developed in
Corollary~\ref{cor:svi_monte_carlo_natural_gradient} can be used in a scalable
stochastic gradient ascent algorithm.
To simplify notation, in the following sections we drop the notation for
multiple likelihood terms $p(x_n, y_n \given \theta)$ for $n=1,2,\ldots,N$ and
return to working with a single likelihood term $p(x, y \given \theta)$.
The extension to multiple likelihood terms is immediate.

\subsection{Conditinally conjugate models and block updating}
\label{sec:conditionally_conjugate_local_optimization}

The model classes often considered for natural gradient SVI, and the main model
classes we consider here, have additional conjugacy structure in the local
latent variables.
In this section we introduce notation for this extra structure in terms of the
additional local latent variables $z$ and discuss the local block coordinate
optimization that is often performed to compute the factor $\opt q(z) \opt q(x)$ for
use in the natural gradient expression.

Let $p(z, x, y \given \theta)$ be an exponential family and $p(\theta)$ be its
corresponding natural exponential family conjugate prior, writing
\begin{align}
    \label{eq:svi_conditionally_conj_start}
    p(\theta) &= \exp \left\{ \langle \prioreta_\theta, \, t_\theta(\theta) \rangle - \log Z_\theta(\prioreta_\theta) \right\},
    \\
    p(z, x, y \given \theta) &= \exp \left\{ \langle \prioreta_{zxy}(\theta), \, t_{zxy}(z, x, y) \rangle - \log Z_{zxy}(\prioreta_{zxy}(\theta)) \right\}
    \\
    &= \exp \left\{\langle t_\theta(\theta), \, (t_{zxy}(z, x, y), 1) \rangle \right\},
    \label{eq:used_conj2}
\end{align}
where we have used $t_\theta(\theta) = \left( \prioreta_{zxy}(\theta), -\log Z_{zxy}(\prioreta_{zxy}(\theta)) \right)$ in Eq.~\eqref{eq:used_conj2}.
Additionally,
let $t_{zxy}(z, x, y)$ be a multilinear polynomial in the statistics functions $t_x(x)$, $t_y(y)$, and $t_z(z)$,
let $p(z \given \theta)$, $p(x \given z, \theta)$, and $p(y
\given x, z, \theta) = p(y \given x, \theta)$ be exponential families, and let $p(z \given \theta)$ be
a conjugate prior to $p(x \given z, \theta)$ and $p(x \given z, \theta)$ be a
conjugate prior to $p(y \given x, \theta)$, so that
\begin{align}
    p(z \given \theta) &= \exp \left\{ \langle \prioreta_z(\theta), \, t_z(z) \rangle - \log Z_z(\prioreta_z(\theta)) \right\},
    \label{eq:svi_conditionally_conj_mid_1} \noeqref{eq:svi_conditionally_conj_mid_1}
    \\
    p(x \given z, \theta) &= \exp \left\{ \langle \prioreta_x(z, \theta), \, t_x(x) \rangle - \log Z_x(\prioreta_x(z, \theta)) \right\}
    \\
    &= \exp \left\{ \langle t_z(z), \, \prioreta_x(\theta)^\T (t_x(x), 1) \rangle \right\},
    \label{eq:svi_conditionally_conj_mid} \noeqref{eq:svi_conditionally_conj_mid}
    \\
    p(y \given x, \theta) &= \exp \left\{ \langle \prioreta_y(x, \theta), \, t_y(y) \rangle - \log Z_y(\prioreta_y(x, z, \theta)) \right\}
    \\
    &= \exp \left\{ \langle t_x(x), \, \prioreta_y(\theta)^\T (t_y(y), 1) \rangle \right\},
    \label{eq:svi_conditionally_conj_end}
\end{align}
for some matrices $\prioreta_x(\theta)$ and $\prioreta_y(\theta)$.

This model class includes many common models, including the latent Dirichlet
allocation, switching linear dynamical systems with linear-Gaussian emissions,
and mixture models and hidden Markov models with exponential family emissions.
The conditionally conjugate structure is both powerful and restrictive: while
it potentially limits the expressiveness of the model class, it enables
block coordinate optimization with very simple and fast updates, as we
show next.
When conditionally conjugate structure is not present, these local
optimizations can instead be performed with generic gradient-based methods and
automatic differentiation \citep{DuvenaudAdams2015bbsvi}.

\begin{proposition}[Unconstrained block coordinate ascent on $q(z)$ and $q(x)$]
\label{prop:local_block_coordinate_ascent}
    Let $p(\theta, z, x, y)$ be a model as in
    Eqs.~\eqref{eq:svi_conditionally_conj_start}-\eqref{eq:svi_conditionally_conj_end},
    and for fixed data $y$ let $q(\theta)q(z)q(x)$ be a corresponding mean
    field variational family for approximating the posterior $p(\theta, z, x
    \given y)$, with
    \begin{align}
        q(\theta) &= \exp \left\{ \langle \vareta_\theta, \, t_\theta(\theta) \rangle - \log Z_\theta(\vareta_\theta) \right\},
        \\
        q(z) &= \exp \left\{ \langle \vareta_z, \, t_z(z) \rangle - \log Z_z(\vareta_z) \right\},
        \\
        q(x) &= \exp \left\{ \langle \vareta_x, \, t_x(x) \rangle - \log Z_x(\vareta_x) \right\},
    \end{align}
    and with
    the mean field variational inference objective
    \begin{align}
        \L [ \, q(\theta) q(z) q(x) \, ]
        =
        \E_{q(\theta)q(z)q(x)} \! \left[ \log \frac{p(\theta)p(z \given \theta)p(x \given z, \theta) p(y \given x, z, \theta)}{q(\theta)q(z)q(x)} \right].
    \end{align}
    Fixing the other factors, the partial optimizers $\opt q(z)$ and $\opt
    q(x)$ for $\L$ over all possible densities are given by
    \begin{align}
        \opt q(z)
        &\triangleq
        \argmax_{q(z)} \L[ \, q(\theta) q(z) q(x) \, ]
        =
        \exp \left\{ \langle \vareta^*_z, \, t_z(z) \rangle - \log Z_z(\vareta^*_z) \right\},
        \\
        \opt q(x)
        &\triangleq
        \argmax_{q(x)} \L[ \, q(\theta) q(z) q(x) \, ]
        =
        \exp \left\{ \langle \vareta^*_x, \, t_x(x) \rangle - \log Z_x(\vareta^*_x) \right\},
    \end{align}
    with
    \begin{align}
        \vareta^*_z
        &=
        \E_{q(\theta)} \prioreta_z(\theta) + \E_{q(\theta)q(x)} \prioreta_x(\theta)^\T (t_x(x), 1),
        \label{eq:svi_block_coord_update_1}
        \\
        \vareta^*_x
        &=
        \E_{q(\theta)q(z)} \prioreta_x(\theta) t_z(z) + \E_{q(\theta)} \prioreta_y(\theta)^\T (t_y(y), 1).
        \label{eq:svi_block_coord_update}
    \end{align}
\end{proposition}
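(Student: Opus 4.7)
The plan is to reduce the proposition to two applications of Lemma~\ref{lem:optimizing_conjugate_mean_field_factor} (optimizing a mean field factor), invoking its special case for conjugate exponential families. The conditionally conjugate structure imposed in Eqs.~\eqref{eq:svi_conditionally_conj_start}--\eqref{eq:svi_conditionally_conj_end} is precisely what is needed to pattern-match the lemma's hypotheses, so no new inequalities or limit arguments are required; the content is entirely algebraic.

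For the first factor $\opt q(z)$, I would hold $q(\theta)$ and $q(x)$ fixed and isolate the $z$-dependent terms of the log joint density. Only $\log p(z\given\theta)$ and $\log p(x\given z,\theta)$ involve $z$; the term $\log p(y\given x,\theta)$ drops out as a $z$-constant and does not affect the optimization. Identifying $(a,b,c) = (\theta, z, x)$ in the lemma, the assumed conjugacy of $p(z\given\theta)$ relative to $p(x\given z,\theta)$ together with the multilinear form exhibited in the setup places us in the lemma's conjugate special case, which directly yields the claimed natural parameter $\vareta_z^* = \E_{q(\theta)}\prioreta_z(\theta) + \E_{q(\theta)q(x)}\prioreta_x(\theta)^\T (t_x(x), 1)$ and the exponential-family form of $\opt q(z)$.

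For the second factor $\opt q(x)$, I would hold $q(\theta)$ and $q(z)$ fixed; the $x$-dependent terms are now $\log p(x\given z,\theta)$ and $\log p(y\given x,\theta)$. Applying the same lemma with $a=(z,\theta)$, $b=x$, $c=y$, and $q(a) = q(\theta)q(z)$ (with $y$ observed, so the $\E_{q(c)}$ collapses to evaluation at the data), the conjugacy of $p(x\given z,\theta)$ relative to $p(y\given x,\theta)$ combined with the multilinear form in Eq.~\eqref{eq:svi_conditionally_conj_end} again satisfies the hypotheses. The lemma's $\E_{q(a)}\prioreta_b(a)$ term becomes $\E_{q(\theta)q(z)}\prioreta_x(\theta) t_z(z)$ using the multilinear decomposition of $\prioreta_x(z,\theta)$, and its $\E_{q(a)q(c)}\prioreta_c(a)^\T (t_c(c),1)$ term becomes $\E_{q(\theta)}\prioreta_y(\theta)^\T (t_y(y),1)$ since $\prioreta_y$ depends on $\theta$ alone and $y$ is observed.

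The main obstacle is purely bookkeeping: one has to keep straight which coordinates of the multilinear decomposition $\prioreta_x(\theta) t_z(z)$ contribute to the natural-parameter part of the conditional density and which contribute to its log partition function, i.e.~the role of the ``$(\,\cdot\,,1)$'' convention; the analogous care is needed for $\prioreta_y(\theta)^\T (t_y(y),1)$. Because this convention is already baked into the statement of Lemma~\ref{lem:optimizing_conjugate_mean_field_factor}, the proof reduces to verifying that the model's assumed structure matches the lemma's hypotheses twice, which is immediate.
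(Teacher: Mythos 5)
Your proposal is correct and matches the paper's proof, which simply cites Lemma~\ref{lem:optimizing_conjugate_mean_field_factor} together with the conjugacy structure; you have just spelled out the two instantiations of the lemma (for $q(z)$ with $(a,b,c)=(\theta,z,x)$ and for $q(x)$ with $a=(z,\theta)$, $b=x$, $c=y$) that the paper leaves implicit.
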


\begin{proof}
  This proposition is a consequence of
  Lemma~\ref{lem:optimizing_conjugate_mean_field_factor} and the conjugacy
  structure.
\end{proof}

Proposition~\ref{prop:local_block_coordinate_ascent} gives an
efficient block coordinate ascent algorithm:
for fixed $\vareta_\theta$, by alternatively updating $\vareta_z$
and $\vareta_x$ according to Eqs.~\eqref{eq:svi_block_coord_update_1}-\eqref{eq:svi_block_coord_update} we
are guaranteed to converge to a stationary point that is partially optimal in
the parameters of each factor.
In addition, performing each update requires only computing expected sufficient
statistics in the variational factors, which means evaluating $\nabla \log
Z_\theta(\vareta_\theta)$, $\nabla \log Z_z(\vareta_z)$, and
$\nabla \log Z_x(\vareta_x)$, quantities that be computed
anyway in a gradient-based optimization routine.
The block coordinate ascent procedure leveraging this conditional conjugacy
structure is thus not only efficient but also does not require a choice of step
size.

Note in particular that this procedure produces parameters
$\vareta_z^*(\vareta_\theta)$ and $\vareta_x^*(\vareta_\theta)$ that are
partially optimal (and hence stationary) for the objective.
That is, defining the parameterized mean field variational inference
objective as $L(\vareta_\theta, \vareta_z, \vareta_x) =
\L[\, q(\theta)q(z)q(x) \, ]$, for fixed $\vareta_\theta$ the block
coordinate ascent procedure has limit points $\vareta_z^*$ and
$\vareta_x^*$ that satisfy
\begin{align}
  \nabla_{\vareta_z} \L(\vareta_\theta, \vareta_z^*(\vareta_\theta),
  \vareta_x^*(\vareta_\theta))
    &= 0,
    &
  \nabla_{\vareta_x} \L(\vareta_\theta, \vareta_z^*(\vareta_\theta),
  \vareta_x^*(\vareta_\theta))
    &= 0.
\end{align}

\section{The SVAE objective and its gradients}
\label{sec:svae}

In this section we define the SVAE variational lower bound and show how to
efficiently compute unbiased stochastic estimates of its gradients, including
an unbiased estimate of the natural gradient with respect to the
variational parameters with conjugacy structure.
The setup here parallels the setup for natural gradient SVI in
Section~\ref{sec:svi}, but while SVI is restricted to complete-data conjugate
models, here we consider more general likelihood models.


\subsection{SVAE objective}
\label{appendix:svae_objective}

Let $p(x \given \theta)$ be an exponential family and let $p(\theta)$ be its
corresponding natural exponential family conjugate prior, as in
Definitions~\ref{def:exp_fam} and~\ref{def:conj_prior}, writing
\begin{align}
    p(\theta) &= \exp \left\{ \langle \prioreta_\theta, t_\theta(\theta) \rangle - \log Z_\theta(\prioreta_\theta) \right\},
    \label{eq:appendix:svae_densities_start}
    \\
    p(x \given \theta) &= \exp \left\{ \langle \prioreta_{x}(\theta), t_{x}(x) \rangle - \log Z_{x}(\prioreta_{x}(\theta)) \right\}
    \\
    &= \exp \left\{ \langle t_\theta(\theta), (t_{x}(x), 1) \rangle \right\},
    \label{eq:appendix:svae_densities_end}
\end{align}
where we have used $t_\theta(\theta) = \left( \prioreta_{x}(\theta), -\log
Z_{x}(\prioreta_{x}(\theta)) \right)$ in Eq.~\eqref{eq:appendix:svae_densities_end}.
Let $p(y \given x, \gamma)$ be a general family of densities (not necessarily an
exponential family) and let $p(\gamma)$ be an exponential family prior on
its parameters of the form
\begin{align}
    p(\gamma) = \exp \left\{ \langle \prioreta_\gamma, \,
    t_\gamma(\gamma) \rangle - \log Z_\gamma(\prioreta_\gamma) \right\}.
\end{align}

For fixed $y$, consider the mean field family of densities $q(\theta,
\gamma, x) = q(\theta)q(\gamma)q(x)$ and the mean field variational
inference objective
\begin{align}
    \L[ \, q(\theta) q(\gamma) q(x) \, ]
    \triangleq \E_{q(\theta)q(\gamma)q(x)} \! \left[ \log \frac{p(\theta)p(\gamma)p(x \given \theta) p(y \given x, \gamma)}{q(\theta)q(\gamma)q(x)} \right].
    \label{eq:appendix:svae_mean_field_objective}
\end{align}
By the same argument as in Proposition~\ref{prop:opt_global_factor}, without
loss of generality we can take the global factor $q(\theta)$ to be in the same
exponential family as the prior $p(\theta)$, and we denote its natural
parameters by $\vareta_\theta$, writing
\begin{align}
    q(\theta) = \exp \left\{ \langle \vareta_\theta, t_\theta(\theta) \rangle - \log Z_\theta(\vareta_\theta) \right\}.
\end{align}
We restrict $q(\gamma)$ to be in the same exponential family as
$p(\gamma)$ with natural parameters $\vareta_\gamma$, writing
\begin{align}
    q(\gamma) = \exp \left\{ \langle \vareta_\gamma, t_\gamma(\gamma) \rangle - \log Z_\gamma(\vareta_\gamma) \right\}.
\end{align}
Finally, we restrict%
\footnote{The parametric form for $q(x)$ need not be restricted a priori, but
rather without loss of generality given the surrogate objective
Eq.~\eqref{eq:svae_surrogate_objective} and the form of $\psi$ used in
Eq.~\eqref{eq:psi}, the optimal factor $q(x)$ is in the same family as $p(x
\given \theta)$. We treat it as a restriction here so that we can proceed with
more concrete notation.}
$q(x)$ to be in the same exponential family as $p(x \given
\theta)$, writing its natural parameter as $\vareta_x$.
Using these explicit variational natural parameters, we rewrite the mean field variational inference objective in Eq.~\eqref{eq:appendix:svae_mean_field_objective} as
\begin{align}
    \L(\vareta_\theta, \vareta_\gamma, \vareta_x)
    \triangleq \E_{q(\theta)q(\gamma)q(x)} \! \left[ \log \frac{p(\theta)p(\gamma)p(x \given \theta) p(y \given x, \gamma)}{q(\theta)q(\gamma)q(x)} \right].
    \label{eq:appendix:svae_mean_field_objective_parameterized}
\end{align}

To perform efficient optimization in the objective
$\L$ defined in Eq.~\eqref{eq:appendix:svae_mean_field_objective_parameterized}, we consider choosing the
variational parameter $\vareta_x$ as a function of the other
parameters $\vareta_\theta$ and $\vareta_\gamma$.
One natural choice is to set $\vareta_x$ to be a local partial
optimizer of $\L$, as in Section~\ref{sec:svi}.
However, finding a local partial optimizer may be computationally expensive
for general densities $p(y \given x, \gamma)$, and in the large data setting
this expensive optimization would have to be performed for each stochastic
gradient update.
Instead, we choose $\vareta_x$ by optimizing over a
surrogate objective $\widehat \L$, which we design using exponential family
structure to be both easy to optimize and to share curvature properties with
the mean field objective $\L$.
The surrogate objective $\widehat \L$ is
\begin{align}
    \widehat \L(\vareta_\theta, \vareta_\gamma, \vareta_x, \phi)
    &\triangleq \E_{q(\theta)q(\gamma)q(x)} \! \left[ \log \frac{p(\theta)p(\gamma)p(x \given \theta) \exp \{ \psi(x ; y, \phi) \} }{q(\theta) q(\gamma) q(x)} \right]
    \\
    &= \E_{q(\theta)q(x)} \! \left[ \log \frac{p(\theta)p(x \given \theta) \exp \{ \psi(x ; y, \phi) \} }{q(\theta) q(x)} \right] + \mathrm{const},
    \label{eq:svae_surrogate_objective}
\end{align}
where the constant does not depend on $\vareta_x$.
We define the function $\psi(x ; y, \phi)$ to have a form related to the
exponential family $p(x \given \theta)$,
\begin{align}
    \psi(x ; y, \phi) \triangleq \langle r(y ; \phi), \; t_x(x) \rangle,
    \label{eq:psi}
\end{align}
where $\{ r(y ; \phi) \}_{\phi \in \R^m}$ is some class of functions
parameterized by $\phi \in \R^m$, which we assume
only to be continuously differentiable in $\phi$.
We call $r(y; \phi)$ the \emph{recognition model}.
We define $\vareta^*_x(\vareta_\theta, \phi)$ to be a local
partial optimizer of $\widehat \L$,
\begin{align}
    \vareta^*_x(\vareta_\theta, \phi) \triangleq \argmin_{\vareta_x} \widehat \L (\vareta_\theta, \vareta_\gamma, \vareta_x, \phi),
\end{align}
where the notation above should be interpreted as choosing
$\vareta^*_x(\vareta_\theta, \phi)$ to be a local argument of
maximum.
The results to follow rely only on
necessary first-order conditions for unconstrained
local optimality.

Given this choice of function $\vareta^*_x(\vareta_\theta, \phi)$, we define the SVAE objective to be
\begin{align}
    \Lsvae(\vareta_\theta, \vareta_\gamma, \phi) \triangleq \L(\vareta_\theta, \vareta_\gamma, \vareta^*_x(\vareta_\theta, \phi)),
    \label{eq:svae_objective}
\end{align}
where $\L$ is the mean field variational inference defined in
Eq.~\eqref{eq:appendix:svae_mean_field_objective_parameterized}, and we define the SVAE
optimization problem to be
\begin{align}
    {\max}_{\vareta_\theta, \vareta_\gamma, \phi} \Lsvae(\vareta_\theta, \vareta_\gamma, \phi).
\end{align}

We summarize these definitions in the following.

\begin{definition}[SVAE objective]
    Let $\L$ denote the mean field variational inference objective
    \begin{align}
        \L[ \, q(\theta) q(\gamma) q(x) \, ]
        \triangleq
        \E_{q(\theta)q(\gamma)q(x)} \! \left[ \log \frac{p(\theta)p(\gamma)p(x \given \theta) p(y \given x, \gamma)}{q(\theta)q(\gamma)q(x)} \right],
        \label{eq:appendix:svae_mean_field_objective_2}
    \end{align}
    where the densities $p(\theta)$, $p(\gamma)$, and $p(x \given \theta)$
    are exponential families and $p(\theta)$ is the natural exponential family
    conjugate prior to $p(x \given \theta)$,
    as in Eqs.~\eqref{eq:appendix:svae_densities_start}-\eqref{eq:appendix:svae_densities_end}.
    Given a parameterization of the variational factors as
    \begin{gather}
        q(\theta)
        =
        \exp \left\{ \langle \vareta_\theta, \, t_\theta(\theta) \rangle - \log Z_\theta(\vareta_\theta) \right\},
        \quad
        q(\gamma)
        =
        \exp \left\{ \langle \vareta_\gamma, \, t_\gamma(\gamma)
        \rangle - \log Z_\gamma(\vareta_\gamma) \right\},
        \notag
        \\
        q(x)
        =
        \exp \left\{ \langle \vareta_x, \, t_x(x) \rangle - \log Z_x(\vareta_x) \right\},
    \end{gather}
    let
    $\L(\vareta_\theta, \vareta_\gamma, \vareta_x)$
    denote the mean field variational inference objective Eq.~\eqref{eq:appendix:svae_mean_field_objective_2} as a function of these
    variational parameters.
    We define the \emph{SVAE objective} as
    \begin{align}
        \Lsvae(\vareta_\theta, \vareta_\gamma, \phi)
        &\triangleq
        \L(\vareta_\theta, \vareta_\gamma, \vareta_x^*(\vareta_\theta, \phi)),
    \end{align}
    where $\vareta_x^*(\vareta_\theta, \phi)$ is defined
    as a local partial optimizer of the surrogate objective $\widehat\L$,
    \begin{align}
      \vareta_x^*(\vareta_\theta, \phi) \triangleq \argmax_{\vareta_x}
      \widehat \L(\vareta_\theta, \vareta_x^*(\vareta_\theta, \phi), \phi),
    \end{align}
    where the surrogate objective $\widehat \L$ is defined as
    \begin{align}
        \widehat \L(\vareta_\theta, \vareta_x, \phi)
        &\triangleq
        \E_{q(\theta)q(x)} \! \left[ \log \frac{p(\theta)p(x \given \theta) \exp \{ \psi(x ; y, \phi) \} }{q(\theta) q(x)} \right],
        \\
        \psi(x; y, \phi)
        &\triangleq
        \langle r(y; \phi), \, t_x(x) \rangle,
    \end{align}
    for some \emph{recognition model} $r(y ; \phi)$ parameterized by $\phi \in \R^m$.
\end{definition}

The SVAE objective $\Lsvae$ is a lower-bound for the partially-optimized mean
field variational inference objective in the following sense.

\begin{proposition}[The SVAE objective lower-bounds the mean field objective]
\label{prop:appendix:svae_lower_bound}
    The SVAE objective function $\Lsvae$ lower-bounds the partially-optimized
    mean field objective $\L$ in the sense that
    \begin{align}
        \max_{q(x)} \L [ \, q(\theta) q(\gamma) q(x) \, ]
        \geq
        \max_{\vareta_x} \L(\vareta_\theta, \vareta_\gamma, \vareta_x)
        \geq 
        \Lsvae(\vareta_\theta, \vareta_\gamma, \phi)
        \quad
        \forall \phi \in \R^m,
    \end{align}
    for any choice of function class $\{r(y; \phi)\}_{\phi \in \R^m}$ in
    Eq.~\eqref{eq:psi}.
    Furthermore, if there is some $\phi^* \in \R^m$ such that
    \begin{align}
        \psi(x ; y, \phi^*) = \E_{q(\gamma)} \log p(y \given x, \gamma)
    \end{align}
    then the bound can be made tight in the sense that
    \begin{align}
        \max_{q(x)} \L [ \, q(\theta) q(\gamma) q(x) \, ]
        =
        \max_{\vareta_x} \L(\vareta_\theta, \vareta_\gamma, \vareta_x)
        =
        \max_\phi \Lsvae(\vareta_\theta, \vareta_\gamma, \phi).
    \end{align}
\end{proposition}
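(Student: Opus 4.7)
The plan is to establish the three relations in the statement — the first inequality, the second inequality, and the tightness under the condition on $\phi^*$ — one at a time. The bulk of the work is a single algebraic identity that exposes $\widehat\L$ and $\L$ as differing by a term constant in $\vareta_x$; once that identity is in hand, each claim is essentially immediate.

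First I would handle $\max_{q(x)} \L[\,q(\theta)q(\gamma)q(x)\,] \geq \max_{\vareta_x} \L(\vareta_\theta,\vareta_\gamma,\vareta_x)$ by observing that the parameterized family $\{q(x) = \exp\{\langle \vareta_x, t_x(x)\rangle - \log Z_x(\vareta_x)\}\}$ is a subset of the space of all densities in $x$, so maximizing $\L$ over the larger set can only be bigger. For the second inequality $\max_{\vareta_x} \L(\vareta_\theta,\vareta_\gamma,\vareta_x) \geq \Lsvae(\vareta_\theta,\vareta_\gamma,\phi)$, note that by definition $\Lsvae(\vareta_\theta,\vareta_\gamma,\phi) = \L(\vareta_\theta,\vareta_\gamma,\vareta_x^*(\vareta_\theta,\phi))$ is simply the value of $\L(\vareta_\theta,\vareta_\gamma,\cdot)$ at a particular argument, hence bounded above by the maximum over $\vareta_x$; this holds for every $\phi$.

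For the tightness case I would substitute the hypothesis $\psi(x;y,\phi^*) = \E_{q(\gamma)} \log p(y \given x, \gamma)$ into the surrogate and, pulling the $q(\gamma)$ expectation through the linear form, derive the identity
\[
\L(\vareta_\theta,\vareta_\gamma,\vareta_x) \;=\; \widehat\L(\vareta_\theta,\vareta_x,\phi^*) \;-\; \KL(q(\gamma)\,\|\,p(\gamma)).
\]
The KL term is independent of $\vareta_x$, so $\L(\vareta_\theta,\vareta_\gamma,\cdot)$ and $\widehat\L(\vareta_\theta,\cdot,\phi^*)$ share stationary-point structure: the local partial maximizer $\vareta_x^*(\vareta_\theta,\phi^*)$ of $\widehat\L$ is simultaneously a local partial maximizer of $\L$ in $\vareta_x$. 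Concavity of $\widehat\L$ in $\vareta_x$ (inherited from $-\log Z_x$ via the natural-conjugate form, as in Lemma~\ref{lem:optimizing_conjugate_mean_field_factor}) then lets me promote this to equality of the maxima, giving $\Lsvae(\vareta_\theta,\vareta_\gamma,\phi^*) = \max_{\vareta_x} \L(\vareta_\theta,\vareta_\gamma,\vareta_x)$, which combined with the second inequality yields $\max_\phi \Lsvae = \max_{\vareta_x} \L$. To upgrade to the unrestricted maximum on the far left, I would invoke Lemma~\ref{lem:optimizing_conjugate_mean_field_factor} directly: under the hypothesis, the effective likelihood $\exp\{\psi(x;y,\phi^*)\}$ is conjugate to $p(x\given\theta)$, so the partial optimizer of the mean field objective over all densities $q(x)$ automatically lives in the parametric exponential family, and both maxima coincide.

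The main obstacle I anticipate is the gap between the ``local partial optimizer'' appearing in the definition of $\vareta_x^*(\vareta_\theta,\phi)$ and the ``$\max$'' appearing in the statement. The cleanest way to close it is to emphasize that $\widehat\L(\vareta_\theta,\cdot,\phi)$ has exactly the form of a conditionally conjugate mean field objective in $\vareta_x$, hence is concave there, so any local maximizer is a global maximizer; everything else in the proof is then essentially bookkeeping of linearity of expectation and the definition of KL divergence.
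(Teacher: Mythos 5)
Your argument is essentially the paper's: the two inequalities follow from the variational principle (the exponential family is a subfamily of all densities, and $\Lsvae$ evaluates $\L$ at one particular $\vareta_x$), and tightness follows because under the hypothesis on $\phi^*$ the surrogate and true objectives align so that the unrestricted optimizer of Lemma~\ref{lem:optimizing_conjugate_mean_field_factor} lies in the parametric family. Your identity $\L(\vareta_\theta,\vareta_\gamma,\vareta_x) = \widehat\L(\vareta_\theta,\vareta_x,\phi^*) - \KL(q(\gamma)\,\|\,p(\gamma))$ is correct and is a clean way to package the same fact.

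One justification needs repair: $\widehat\L$ is \emph{not} concave in the natural parameter $\vareta_x$ in general. Writing $\widehat\L(\vareta_\theta,\cdot,\phi) = -\KL(q(x)\,\|\,\widetilde p(x)) + \mathrm{const}$ with $\widetilde p$ the exponential family member with natural parameter $\widetilde\eta = \E_{q(\theta)}\prioreta_x(\theta) + r(y;\phi)$, the gradient in $\vareta_x$ is $\nabla^2\log Z_x(\vareta_x)(\widetilde\eta - \vareta_x)$ and the Hessian picks up a third-cumulant term $\nabla^3\log Z_x(\vareta_x)(\widetilde\eta-\vareta_x)$ that can make it indefinite away from the stationary point (the objective is concave in the density, i.e.\ in mean parameters, not in natural parameters). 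The conclusion you need still holds, but for a different reason: since $\nabla^2\log Z_x$ is positive definite for a minimal family, the first-order condition forces $\vareta_x = \widetilde\eta$, i.e.\ there is a \emph{unique} stationary point, and there the KL divergence vanishes, which is its global minimum; hence the local partial optimizer is automatically the global maximizer. With that substitution your proof goes through.
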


\begin{proof}
  The inequalities follow from the variational principle and the definition of
  the SVAE objective $\Lsvae$.
  In particular, by Lemma~\ref{lem:optimizing_conjugate_mean_field_factor}
  the optimal factor over all possible densities is given by
  \begin{align}
    q^{**}\!(x) \propto \exp \left\{ \langle \E_{q(\theta)} \prioreta_x(\theta), \, t_x(x) \rangle + \E_{q(\gamma)} \log p(y \given x, \gamma) \right\},
    \label{eq:svae_optimal_factor}
  \end{align}
  while we restrict the factor $q(x)$ to have a particular exponential family
  form indexed by parameter $\vareta_x$, namely
  $q(x) \propto \exp \left\{ \langle \vareta_x, \, t_x(x) \rangle \right\}$.
  In the definition of $\Lsvae$ we also restrict the parameter
  $\vareta_x$ to be set to $\vareta_x^*(\vareta_\theta,
  \phi)$, a particular function of $\vareta_\theta$ and $\phi$, rather
  than setting it to the value that maximizes the mean field objective $\L$.
  Finally, equality holds when we can set $\phi$ to match the optimal
  $\vareta_x$ and that choice yields the optimal factor given in
  Eq.~\eqref{eq:svae_optimal_factor}.
\end{proof}

Proposition~\ref{prop:appendix:svae_lower_bound} motivates the SVAE optimization
problem: by using gradient-based optimization to maximize
$\Lsvae(\vareta_\theta, \vareta_\gamma, \phi)$ we are
maximizing a lower-bound on the model evidence $\log p(y)$ and
correspondingly minimizing the KL divergence from our variational family to the
target posterior.
Furthermore, it motivates choosing the recognition model function class $\{ r(y;
\phi) \}_{\phi \in \R^m}$ to be as rich as possible.

As we show in the following, choosing
$\vareta^*_x(\vareta_\theta, \phi)$ to be a local partial
optimizer of the surrogate objective $\widehat \L$ provides two significant
computational advantages.
First, it allows us to provide a simple expression for an unbiased estimate of
the natural gradient $\widetilde\nabla_{\vareta_\theta} \Lsvae$, as we
describe next in Section~\ref{sec:svae_natural_gradient}.
Second, it allows $\vareta^*_x(\vareta_\theta, \phi)$ to be
computed efficiently by exploiting exponential family structure, as we show in
Section~\ref{sec:svae_local_factor_optimization}.

\subsection{Estimating the natural gradient \texorpdfstring{$\widetilde{\nabla}_{\vareta_\theta} \Lsvae$}{with respect to the conjugate global variational parameters}}
\label{sec:svae_natural_gradient}

The definition of $\vareta^*_x$ in terms of the surrogate objective
$\widehat \L$ enables computationally efficient ways to estimate
natural gradient with respect to the conjugate global variational parameters,
$\widetilde{\nabla}_{\vareta_\theta} \Lsvae(\vareta_\theta,
\vareta_\gamma, \phi)$.
The next proposition covers the case when the local latent variational factor
$q(x)$ has no additional factorization structure.

\begin{proposition}[Natural gradient of the SVAE objective]
\label{prop:appendix:svae_natural_gradient}
    When there is only one local latent variational factor $q(x)$ (and no
    further factorization structure),
    the natural gradient of
    the SVAE objective Eq.~\eqref{eq:svae_objective} with respect to the
    conjugate global variational parameters $\vareta_\theta$ is
    \begin{align}
        \widetilde\nabla_{\vareta_\theta} \Lsvae(\vareta_\theta, \vareta_\gamma, \phi)
        &=
        \left(\prioreta_\theta + \E_{\opt q(x)} \left[ (t_x(x), 1) \right] - \vareta_\theta \right)
        +
        (\nabla_{\vareta_x} \L(\vareta_\theta, \vareta_\gamma, \vareta^*_x(\vareta_\theta, \phi)), 0)
        \notag
    \end{align}
    where the first term is the SVI natural gradient from
    Corollary~\ref{cor:svi_natural_gradient}, using
    \begin{align}
        \opt q(x) \triangleq \exp \left\{ \langle \vareta^*_x(\vareta_\theta, \phi), \; t_x(x) \rangle - \log Z_x(\vareta^*_x(\vareta_\theta, \phi)) \right\},
    \end{align}
    and where a stochastic estimate of the second term
    is computed as part of the backward pass for the gradient
    $\nabla_\phi \L(\vareta_\theta, \vareta_\gamma,
    \vareta^*_x(\vareta_\theta, \phi))$.
\end{proposition}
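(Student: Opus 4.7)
The overall strategy is to differentiate the definition
$\Lsvae(\vareta_\theta, \vareta_\gamma, \phi) = \L(\vareta_\theta, \vareta_\gamma, \vareta^*_x(\vareta_\theta, \phi))$
via the chain rule and then left-multiply by the inverse Fisher information $(\nabla^2 \log Z_\theta(\vareta_\theta))^{-1}$ to convert to a natural gradient. Because $\vareta^*_x$ is only a local partial optimizer of the \emph{surrogate} objective $\widehat\L$ and not of $\L$ itself, the chain-rule term coming from $\nabla_{\vareta_\theta} \vareta^*_x$ does not vanish by Proposition~\ref{prop:gradients_of_locally_partial_opt}, and correctly identifying that term is the main technical point.

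First, I handle the partial derivative of $\L$ with respect to $\vareta_\theta$ with $\vareta_x = \vareta^*_x$ held fixed. Since the $\log p(y \given x, \gamma)$ and $\log p(\gamma)/q(\gamma)$ terms do not involve $\theta$ and $q(x)$ has a fixed parameter, only the $\E_{q(\theta)q(x)}[\log p(\theta)p(x\given\theta)/q(\theta)]$ contribution remains. The computation then reduces exactly to Proposition~\ref{prop:svi_gradient} with $t_{xy}(x,y)$ replaced by $t_x(x)$, yielding a partial gradient of the form $(\nabla^2 \log Z_\theta(\vareta_\theta))(\prioreta_\theta + \E_{\opt q(x)}[(t_x(x), 1)] - \vareta_\theta)$. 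Multiplying by the inverse Fisher information cancels the Hessian prefactor and produces the first summand of the claim.

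Second, I handle the implicit contribution $\nabla_{\vareta_\theta} \vareta^*_x \cdot \nabla_{\vareta_x} \L$. Applying Lemma~\ref{lem:optimizing_conjugate_mean_field_factor} to the surrogate objective $\widehat\L$, whose potential $\psi(x;y,\phi) = \langle r(y;\phi), t_x(x)\rangle$ is by design conjugate to $p(x\given\theta)$, gives the closed form $\vareta^*_x(\vareta_\theta, \phi) = \E_{q(\theta)}\prioreta_x(\theta) + r(y;\phi)$. Since $t_\theta(\theta) = (\prioreta_x(\theta), -\log Z_x(\prioreta_x(\theta)))$ and $\E_{q(\theta)}t_\theta(\theta) = \nabla \log Z_\theta(\vareta_\theta)$ by Proposition~\ref{prop:gradient_of_log_Z}, the quantity $\E_{q(\theta)}\prioreta_x(\theta)$ is exactly the first $n$ entries of $\nabla \log Z_\theta(\vareta_\theta)$. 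Consequently the Jacobian $\nabla_{\vareta_\theta}\vareta^*_x$ is the $(n+1)\times n$ matrix consisting of the first $n$ columns of $\nabla^2 \log Z_\theta(\vareta_\theta)$.

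Finally I combine the two pieces: left-multiplying that Jacobian by $(\nabla^2 \log Z_\theta(\vareta_\theta))^{-1}$ produces the $(n+1)\times n$ selection matrix that appends a zero to an $\R^n$ vector, so the implicit contribution to the natural gradient becomes $(\nabla_{\vareta_x}\L,\,0)$, matching the second summand. The main obstacle is purely bookkeeping: keeping the dimensionality mismatch between the $(n+1)$-dimensional $\vareta_\theta$ and the $n$-dimensional $\vareta_x$ straight through the chain rule, and relying on regularity/minimality to ensure $\nabla^2 \log Z_\theta(\vareta_\theta)$ is invertible so that the Hessian cancellation that makes this estimator so cheap is rigorously justified.
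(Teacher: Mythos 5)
Your proof is correct, and it reaches the paper's conclusion by a genuinely different route for the key implicit term. The paper computes $\nabla_{\vareta_\theta} \vareta^*_x(\vareta_\theta,\phi)$ via the Implicit Function Theorem (Corollary~\ref{cor:implicit_function_theorem_for_optimization}): it differentiates the surrogate objective twice, uses the first-order stationarity condition to show $\nabla^2_{\vareta_x\vareta_x}\widehat\L = -\nabla^2\log Z_x$ at the optimizer (killing a third-cumulant term), computes the cross-Hessian, and cancels. You instead observe that under the proposition's hypothesis (a single local factor, no further factorization) Lemma~\ref{lem:optimizing_conjugate_mean_field_factor} gives the \emph{closed form} $\vareta^*_x(\vareta_\theta,\phi) = \E_{q(\theta)}\prioreta_x(\theta) + r(y;\phi)$, which by Proposition~\ref{prop:gradient_of_log_Z} is the first $n$ entries of $\nabla\log Z_\theta(\vareta_\theta)$ plus a $\vareta_\theta$-independent term, so that $\nabla_{\vareta_\theta}\vareta^*_x$ is directly the first $n$ columns of $\nabla^2\log Z_\theta(\vareta_\theta)$ --- exactly the matrix the paper obtains after its IFT cancellation. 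Your route is more elementary and transparent here, since it never needs the Hessian of $\widehat\L$ or the regularity argument that makes it invertible at the stationary point; what the paper's IFT machinery buys is that it is the template for the harder case (discussed after the proposition) where $q(x)$ has additional factorization structure, the partial optimizer has no closed form, and the convenient cancellation fails. Your handling of the explicit term (reducing to Proposition~\ref{prop:svi_gradient} with $t_{xy}$ replaced by $t_x$) and of the dimension bookkeeping in the final cancellation both match the paper.
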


\begin{proof}
  First we use the chain rule, analogously to Eq.~\eqref{eq:g_grad}, to write
  the gradient as
  \begin{align}
    \nabla_{\vareta_\theta} \Lsvae(\vareta_\theta, \vareta_\gamma, \phi)
    &=
    \left( \nabla^2 \log Z_\theta(\vareta_\theta) \right) \left( \prioreta_\theta + \E_{\opt q(x)} \left[ \, (t_{xy}(x, y), 1) \, \right] - \vareta_\theta \right)
    \notag
    \\
    &\quad +
    \left(\nabla_{\vareta_\theta} \vareta^*_x(\vareta_\theta, \phi) \right)
    \left( \nabla_{\vareta_x}\L(\vareta_\theta, \vareta_\gamma, \vareta_x^*(\vareta_\theta, \phi)) \right),
    \label{eq:svae_grad_1}
  \end{align}
  where the first term is the same as the SVI gradient derived in
  Proposition~\ref{prop:svi_gradient}.
  In the case of SVI, the second term is zero because $\vareta^*_x$ is
  chosen as a partial optimizer of $\L$, but for the SVAE objective the second
  term is nonzero in general, and the remainder of this proof amounts to
  deriving a simple expression for it.

  We compute the term $\nabla_{\vareta_\theta}
  \vareta^*_x(\vareta_\theta, \phi)$ in
  Eq.~\eqref{eq:svae_grad_1} in terms of the
  gradients of the surrogate objective $\widehat\L$ using the Implicit Function
  Theorem given in
  Corollary~\ref{cor:implicit_function_theorem_for_optimization}, which yields
  \begin{align}
    \nabla_{\vareta_\theta} \vareta^*_x(\vareta_\theta, \phi)
    =
    - \nabla^2_{\vareta_\theta \vareta_x} \widehat\L(\vareta_\theta, \vareta^*_x(\vareta_\theta, \phi), \phi)
    \left( \nabla^2_{\vareta_x \vareta_x} \widehat\L(\vareta_\theta, \vareta^*_x(\vareta_\theta, \phi), \phi) \right)^{-1}.
    \label{eq:svae_implicit_function_theorem}
  \end{align}
  First, we compute the gradient of $\widehat\L$ with respect to
  $\vareta_x$, writing
  \begin{align}
    \nabla_{\vareta_x} \widehat\L(\vareta_\theta, \vareta_x, \phi)
    &=
    \nabla_{\vareta_x}
    \left[
    \E_{q(\theta)q(x)} \! \left[ \log \frac{p(x \given \theta) \exp \{ \psi(x ; y, \phi) \})}{q(x)} \right] \right]
    \\
    &=
    \nabla_{\vareta_x}
    \left[
      \langle \E_{q(\theta)} \prioreta_x(\theta) + r(y ; \phi) - \vareta_x, \, \nabla \log Z_x(\vareta_x) \rangle + \log Z_x(\vareta_x)
    \right]
    \notag
    \\
    &=
    \left( \nabla^2 \log Z_x(\vareta_x) \right)
    \left( \E_{q(\theta)} \prioreta_x(\theta) + r(y ; \phi) - \vareta_x \right),
    \label{eq:svae_proof_grad_1}
  \end{align}

  When there is only one local latent variational factor $q(x)$ (and no further
  factorization structure),  as a consequence of the first-order stationary
  condition $\nabla_{\vareta_x} \widehat \L(\vareta_\theta,
  \vareta_x^*(\vareta_\theta, \phi), \phi) = 0$ and the fact that $\nabla^2
  \log Z_x(\vareta_x)$ is always positive definite for minimal exponential
  families, we have
  \begin{align}
    \E_{q(\theta)} \prioreta_x(\theta) + r(y ; \phi) -
    \vareta^*_x(\vareta_\theta, \phi) = 0,
    \label{eq:svae_proof_stationary_condition}
  \end{align}
  which is useful in simplifying the expressions to follow.

  Continuing with the calculation of the terms in
  Eq.~\eqref{eq:svae_implicit_function_theorem}, we compute
  $\nabla^2_{\vareta_x \vareta_x} \widehat\L$ by differentiating
  the expression in Eq.~\eqref{eq:svae_proof_grad_1} again, writing
  \begin{align}
    \nabla^2_{\vareta_x \vareta_x} \widehat\L(\vareta_\theta, \vareta_x^*(\vareta_\theta, \phi), \phi)
    &=
    - \nabla^2 \log Z_x(\vareta^*_x(\vareta_\theta, \phi))
    \label{eq:svae_natural_gradient_used_stationarity}
    \\
    &\;\;
    + \left( \nabla^3 \log Z_x(\vareta^*_x(\vareta_\theta, \phi)) \right)
    \!
    \left(
    \E_{q(\theta)} \prioreta_x(\theta) + r(y ; \phi) -
    \vareta^*_x(\vareta_\theta, \phi)
    \right)
    \notag
    \\
    &=
    - \nabla^2 \log Z_x(\vareta^*_x(\vareta_\theta, \phi)),
  \end{align}
  where the last line follows from using the first-order stationary condition
  Eq.~\eqref{eq:svae_proof_stationary_condition}.
  Next, we compute the other term $\nabla^2_{\vareta_\theta
  \vareta_x} \widehat\L$ by differentiating
  Eq.~\eqref{eq:svae_proof_grad_1} with respect to $\vareta_\theta$ to
  yield
  \begin{align}
    \nabla^2_{\vareta_\theta \vareta_x} \widehat\L(\vareta_\theta, \vareta_x^*(\vareta_\theta, \phi), \phi)
    &=
    \left( \nabla^2 \log Z_\theta(\vareta_\theta) \right)
    \begin{pmatrix}
    \nabla^2 \log Z_x(\vareta^*_x(\vareta_\theta, \phi))
    \\
    0
    \end{pmatrix},
  \end{align}
  where the latter matrix is $\nabla^2 \log
  Z_x(\vareta_x^*(\vareta_\theta, \phi))$ padded by a row of
  zeros.

  Plugging these expressions back into
  Eq.~\eqref{eq:svae_implicit_function_theorem} and cancelling, we arrive at
  \begin{align}
      \nabla_{\vareta_\theta} \vareta^*_x(\vareta_\theta, \phi)
      =
      \nabla^2 \log Z_\theta(\vareta_\theta)
      \begin{pmatrix}
        I
        \\
        0
      \end{pmatrix},
  \end{align}
  and so we have an expression for the gradient of the SVAE objective as
  \begin{align}
    \nabla_{\vareta_\theta} \Lsvae(\vareta_\theta, \vareta_\gamma, \phi)
    &=
    \left( \nabla^2 \log Z_\theta(\vareta_\theta) \right) \left( \prioreta_\theta + \E_{\opt q(x)} \left[ \, (t_{xy}(x, y), 1) \, \right] - \vareta_\theta \right)
    \notag
    \\
    &\quad +
    \left( \nabla^2 \log Z_\theta(\vareta_\theta) \right)
    \left( \nabla_{\vareta_x} \L(\vareta_\theta, \vareta_\gamma, \vareta_x^*(\vareta_\theta, \phi)), 0 \right).
  \end{align}
  When we compute the natural gradient, the Fisher information matrix factors
  on the left of each term cancel, yielding the result in the proposition.
\end{proof}

The proof of Proposition~\ref{prop:appendix:svae_natural_gradient} uses the
necessary condition for unconstrained local optimality to simplify
the expression in Eq.~\eqref{eq:svae_natural_gradient_used_stationarity}.
This simplification does not necessarily hold if $\vareta_x$ is
constrained; for example, if the factor $q(x)$ has additional factorization
structure, then there are additional (linear) coordinate subspace constraints
on $\vareta_x$.
Note also that when $q(x)$ is a Gaussian family with fixed covariance (that is,
with sufficient statistics $t_x(x) = x$) the same simplification always applies
because third and higher-order cumulants are zero for such families and hence
$\nabla^3 \log Z_x(\vareta_x) = 0$.

More generally, when the local latent variables have additional factorization
structure, as in the Gaussian mixture model (GMM) and switching linear dynamical
system (SLDS) examples, the natural gradient with respect to $\vareta_\theta$ can
be estimated efficiently by writing Eq.~\eqref{eq:svae_grad_1} as
\begin{align}
  \nabla_{\vareta_\theta} \Lsvae
    &=
    \left( \nabla^2 \log Z_\theta(\vareta_\theta) \right) \left( \prioreta_\theta + \E_{\opt q(x)} \left[ \, (t_{xy}(x, y), 1) \, \right] - \vareta_\theta \right)
    \notag
    \\
    &\quad +
    \nabla \left[ \eta_\theta^\prime \mapsto \L(\vareta_\theta, \vareta_\gamma, \vareta_x^*(\eta_\theta^\prime, \phi)) \right],
\end{align}
where we can recover the second term in Eq.~\eqref{eq:svae_grad_1} by using the
chain rule.
We can estimate this second term directly using the reparameterization trick.
Note that to compute the natural gradient estimate in this case, we need to
apply ${(\nabla^2 \log Z_\theta(\eta_\theta))}^{-1}$ to this term because the
convenient cancellation from
Proposition~\ref{prop:appendix:svae_natural_gradient} does not apply.
When $\vareta_\theta$ is of small dimension compared to $\vareta_\gamma$,
$\phi$, and even $\vareta_x$, this additional computational cost is not large.

\subsection{Estimating the gradients \texorpdfstring{$\nabla_{\phi} \Lsvae$ and $\nabla_{\vareta_\gamma} \Lsvae$}{with respect to the other parameters}}
\label{sec:svae_flat_gradients}

To compute an unbiased stochastic estimate of the gradients $\nabla_\phi
\Lsvae(\vareta_\theta, \vareta_\gamma, \phi)$ and $\nabla_{\vareta_\gamma} \Lsvae(\vareta_\theta, \vareta_\gamma, \phi)$ we use the
reparameterization trick \citep{kingma2013autoencoding}, which is simply to
differentiate a stochastic estimate of the objective
$\Lsvae(\vareta_\theta, \vareta_\gamma, \phi)$ as a function
of $\phi$ and $\vareta_\gamma$.
To isolate the terms that require this sample-based approximation from those
that can be computed directly, we rewrite the objective as
\begin{align}
    \Lsvae(\vareta_\theta, \vareta_\gamma, \phi) = \E_{q(\gamma) \opt q(x)} \log p(y \given x, \gamma) - \KL(q(\theta) q(\gamma) \opt q(x) \, \| \, p(\theta, \gamma, x))
    \label{eq:appendix:svae_objective_with_kl}
\end{align}
where, as before,
\begin{align}
    \opt q(x) \triangleq \exp \left\{ \langle \vareta^*_x(\vareta_\theta, \phi), \; t_x(x) \rangle - \log Z_x(\vareta^*_x(\vareta_\theta, \phi)) \right\}
\end{align}
and so the dependence of the expression in
Eq.~\eqref{eq:appendix:svae_objective_with_kl} on $\phi$ is through
$\vareta^*_x(\vareta_\theta, \phi)$.
Only the first term in Eq.~\eqref{eq:appendix:svae_objective_with_kl} needs to be
estimated with the reparameterization trick.

We summarize this procedure in the following proposition.

\begin{proposition}[Estimating $\nabla_\phi \Lsvae$ and $\nabla_{\vareta_\gamma} \Lsvae$]
\label{prop:svae_flat_gradients}
    Let $\hat\gamma(\vareta_\gamma) \sim q(\gamma)$ and $\hat
    x(\phi) \sim \opt q(x)$ be samples of $q(\gamma)$ and $\opt q(x)$,
    respectively.
    Unbiased estimates of the gradients $\nabla_\phi \Lsvae(\vareta_\theta,
    \vareta_\gamma, \phi)$ and $\nabla_{\vareta_\gamma}
    \Lsvae(\vareta_\theta, \vareta_\gamma, \phi)$ are given by
    \begin{align}
        \nabla_\phi \Lsvae(\vareta_\theta, \vareta_\gamma, \phi)
        &\approx
        \nabla_\phi \log p(y \given \hat x(\phi), \hat \gamma(\vareta_\gamma))
        - \nabla_\phi \KL(q(\theta) \opt q(x) \, \| \, p(\theta, x)),
        \notag
        \\
        \nabla_{\vareta_\gamma} \Lsvae(\vareta_\theta, \vareta_\gamma, \phi)
        &\approx
        \nabla_{\vareta_\gamma} \log p(y \given \hat x(\phi), \hat \gamma(\vareta_\gamma))
        - \nabla_{\vareta_\gamma} \KL(q(\gamma)\, \| \, p(\gamma)).
    \end{align}
    Both of these gradients can be computed by automatically differentiating
    the Monte Carlo estimate of $\Lsvae$ given by
    \begin{align}
        \Lsvae(\vareta_\theta, \vareta_\gamma, \phi)
        \approx
        \log p(y \given \hat x(\phi), \hat\gamma(\vareta_\gamma)) - \KL(q(\theta) q(\gamma) \opt q(x) \, \| \, p(\theta, \gamma, x))
    \end{align}
    with respect to $\vareta_\gamma$ and $\phi$, respectively.
\end{proposition}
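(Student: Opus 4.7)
The plan is to start from the rewritten form of the SVAE objective
\begin{equation}
    \Lsvae(\vareta_\theta, \vareta_\gamma, \phi) = \E_{q(\gamma) \opt q(x)} \log p(y \given x, \gamma) - \KL(q(\theta) q(\gamma) \opt q(x) \, \| \, p(\theta, \gamma, x))
\end{equation}
given in Eq.~\eqref{eq:appendix:svae_objective_with_kl} and to handle its two summands separately. Because the joint $p(\theta, \gamma, x) = p(\theta) p(\gamma) p(x \given \theta)$ and the variational family $q(\theta) q(\gamma) \opt q(x)$ both make $\gamma$ independent of $(\theta, x)$, the KL decomposes additively as $\KL(q(\gamma) \| p(\gamma)) + \KL(q(\theta) \opt q(x) \| p(\theta, x))$. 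Each summand is a KL between members of the same exponential family, so by Proposition~\ref{prop:exp_fam_kl_divergence} it has a closed form whose gradients in $\vareta_\gamma$ and in $\phi$ (the latter entering only through $\vareta_x^*(\vareta_\theta, \phi)$) can be taken analytically, and are therefore produced correctly by reverse-mode automatic differentiation.

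For the expectation term I would apply the reparameterization trick of \citet{kingma2013autoencoding}. Since $q(\gamma)$ and $\opt q(x)$ are exponential-family members admitting differentiable samplers, I write $\hat\gamma(\vareta_\gamma) = g_\gamma(\epsilon_\gamma; \vareta_\gamma)$ and $\hat x(\phi) = g_x(\epsilon_x; \vareta_x^*(\vareta_\theta, \phi))$ with $\epsilon_\gamma, \epsilon_x$ drawn from fixed noise distributions that do not depend on the variational parameters. Under standard regularity, gradient and expectation commute and
\begin{equation}
    \nabla_{(\vareta_\gamma, \phi)} \E_{q(\gamma) \opt q(x)} \log p(y \given x, \gamma) = \E_{\epsilon_\gamma, \epsilon_x} \nabla_{(\vareta_\gamma, \phi)} \log p(y \given \hat x(\phi), \hat \gamma(\vareta_\gamma)).
\end{equation}
A single joint draw of $(\epsilon_\gamma, \epsilon_x)$ then yields the unbiased single-sample estimator claimed in the proposition, with the chain rule through $\vareta_x^*(\vareta_\theta, \phi)$ handled transparently by autodiff.

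Summing the reparameterized expectation estimator with the analytically differentiated KL contributions recovers the two gradient formulas in the statement. Because every piece is built from the same one-sample Monte Carlo approximation of $\Lsvae$, both gradients are obtained at once by autodifferentiating the scalar estimator displayed at the end of the proposition with respect to $\vareta_\gamma$ and $\phi$, respectively. The main obstacle is justifying the gradient--expectation interchange in the reparameterized form; this reduces to verifying the usual dominated-convergence hypotheses (continuity of $\log p(y \given x, \gamma)$ in $(x, \gamma)$, continuous differentiability of $g_\gamma$, $g_x$, and $\vareta_x^*$, and an integrable envelope), which hold under the smoothness conditions already imposed on the observation model, on the recognition model $r(y; \phi)$, and on the local optimizer map implicit in the definition of $\vareta_x^*(\vareta_\theta, \phi)$.
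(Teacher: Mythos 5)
Your proposal matches the paper's own argument: the paper likewise starts from the rewritten objective in Eq.~\eqref{eq:appendix:svae_objective_with_kl}, observes that the KL terms are tractable in closed form by conjugate exponential-family structure (with the $\phi$-dependence entering only through $\vareta_x^*(\vareta_\theta,\phi)$), and applies the reparameterization trick only to the $\E_{q(\gamma)\opt q(x)}\log p(y\given x,\gamma)$ term, differentiating the resulting one-sample Monte Carlo estimate by automatic differentiation. Your added remarks on the additive decomposition of the KL and the dominated-convergence justification of the gradient--expectation interchange are consistent elaborations of the same route.
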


\subsection{Partially optimizing \texorpdfstring{$\widehat \L$}{the surrogate objective}
using conjugacy structure}
\label{sec:svae_local_factor_optimization}

In Section~\ref{appendix:svae_objective} we defined the SVAE objective in terms of a
function $\vareta_x^*(\vareta_\theta, \phi)$, which was itself
implicitly defined in terms of first-order stationary conditions for an
auxiliary objective $\widehat\L(\vareta_\theta, \vareta_x, \phi)$.
Here we show how $\widehat\L$ admits efficient local partial optimization in
the same way as the conditionally conjugate model of Section~\ref{sec:conditionally_conjugate_local_optimization}.

In this section we consider additional structure in the local
latent variables.
Specifically, as in
Section~\ref{sec:conditionally_conjugate_local_optimization}, we introduce
to the notation another set of local latent variables $z$ in addition to the
local latent variables $x$.
However, unlike Section~\ref{sec:conditionally_conjugate_local_optimization}, we
still consider general likelihood families $p(y \given x, \gamma)$.

Let $p(z, x \given \theta)$ be an exponential family and $p(\theta)$ be its
corresponding natural exponential family conjugate prior, writing
\begin{align}
    p(\theta) &= \exp \left\{ \langle \prioreta_\theta, \, t_\theta(\theta) \rangle - \log Z_\theta(\prioreta_\theta) \right\},
  \label{eq:appendix:svae_densities_start_2}
    \\
    p(z, x \given \theta) &= \exp \left\{ \langle \prioreta_{zx}(\theta), \, t_{zx}(z, x) \rangle - \log Z_{zx}(\prioreta_{zx}(\theta)) \right\}
    \\
    &= \exp \left\{\langle t_\theta(\theta), \, (t_{zx}(z, x), 1) \rangle \right\}
\end{align}
where we have used $t_\theta(\theta) = \left( \prioreta_{zx}(\theta), -\log Z_{zx}(\prioreta_{zx}(\theta)) \right)$ in Eq.~\eqref{eq:used_conj2}.
Additionally,
let $t_{zx}(z,x)$ be a multilinear polynomial in the statistics $t_z(z)$ and
$t_x(x)$, and
let $p(z \given \theta)$ and $p(x \given z, \theta)$ be
a conjugate pair of exponential families, writing
\begin{align}
    p(z \given \theta) &= \exp \left\{ \langle \prioreta_z(\theta), \, t_z(z) \rangle - \log Z_z(\prioreta_z(\theta)) \right\},
    \\
    p(x \given z, \theta) &= \exp \left\{ \langle \prioreta_x(z, \theta), \, t_x(x) \rangle - \log Z_x(\prioreta_x(z, \theta)) \right\}
    \\
    &= \exp \left\{ \langle t_z(z), \prioreta_x(\theta)^\T (t_x(x), 1) \rangle \right\}.
\end{align}
Let $p(y \given x, \gamma)$ be a general family of densities (not necessarily an
exponential family) and let $p(\gamma)$ be an exponential family prior on
its parameters of the form
\begin{align}
    p(\gamma) = \exp \left\{ \langle \prioreta_\gamma, \,
    t_\gamma(\gamma) \rangle - \log Z_\gamma(\prioreta_\gamma) \right\}.
\end{align}
The corresponding variational factors are
\begin{align}
    q(\theta)
    &=
    \exp \left\{ \langle \vareta_\theta, \, t_\theta(\theta) \rangle - \log Z_\theta(\vareta_\theta) \right\},
    &
    q(\gamma)
    &=
    \exp \left\{ \langle \vareta_\gamma, \, t_\gamma(\gamma) \rangle - \log Z_\gamma(\vareta_\gamma) \right\},
    \notag
    \\
    q(z)
    &=
    \exp \left\{ \langle \vareta_z, \, t_z(z) \rangle - \log Z_z(\vareta_z) \right\},
    &
    q(x)
    &=
    \exp \left\{ \langle \vareta_x, \, t_x(x) \rangle - \log Z_x(\vareta_x) \right\}.
    \notag
\end{align}

As in Section~\ref{appendix:svae_objective}, we construct the surrogate objective
$\widehat\L$ to allow us to exploit exponential family and conjugacy structure.
In particular, we construct $\widehat\L$ to resemble the mean field objective, namely
\begin{align}
    \L(\vareta_\theta, \vareta_\gamma, \vareta_z, \vareta_x)
    \triangleq \E_{q(\theta)q(\gamma)q(z)q(x)} \! \left[ \log \frac{p(\theta)p(\gamma)p(z \given \theta)p(x \given z, \theta) p(y \given x, \gamma)}{q(\theta)q(\gamma)q(z)q(x)} \right],
\end{align}
but in $\widehat\L$ we replace the $\log p(y \given x, \gamma)$ likelihood
term, which may be a general family of densities without much structure,
with a more tractable approximation,
\begin{align}
    \widehat \L(\vareta_\theta, \vareta_z, \vareta_x, \phi)
    &\triangleq \E_{q(\theta)q(z)q(x)} \! \left[ \log \frac{p(\theta)p(z \given \theta)p(x \given z, \theta) \exp \{ \psi(x ; y, \phi) \} }{q(\theta) q(z) q(x)} \right],
\end{align}
where $\psi(x ; y, \phi)$ is a function on $x$ that resembles a conjugate
likelihood for $p(x \given z, \theta)$,
\begin{align}
    \psi(x ; y, \phi) \triangleq \langle r(y ; \phi), \; t_x(x) \rangle,
    \qquad \phi \in \R^m.
\end{align}
We then define $\vareta^*_z(\vareta_\theta, \phi)$ and
$\vareta^*_x(\vareta_\theta, \phi)$ to be local partial
optimizers of $\widehat\L$ given fixed values of the other parameters
$\vareta_\theta$ and $\phi$, and in particular they
satisfy the first-order necessary optimality conditions
\begin{align}
  \nabla_{\vareta_z} \widehat\L(\vareta_\theta,
  \vareta_z^*(\vareta_\theta, \phi),
  \vareta_x^*(\vareta_\theta, \phi), \phi) &= 0,
  &
  \nabla_{\vareta_x} \widehat\L(\vareta_\theta,
  \vareta_z^*(\vareta_\theta, \phi),
  \vareta_x^*(\vareta_\theta, \phi), \phi) &= 0.
\end{align}
The SVAE objective is then
\begin{align}
  \Lsvae(\vareta_\theta, \vareta_\gamma, \phi)
  &\triangleq
  \L(\vareta_\theta, \vareta_\gamma, \vareta_z^*(\vareta_\theta, \phi), \vareta_x^*(\vareta_\theta, \phi)).
  \label{eq:svae_objective_2}
\end{align}

The structure of the surrogate objective $\widehat\L$ is chosen so that it
resembles the mean field variational inference objective for the conditionally
conjugate model of
Section~\ref{sec:conditionally_conjugate_local_optimization}, and as a result
we can use the same block coordinate ascent algorithm to efficiently find
partial optimzers $\vareta_z^*(\vareta_\theta, \phi)$ and
$\vareta_x^*(\vareta_\theta, \phi)$.

\begin{proposition}[Computing $\vareta_z^*(\vareta_\theta, \phi)$ and $\vareta_x^*(\vareta_\theta, \phi)$]
  Let the densities $p(\theta, \gamma, z, x, y)$ and
  $q(\theta)q(\gamma)q(z)q(x)$ and the objectives $\L$, $\widehat\L$, and $\Lsvae$
  be as in Eqs.~\eqref{eq:appendix:svae_densities_start_2}-\eqref{eq:svae_objective_2}.
  The partial optimizers $\vareta^*_z$ and $\vareta^*_x$, defined
  by
  \begin{align}
      \vareta^*_z
      &\triangleq
      \argmax_{\vareta_z} \widehat\L(\vareta_\theta,
      \vareta_z, \vareta_x, \phi),
      &
      \vareta^*_x
      &\triangleq
      \argmax_{\vareta_x} \widehat\L(\vareta_\theta,
      \vareta_z, \vareta_x, \phi)
  \end{align}
  with the other arguments fixed, are are given by
  \begin{align}
    \vareta^*_z
    &=
    \E_{q(\theta)} \prioreta_z(\theta) + \E_{q(\theta) q(x)} \prioreta_x(\theta)^\T (t_x(x), 1),
    &
    \vareta^*_x
    &=
    \E_{q(\theta)q(z)} \prioreta_x(z, \theta) + r(y ; \phi),
    \label{eq:svae_local_update}
  \end{align}
  and by alternating the expressions in Eq.~\eqref{eq:svae_local_update} as updates we can compute
  $\vareta_z^*(\vareta_\theta, \phi)$ and
  $\vareta_x^*(\vareta_\theta, \phi)$ as local partial optimizers of
  $\widehat\L$.
\end{proposition}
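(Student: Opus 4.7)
The plan is to reduce this proposition to a direct application of Lemma~\ref{lem:optimizing_conjugate_mean_field_factor} (optimizing a mean field factor), by observing that the surrogate objective $\widehat\L$ has exactly the same conjugate mean field structure that was exploited in Proposition~\ref{prop:local_block_coordinate_ascent}. The key observation is that the term $\exp\{\psi(x;y,\phi)\} = \exp\{\langle r(y;\phi),\, t_x(x)\rangle\}$ was deliberately chosen to mimic a conjugate likelihood for the exponential family $p(x\given z,\theta)$: it is a function of $x$ whose log is linear in $t_x(x)$, so it plays the same algebraic role in $\widehat\L$ that a conjugate likelihood $p(y\given x,\theta)$ plays in the fully conjugate model of Section~\ref{sec:conditionally_conjugate_local_optimization}. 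Hence the surrogate problem is fully conjugate in $(z,x)$, even though the true likelihood $p(y\given x,\gamma)$ is not.

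First I would derive $\vareta^*_x$. Holding $q(\theta)$ and $q(z)$ fixed, Lemma~\ref{lem:optimizing_conjugate_mean_field_factor} gives the unrestricted partial optimizer
\begin{equation}
\opt q(x) \propto \exp\!\left\{ \E_{q(\theta)q(z)}\log p(x\given z,\theta) + \psi(x;y,\phi)\right\}.
\end{equation}
Substituting the exponential family form $\log p(x\given z,\theta) = \langle \prioreta_x(z,\theta),\, t_x(x)\rangle - \log Z_x(\prioreta_x(z,\theta))$ together with the definition of $\psi$, and collecting the coefficients of $t_x(x)$, shows that $\opt q(x)$ lies in the same exponential family as $p(x\given z,\theta)$ with natural parameter $\vareta^*_x = \E_{q(\theta)q(z)}\prioreta_x(z,\theta) + r(y;\phi)$, matching the claimed formula.

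Next I would derive $\vareta^*_z$ in the same way. Holding $q(\theta)$ and $q(x)$ fixed and applying Lemma~\ref{lem:optimizing_conjugate_mean_field_factor} again yields $\opt q(z) \propto \exp\{\E_{q(\theta)q(x)}\log[p(z\given\theta)p(x\given z,\theta)]\}$. Using the multilinearity of $t_{zx}$ together with the given factorization $p(x\given z,\theta) = \exp\{\langle t_z(z),\, \prioreta_x(\theta)^\T(t_x(x),1)\rangle\}$, both contributions are linear in $t_z(z)$, so $\opt q(z)$ is in the same exponential family as $p(z\given\theta)$ with natural parameter $\vareta^*_z = \E_{q(\theta)}\prioreta_z(\theta) + \E_{q(\theta)q(x)}\prioreta_x(\theta)^\T(t_x(x),1)$, as claimed. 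Because each block update is the unrestricted partial optimizer, $\widehat\L$ is non-decreasing under alternation; since $\widehat\L$ is bounded above (the KL-divergence-based argument that opens Section~\ref{sec:svi} applies verbatim after replacing the intractable likelihood by the tractable surrogate potential), the iterates converge and any limit point satisfies the first-order stationary conditions $\nabla_{\vareta_z}\widehat\L = 0$ and $\nabla_{\vareta_x}\widehat\L = 0$, i.e.\ is a local partial optimizer in each block.

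The only mildly subtle step is checking that the conjugacy hypotheses of Lemma~\ref{lem:optimizing_conjugate_mean_field_factor} really are satisfied with $\exp\{\psi(x;y,\phi)\}$ standing in for a true likelihood; this comes down to the structural assumption that $t_{zx}$ is multilinear in $t_z$ and $t_x$, so that after integrating out $\theta$ via $\E_{q(\theta)}$ the coefficients of $t_z(z)$ and $t_x(x)$ decouple cleanly and both blockwise subproblems fall under the lemma. Everything else is bookkeeping with the exponential family forms introduced at the start of the section.
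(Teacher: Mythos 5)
Your proposal is correct and follows the same route as the paper: the paper's proof likewise reduces both block updates to Lemma~\ref{lem:optimizing_conjugate_mean_field_factor}, with the surrogate potential $\exp\{\psi(x;y,\phi)\}$ acting as a conjugate likelihood so that the stationary conditions $\nabla_{\vareta_z}\widehat\L=0$ and $\nabla_{\vareta_x}\widehat\L=0$ yield the two expressions in Eq.~\eqref{eq:svae_local_update}. You simply carry out the coefficient-collection and convergence bookkeeping that the paper leaves implicit.
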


\begin{proof}
  These updates follow immediately from Lemma~\ref{lem:optimizing_conjugate_mean_field_factor}.
  Note in particular that the stationary conditions $\nabla_{\vareta_z}
  \widehat\L = 0$ and $\nabla_{\vareta_x} \widehat\L = 0$ yield
  the each expression in Eq.~\eqref{eq:svae_local_update}, respectively.
\end{proof}

The other properties developed in
Propositions~\ref{prop:appendix:svae_lower_bound},~\ref{prop:appendix:svae_natural_gradient},
and~\ref{prop:svae_flat_gradients} also hold true for this model because it is
a special case in which we have separated out the local variables, denoted $x$
in earlier sections, into two groups, denoted $z$ and $x$ here, to match the
exponential family structure in $p(z \given \theta)$ and $p(x \given z,
\theta)$, and performed unconstrained optimization in each of the variational
parameters.
However, the expression for the natural gradient is slightly
simpler for this model than the corresponding version of
Proposition~\ref{prop:appendix:svae_natural_gradient}.

\section{Experiment details and expanded figures}

For the synthetic 1D dot video data,
we trained an LDS SVAE on 80 random image sequences each of length 50, using
one sequence per update, and show the model's future predictions given a prefix
of a longer sequence.
We used MLP image and recognition models each with one hidden layer of 50 units
and a latent state space of dimension 8.

\begin{figure}[p]
  \centering
  \begin{subfigure}{0.95\textwidth}
    \centering
    \includegraphics[width=\textwidth, clip, trim=0cm 0.85cm 0cm 1cm]{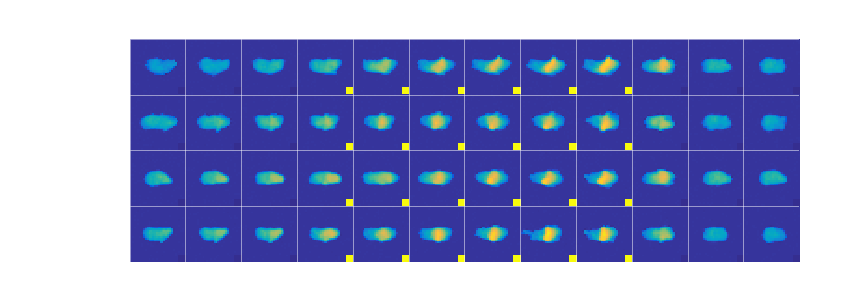}
    \caption{Beginning a rear}

    \vspace{0.3cm}
  \end{subfigure}
  \\
  \begin{subfigure}{0.95\textwidth}
    \centering
    \includegraphics[width=\textwidth]{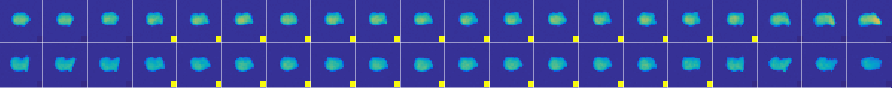}
    \caption{Grooming}

    \vspace{0.3cm}
  \end{subfigure}
  \\
  \begin{subfigure}{0.95\textwidth}
    \centering
    \includegraphics[width=\textwidth, clip, trim=0cm 0.85cm 0cm 1.5cm]{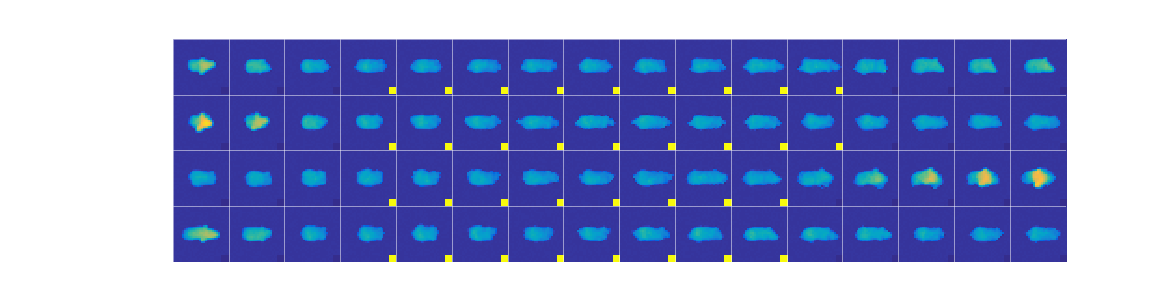}
    \caption{Extension into running}

    \vspace{0.3cm}
  \end{subfigure}
  \\
  \begin{subfigure}{0.95\textwidth}
    \centering
    \includegraphics[width=\textwidth]{figures/26_fall_rear.png}
    \caption{Fall from rear}
  \end{subfigure}
  \caption{Examples of behavior states inferred from depth video. For each
    state, four example frame sequences are shown, including frames during
    which the given state was most probable according to the variational
    distribution on the hidden state sequence. Each frame sequence is padded on
    both sides, with a square in the lower-right of a frame depicting that the
    state was active in that frame. The frame sequences are temporally subsampled
    to reduce their length, showing one of every four video frames. Examples were
    chosen to have durations close to the median duration for that state.}
  \label{fig:slds_svae_syllables_2}
\end{figure}

\end{document}